\title{Compositional Risk Minimization}
\author[1,2\star]{Divyat Mahajan}
\author[1]{Mohammad Pezeshki}
\author[1]{Charles Arnal}
\author[2]{Ioannis Mitliagkas}
\author[1,\dagger]{Kartik Ahuja}
\author[1,2\star,\dagger]{Pascal Vincent}
\affiliation[1]{Meta FAIR}
\affiliation[2]{Mila, Université de Montréal, DIRO}
\def\eqref#1{equation~\ref{#1}}
\def\1{\bm{1}}
\def\S{{\mathcal{Z}^{\times}}}
\DeclareMathAlphabet{\mathsfit}{\encodingdefault}{\sfdefault}{m}{sl}
\SetMathAlphabet{\mathsfit}{bold}{\encodingdefault}{\sfdefault}{bx}{n}
\def\gD{{\mathcal{D}}}
\def\sR{{\mathbb{R}}}
\newcommand{\E}{\mathbb{E}}
\newcommand{\R}{\mathbb{R}}
\DeclareMathOperator*{\argmin}{arg\,min}
\newtheorem{theorem}{Theorem}
\newtheorem{lemma}{Lemma}
\newtheorem{definition}{Definition}
\definecolor{papercolor}{HTML}{0668E1}
\definecolor{codegreen}{rgb}{0,0.6,0}
\definecolor{codegray}{rgb}{0.5,0.5,0.5}
\definecolor{codepurple}{rgb}{0.58,0,0.82}
\definecolor{backcolour}{rgb}{0.95,0.95,0.92}
\lstdefinestyle{mystyle}{
    backgroundcolor=\color{backcolour},   
    commentstyle=\color{codegreen},
    keywordstyle=\color{magenta},
    numberstyle=\tiny\color{codegray},
    stringstyle=\color{codepurple},
    basicstyle=\ttfamily\footnotesize,
    breakatwhitespace=false,         
    breaklines=true,                 
    captionpos=b,                    
    keepspaces=true,                 
    numbers=left,                    
    numbersep=5pt,                  
    showspaces=false,                
    showstringspaces=false,
    showtabs=false,                  
    tabsize=2
}
\definecolor{mygray}{HTML}{F3F0EE}
\definecolor{myblue}{HTML}{A9DAE9}
\definecolor{myyellow}{HTML}{F4DA92}
\abstract{
Compositional generalization is a crucial step towards developing data-efficient intelligent machines that generalize in human-like ways. 
In this work, we tackle a challenging form of distribution shift, termed \emph{compositional shift}, where some attribute combinations are completely absent at training but present in the test distribution. This shift tests the model's ability to generalize compositionally to novel attribute combinations in discriminative tasks. We model the data with flexible additive energy distributions, where each energy term represents an attribute, and derive a simple alternative to empirical risk minimization termed \emph{compositional risk minimization (CRM)}. We first train an additive energy classifier to predict the multiple attributes and then adjust this classifier to tackle compositional shifts. We provide an extensive theoretical analysis of CRM, where we show that our proposal extrapolates to special affine hulls of seen attribute combinations. Empirical evaluations on benchmark datasets confirms the improved robustness of CRM compared to other methods from the literature designed to tackle various forms of subpopulation shifts.
}
\begin{document}

\maketitle
\addtocontents{toc}{\protect\setcounter{tocdepth}{-1}}

\section{Introduction}

The ability to make sense of the rich complexity of the sensory world by decomposing it into sets of elementary factors and recomposing these factors in new ways is a hallmark of human intelligence. This capability is typically grouped under the umbrella term compositionality \citep{fodor1988connectionism, montague1970pragmatics}. Compositionality underlies both semantic understanding and the imaginative prowess of humans, enabling robust generalization and extrapolation. For instance, human language allows us to imagine situations we have never seen before, such as ``a blue elephant riding a bicycle on the Moon.'' While most works on compositionality have focused on its generative aspect, i.e., imagination, as seen in diffusion models \citep{yang2023diffusion}, compositionality is equally important in discriminative tasks. In these tasks, the goal is to make predictions in novel circumstances that are best described as combinations of circumstances seen before. In this work, we dive into this less-explored realm of compositionality in discriminative tasks.

We work with multi-attribute data, where each input (e.g., an image) is associated with multiple categorical attributes, and the task is to predict an attribute or multiple attributes. During training, we observe inputs from only a subset of all possible combinations of individual attributes, and during test we will see novel combinations of attributes never seen at training. Following \citet{liu2023causal}, we refer to this distribution shift as \emph{compositional shift}. Towards the goal of tackling these compositional shifts,  we develop an adaptation of naive discriminative Empirical Risk Minimization (ERM) tailored for multi-attribute data under compositional shifts. We term our approach Compositional Risk Minimization (CRM). The foundations of CRM are built on additive energy distributions that are studied in generative compositionality \citep{liu2022compositional}, where each energy term represents one attribute.  In CRM, we first train an additive energy classifier to predict all the attributes jointly, and then we adjust this classifier for compositional shifts.

\begin{figure*}[t]
     \centering
     \includegraphics[width=0.75\linewidth]{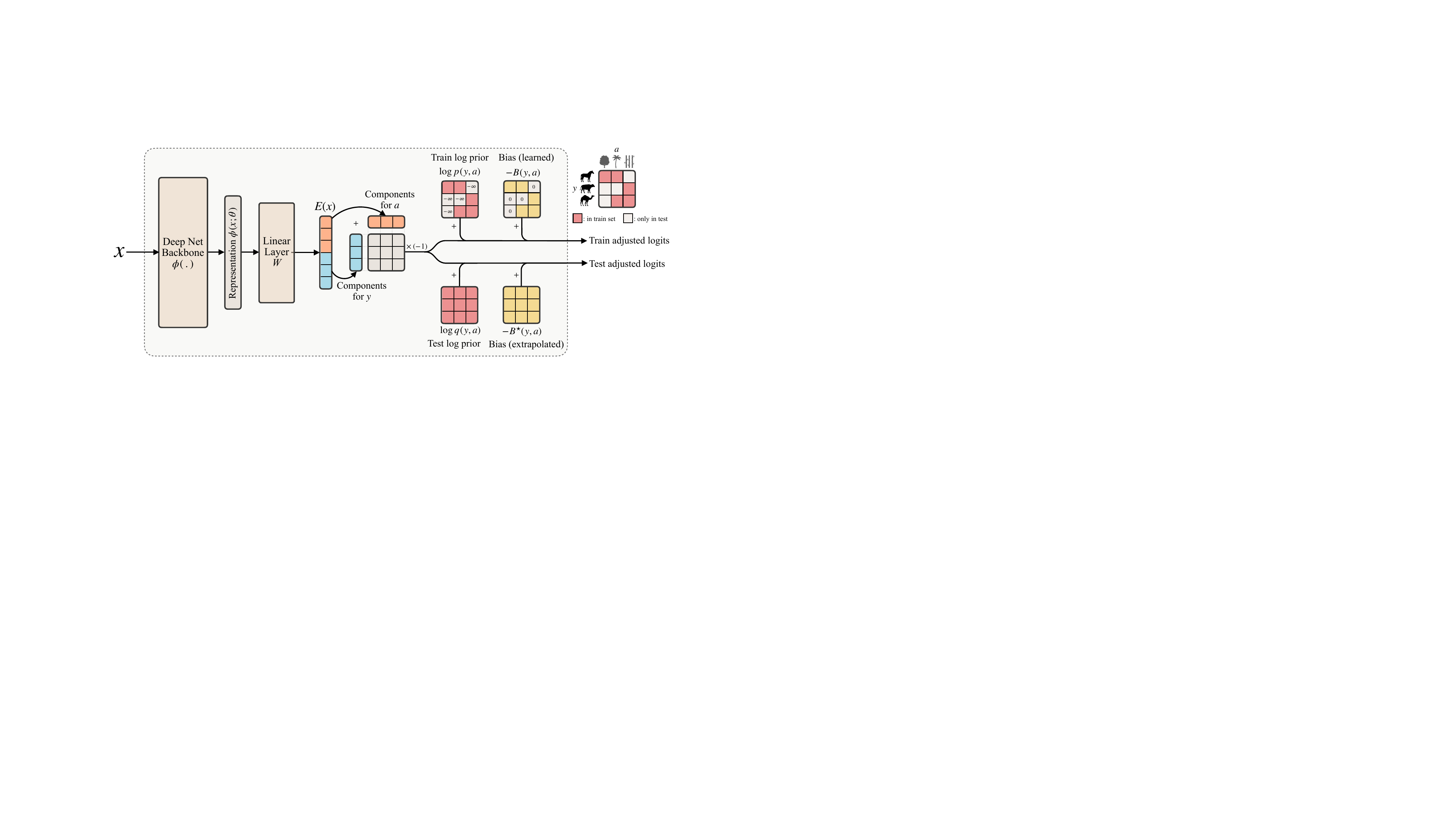}
     \caption{The additive energy classifier trained in CRM computes the logits for each group $z= (y,a)$ by adding the energy components of each attribute via boradcasting.  For the train logits,  we add the log of the prior probabilities and a learned bias $B(y,a)$ for the groups present in train data. At test time, the log prior term is replaced with the log of the test prior (if available, otherwise assumed to be uniform), and the biases for novel test groups, $B^\star(y, a)$, are extrapolated using Eq.\ref{eqn: qz}. Finally, we obtain $p(y, a | x)$ by applying softmax function on the adjusted logits. This adaptation from train to test is possible because of the additive energy distribution $p(x | y, a)$, which allows the model to factorize the distribution into distinct components associated with each attribute.}
     \label{fig:crm-archi}
 \end{figure*}
 
Our main contributions are as follows:
\begin{itemize}
\item \emph{Theory of discriminative compositional shifts:} For the family of additive energy distributions, we prove that additive energy classifiers generalize compositionally to novel combinations of attributes represented by a special mathematical object, which we call \emph{discrete affine hull}. Our characterization of extrapolation is sharp, i.e., we show that it is not possible to generalize beyond  \emph{discrete affine hull}. We show that the volume of \emph{discrete affine hull} grows very fast in the number of training attribute combinations thus generalizing to many attribute combinations. The proof techniques developed in this work are very different from existing works on distribution shifts and hence may be of independent interest. 

\item \emph{A practical method:} CRM is a simple algorithm for training classifiers, which first trains an additive energy classifier and then adjusts the trained classifier for tackling compositional shifts.  We empirically validate the superiority of CRM to
prior approaches
for addressing subpopulation shifts. 
\end{itemize}

\section{Problem Setting}
\subsection{Generalizing under Compositional Shifts}

In compositional generalization, we aim to build a classifier that performs well in new contexts that are best described as a novel combination of seen contexts.   Consider an input $x$ (e.g., image), this input belongs to a group that is characterized by an attribute vector $z=(z_1, \ldots, z_m)$ (e.g., class label, background label), where
$z_i$ corresponds to the value of $i^{th}$ attribute. There are $m$ attributes and each attribute $z_i$ can take $d$ possible values. Define $\mathcal{Z}$ as the set of all the possible $d^m$ one-hot concatenated vectors, i.e, $z \in \mathcal{Z}$ with $\mathcal{Z}=\{1, \ldots, d\}^m$.

We use the Waterbirds dataset as the running example \citep{sagawa2019distributionally}. Each image $x$ has two labeled attributes summarized in the attribute vector $z = (y,a)$, where $y$ tells the class of the bird -- Waterbird ($\text{WB}$) or Landbird ($\text{LB}$), and $a$ tells the type of the background -- Water ($\text{W}$) or Land ($\text{L}$). Our training distribution consists of data from three groups  -- ($\text{WB},\text{W}$), ($\text{LB},\text{L}$), ($\text{LB},\text{W}$). Our test distribution also consists of points from the remaining group $(\text{WB},\text{L})$ as well. We seek to build class predictors that perform well on such test distributions that contain new groups. This problem setting differs from the commonly studied problem in \citet{sagawa2019distributionally, kirichenko2022last}, where we observe data from all the groups but some groups present much more data than the others.

Formally, let $p(x,z)=p(z)p(x|z)$ denote the train distribution, and $q(x,z)=q(z)q(x|z)$ the test distribution. We denote the support of each attribute component $z_i$ under training distribution  as $\mathcal{Z}_{i}^{\mathsf{train}}$ and the support of $z$ under training distribution as $\mathcal{Z}^{\mathsf{train}}$.
The corresponding supports for the test distribution are denoted as $\mathcal{Z}_i^{\mathsf{test}}$ and $\mathcal{Z}^{\mathsf{test}}$. 
We define the Cartesian product of marginal support under training as  $\S \coloneq \mathcal{Z}_1^{\mathsf{train}} \times \mathcal{Z}_{2}^{\mathsf{train}} \times \cdots \mathcal{Z}_{m}^{\mathsf{train}}$. In this work, we study \emph{compositional shifts} from a training distribution $p$ to a test distribution $q$, characterized as follows:

\begin{enumerate}
\item $p(x|z)=q(x|z), \forall z \in \S.$ 
\item $\mathcal{Z}^{\mathsf{test}} \not\subseteq \mathcal{Z}^{\mathsf{train}}$ but
$\mathcal{Z}^{\mathsf{test}} \subseteq \S$.
\end{enumerate}

The first point states that the conditional density of inputs conditioned on attributes remains invariant from train to test, which can be understood as the data generation mechanism from attributes to the inputs remains invariant. What changes between train and test is thus due to only shifting prior probabilities of attributes from $p(z)$ to $q(z)$. The second point specifies how these differ in their support: 
at test we observe novel combinations of individual attributes but not a completely new individual attribute.   The task of compositional generalization is then to build classifiers that are robust to such compositional distribution shifts. Also, we remark that the above notion should remind the reader of the notion of Cartesian Product Extrapolation (CPE) from \citet{lachapelle2024additive}.  Specifically, if a model succeeds on test distributions $q(z)$ with support equal to the full Cartesian product ($\mathcal{Z}^{\mathsf{test}} = \S$), then it is said to achieve CPE.

\subsection{Additive Energy Distribution}
 
We assume that $p(x|z)$ is of the form of an \emph{additive energy distribution} (AED):
\begin{equation}
  p(x|z) = \frac{1}{\mathbb{Z}(z)}\exp \Big(- \sum_{i=1}^m E_i(x,z_i)\Big)
            \label{eqn:additive1}
         \end{equation}   

where  $\mathbb{Z}(z) \coloneq \bigintsss \exp \Big(- \sum_{i=1}^m E_i(x,z_i)\Big) dx$ is the partition function that ensures that the probability density $p(x|z)$ integrates to one. Also, the support of $p(x|z)$ is assumed to be $\mathbb{R}^{n}$, $\forall z \in \S$.

We thus have one energy term $E_i$ associated to each attribute $z_i$.
Note that we do not make assumptions on $E_i$ except $\mathbb{Z}(z)<\infty$, leaving the resulting $p(x|z)$ very flexible. This form is a natural choice to model inputs that must satisfy a \emph{conjunction} of characteristics (such as being a natural image of a landbird \emph{AND} having a water background), corresponding to our attributes. 

Recall $z=(z_1, \ldots, z_m)$ is a vector of $m$ categorical attributes that can each take $d$ possible values. We will denote as $\sigma(z)$ the representation of this attribute vector as a concatenation of $m$ one-hot vectors, i.e.
$$\sigma(z)=[\mathrm{onehot}(z_1), \ldots, \mathrm{onehot}(z_m)]^\top$$
Thus $\sigma(z)$ will be a sparse vector of length $md$ containing $m$ ones.
We also define a vector valued map 
$E(x) = [E_1(x,1), \ldots, E_1(x, d), 
           \ldots,
           E_m(x, 1), \ldots, E_m(x, d)] ^\top$ where $E_i(x,z_i)$ is the energy term for $i^{th}$ attribute taking the value $z_i$. This allows us to reexpress \eqref{eqn:additive1} using a simple dot product, denoted $\braket{\cdot,\cdot}$:
           
         \begin{equation}
            p(x|z) = \frac{1}{\mathbb{Z}(z)}\exp\Big(- \braket{\sigma(z), E(x)}\Big),  
            \label{eqn:n2}
         \end{equation}
         
where $\mathbb{Z}(z) = \bigintsss \exp \Big(- \braket{\sigma(z),E(x)}\Big) dx$ is the partition function.

There are two lines of work that inspire the choice of additive energy distributions. Firstly, these distributions have been used to enhance compositionality in generative tasks ~\citep{du2020compositional,
       du2021unsupervised,
       liu2021learning}
but they have not been used in discriminative compositionality.  Secondly, for readers from the causal machine learning community, it may be useful to think of additive energy distributions from the perspective of the independent mechanisms principle  \citep{janzing2010causal, parascandolo2018learning}. The principle states that the data distribution is composed of independent data generation modules, where the notion of independence refers to algorithmic independence and not statistical independence. In these distributions, we think of energy function of an attribute as an independent function. 

This is the right juncture to contrast AEDs with distributional assumptions in recent provable approaches to compositional generalization \citep{dong2022first, wiedemer2023provable, wiedemer2024compositional, brady2023provably, lachapelle2024additive}. These works assume labeling functions or decoders that are deterministic and additive over individual features, proving generalization over the Cartesian product of feature supports (further discussion in Appendix~\ref{sec: related_works}). While these are insightful first steps, the additive decoding assumption is restrictive, as each attribute combination corresponds to a single observation with limited generative interactions.
In contrast, AEDs capture stochastic decoders, offering a more flexible way to model inputs as a conjunction of characteristics. A further discussion on this can be found in Appendix~\ref{sec:aed_vs_ade}.

\section{Provable Compositional Generalization} 

Our goal is to learn a model yielding a $\hat{q}(z|x)$ that will match the test distribution $q(z|x)$ and thus allow us to predict the attributes at test time in a Bayes optimal manner. If we successfully learn the distribution $q(z|x)$, then we can straightforwardly predict the individual attributes $q(z_i|x)$, e.g., the bird class in Waterbirds dataset, by marginalizing over the rest, e.g., the background in Waterbirds dataset.  Observe that $q(z|x)$ differs from the training $p(z|x)$, which can be estimated through standard ERM with cross-entropy loss. Since some attributes $z$ observed at test time are never observed at train time, the distribution learned via ERM assigns a zero probability to these attributes and thus it cannot match the test distribution $q(z|x)$.

In what follows, we first introduce a novel mathematical object termed \emph{Discrete Affine Hull} over the set of attributes. We then describe a generative approach for classification that requires us to learn $p(x|z)$ including the partition function, which is not practical. Next, we describe a purely discriminative approach that circumvents the issue of learning $\hat{p}(x|z)$ and achieves the same extrapolation guarantees as the generative approach.  We present the generative approach as it allows to understand the results more easily. Building generative models based on our theory is out of scope of this work but is an exciting future work.

\subsection{Discrete Affine Hull}

We define the \emph{discrete affine hull} of a set of attribute vectors $\mathcal{A} = \{ z^{(1)}, \ldots, z^{(k)} \}$ where $z^{(i)} \in \mathcal{Z}$ as:
{
$$
\mathsf{DAff}(\mathcal{A}) = \Big\{ 
z \in \mathcal{Z} \; | \; 
\exists\; \alpha \in \mathbb{R}^k, 
\sigma(z) = \sum_{i=1}^k \alpha_i \sigma\big(z^{(i)}\big),
\sum_{i=1}^k \alpha_i= 1
\Big\} 
$$
}

In other words, the discrete affine hull of $\mathcal{A}$ consists of all attribute vectors whose one-hot encoding lies in the (regular) affine hull of the one-hot encodings of the attribute vectors of $\mathcal{A}$. This construct helps characterize which new attribute combinations we can extrapolate to.

As an illustration, consider the Waterbirds dataset, where we observe three of four possible groups. In one-hot encoding, $\text{WB}$ is $[1,0]$, $\text{LB}$ is $[0,1]$, Water is $[1,0]$, and Land is $[0,1]$. We show that the missing attribute vector $\text{WB}$ on $\text{L}$, represented as $[1 \; 0\; 0 \; 1]$, can be expressed as an affine combination of the observed vectors, meaning the discrete affine hull of three one-hot concatenated vectors contains all four possible combinations.
\begin{equation}
   (+1) \cdot \begin{bmatrix} 
           0 \\ 
           1 \\                  
           0 \\
           1
         \end{bmatrix} \;\; +\;\;
     (-1) \cdot \begin{bmatrix}
           0 \\ 
           1 \\
           1 \\
           0
         \end{bmatrix} \;\; +\;\; 
           (+1) \cdot \begin{bmatrix}
           1 \\ 
           0 \\
           1 \\
           0
         \end{bmatrix} =   \begin{bmatrix}
           1 \\ 
           0 \\
           0 \\
           1
         \end{bmatrix}
\end{equation}

In Section~\ref{sec:disc_aff_closer_look}, we generalize this finding to a formal mathematical characterization of discrete affine hulls, providing a visualization method. In Section~\ref{sec:disc_additive}, we show how discrete affine hulls generalize the extrapolation of additive functions studied in \citet{dong2022first, lachapelle2024additive} over discrete domains. We also show how our results lead to a sharp characterization of extrapolation of these functions. Throughout, ``affine hull'' refers to the discrete affine hull.

\subsection{Extrapolation of Conditional Density}
\label{sec:cond-dens}

We learn a set of conditional probability densities $\hat{p}(x|z) = \frac{1}{\hat{\mathbb{Z}}(z)}\exp\Big(-\braket{\sigma(z), \hat{E}(x)}\Big), \forall z \in \mathcal{Z}^{\mathsf{train}}$ by maximizing the likelihood over the training distribution, where $\hat{E}$ denotes the estimated energy components and $\hat{\mathbb{Z}}$ denotes the estimated partition function. Under perfect maximum likelihood maximization  $\hat{p}(x|z) = p(x|z)$ for all the training groups $z\in \mathcal{Z}^{\mathsf{train}}$. We can define $\hat{p}(x|z)$ for all $z\in \S$ beyond $\mathcal{Z}^{\mathsf{train}}$ in a natural way as follows. For each $z \in \S$, we have estimated the energy for every individual component $z_i$ denoted $\hat{E}_i(x, z_i)$.  We set $\hat{\mathbb{Z}}(z) = \bigintsss \exp\Big(-\braket{\sigma(z), \hat{E}(x)}\Big) dx$ and the density for each $z\in \S$, $\hat{p}(x|z) = \frac{1}{\hat{\mathbb{Z}}(z)}\exp\Big(-\braket{\sigma(z), \hat{E}(x)}\Big)$.  

\begin{restatable}{theorem}{genexp}
\label{theorem1_cat}
If the true   and learned distribution ($p(\cdot |z)$ and $\hat{p}(\cdot |z)$) are AED, 
then $\hat{p}(\cdot |z) = p(\cdot |z), \forall z \in \mathcal{Z}^{\mathsf{train}} \implies 
\hat{p}(\cdot |z') = p( \cdot |z'), \forall  z' \in \mathsf{DAff}(\mathcal{Z}^{\mathsf{train}})$.
\end{restatable}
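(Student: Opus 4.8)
The plan is to reduce everything to a single statement about the energy discrepancy $D(x) \coloneq E(x) - \hat{E}(x)$, a map $\mathbb{R}^n \to \mathbb{R}^{md}$, exploiting that the additive form in \eqref{eqn:n2} makes the log-density linear in $\sigma(z)$. First I would take logarithms of the hypothesis $\hat{p}(\cdot|z) = p(\cdot|z)$ on each training group $z \in \mathcal{Z}^{\mathsf{train}}$. Writing out both sides with \eqref{eqn:n2} and cancelling, the $x$-dependent parts collapse to
$$\braket{\sigma(z), D(x)} = \log \hat{\mathbb{Z}}(z) - \log \mathbb{Z}(z) \eqcolon c(z),$$
whose right-hand side does not depend on $x$. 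The content of this step is that the (intractable) mismatch between the two partition functions is absorbed into an $x$-independent constant $c(z)$, so that for every training group the projection of $D$ onto $\sigma(z)$ is constant in $x$.

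The second step transports this constancy into the discrete affine hull. For $z' \in \mathsf{DAff}(\mathcal{Z}^{\mathsf{train}})$ I would use the defining representation $\sigma(z') = \sum_{i=1}^k \alpha_i\, \sigma(z^{(i)})$ with $z^{(i)} \in \mathcal{Z}^{\mathsf{train}}$ and $\sum_i \alpha_i = 1$, and simply push it through the inner product:
$$\braket{\sigma(z'), D(x)} = \sum_{i=1}^k \alpha_i \braket{\sigma(z^{(i)}), D(x)} = \sum_{i=1}^k \alpha_i\, c(z^{(i)}) \eqcolon c(z').$$
This is again independent of $x$ because each summand is. The only structural fact being used is that $\sigma(z')$ lies in the real span of the training encodings; the affine constraint $\sum_i \alpha_i = 1$ (forced by the one-hot block structure) is what keeps $z'$ a genuine element of $\mathcal{Z}$ and pins down $c(z')$ as the matching affine average of the $c(z^{(i)})$.

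The third step converts $\braket{\sigma(z'), D(x)} = c(z')$ into a density identity. Rearranging gives $\braket{\sigma(z'), \hat{E}(x)} = \braket{\sigma(z'), E(x)} - c(z')$, so that
$$\hat{p}(x|z') = \frac{1}{\hat{\mathbb{Z}}(z')}\exp\big(-\braket{\sigma(z'), E(x)} + c(z')\big) = \frac{e^{c(z')}\,\mathbb{Z}(z')}{\hat{\mathbb{Z}}(z')}\; p(x|z').$$
Hence $\hat{p}(\cdot|z')$ and $p(\cdot|z')$ are proportional with a strictly positive, $x$-independent constant. Since both are defined (via their own partition functions) to integrate to one over $\mathbb{R}^n$, this constant must equal one, giving $\hat{p}(\cdot|z') = p(\cdot|z')$. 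As a side check, integrability of $p(\cdot|z')$ together with the proportionality guarantees $\hat{\mathbb{Z}}(z') < \infty$, so $\hat{p}(\cdot|z')$ is a bona fide density and no degenerate case arises.

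I expect the only genuinely subtle point to be the recognition underlying the first two steps: that the right object to track is the projection $\braket{\sigma(z), D(x)}$ rather than $D$ itself, and that folding the log-partition-function gap into a constant $c(z)$ is exactly what lets the agreement survive the passage to affine combinations, since an affine combination of constants is still a constant of the same form. Everything afterward—the normalization argument and checking that the $\mathsf{DAff}$ representation decomposes $\sigma(z')$ over training encodings—is routine bookkeeping.
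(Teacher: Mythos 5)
Your proof is correct and follows essentially the same route as the paper's: equate log-densities on training groups to show the energy mismatch projects to an $x$-independent constant, propagate that constant through the affine combination by linearity of the inner product, and invoke normalization to force the resulting proportionality constant to one. The only cosmetic difference is that you package the argument via the discrepancy $D(x)=E(x)-\hat{E}(x)$, whereas the paper manipulates $\hat{E}$ directly.
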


The result above argues that so long as the group $z'$ is in the discrete affine hull of $\mathcal{Z}^{\mathsf{train}}$, the estimated density extrapolates to it. We provide a proof sketch ahead, with the complete proof in~\Cref{app:proof-crm-gen}.

\emph{Proof sketch:} Under perfect maximum likelihood maximization  $\hat{p}(x|z) = p(x|z), \forall z\in \mathcal{Z}^{\mathsf{train}} $. Replacing these densities by their expressions and taking their $\log$ we obtain the following,
\begin{equation}
    \braket{\sigma(z), \hat{E}(x)} = \braket{\sigma(z), E(x)} + C(z), \forall z \in \mathcal{Z}^{\mathsf{train}} 
    \label{eq1: proof_sketch}
\end{equation}

where $C(z) = \log\big(\mathbb{Z}(z)/\hat{\mathbb{Z}}(z)\big)$. 

For any $z' \in \mathsf{DAff}(\mathcal{Z}^{\mathsf{train}})$, by definition there exists $\alpha$ such that $\sigma(z') = \sum_{z \in \mathcal{Z}^{\mathsf{train}}} \alpha_z \sigma(z)$.
Thus $\braket{\sigma(z'), \hat{E}(x)}  = \sum_{z \in \mathcal{Z}^{\mathsf{train}}} \alpha_z \braket{\sigma(z), \hat{E}(x)}$, by  linearity of the dot product.
Substituting the expression for $\braket{\sigma(z), \hat{E}(x)}$ from \eqref{eq1: proof_sketch}, this becomes
\begin{equation}
\begin{split}
    \braket{\sigma(z'),\hat{E}(x)}   = \sum_{z\in \mathcal{Z}^{\mathsf{train}}}  \alpha_z 
    \Big(
    \braket{\sigma(z), E(x)} + C(z)\Big) \\ =
    \braket{\sigma(z'), E(x)} + \sum_{z\in \mathcal{Z}^{\mathsf{train}}} \alpha_z C(z), 
\end{split}
   \label{eq2: proof_sketch}
\end{equation}

From \eqref{eq2: proof_sketch}, we can conclude that $ \braket{\sigma(z'),\hat{E}(x)}$ estimates $\braket{\sigma(z'), E(x)}$ perfectly up to a constant error that does not depend on $x$. This difference of constant is absorbed by the partition function and hence the conditional densities match: $\hat{p}(x|z')= p(x|z')$.

\paragraph{Classifier based on conditional density $p(x|z)$.}
If, on data from training distribution $p$, we were able to train a good conditional density estimate $\hat{p}(x|z), \forall z \in \mathcal{Z}^{\mathsf{train}}$, then Theorem~\ref{theorem1_cat} implies that  
$\hat{p}(x|z')$ will also be a good estimate of $p(x|z')$ for \emph{new unseen} attributes $z'\in \mathsf{DAff}(\mathcal{Z}^{\mathsf{train}})$.
Provided $\mathcal{Z}^{\mathsf{test}} \subseteq \mathsf{DAff}(\mathcal{Z}^{\mathsf{train}})$, it is then straightforward to obtain a classifier that generalizes to compositionally-shifted test distribution $q$. Indeed, we have
\begin{equation}
\begin{aligned}
q(z'|x) &= \frac{q(x|z')q(z')}{\sum_{z''\in \mathcal{Z}^{\mathsf{test}}} 
q(x|z'')q(z'')} \\
&= \frac{p(x|z')q(z')}{\sum_{z''\in \mathcal{Z}^{\mathsf{test}}} 
p(x|z'')q(z'')}
 \approx 
\frac{\hat{p}(x|z')q(z')}{\sum_{z''\in \mathcal{Z}^{\mathsf{test}}} 
\hat{p}(x|z'')q(z'')}        
\end{aligned}
\end{equation}

where we used the property of compositional shifts $q(x|z)=p(x|z)$. If we know test group prior $q(z')$ (or e.g. assume it to be uniform), we can directly use the expression in RHS to correctly compute the test group probabilities $q(z|x)$, even for groups never seen at training.

\subsection{Extrapolation of Discriminative Model}

In Section~\ref{sec:cond-dens}, we saw how we could, in principle, obtain a classifier that generalizes under compositional shift, by first training energy based conditional probability density models $\hat{p}(x|z)$. 
However learning such a model requires dealing with the problematic partition function throughout training. Indeed making a gradient step to maximize its log likelihood with respect to parameters $\theta$ involves estimating the gradient of its log partition function 
$\nabla_\theta \log \hat{\mathbb{Z}}(z;\theta) = \nabla_\theta \log \bigintsss \exp\Big(-\braket{\sigma(z), \hat{E}(x;\theta)}\Big) dx$ which is typically intractable. This difficulty in training energy-based models is a well known open problem. While crude stochastic approximations of this gradient might be obtained via e.g. Contrastive Divergence \citep{hinton2002training} or variants of more expensive MCMC sampling, no unbiased computationally efficient solution is known in the general case.

But is it necessary to precisely model the conditional density $p(x|z)$ of high dimensional $x$, when our goal is to predict attributes $z$ given $x$? We will now develop an alternative approach, 
\emph{Compositional Risk Minimization} (CRM), that achieves a similar extrapolation result as Theorem~\ref{theorem1_cat}, while being based on simple discriminative classifier training. It sidesteps the difficulties of explicitly modeling $p(x|z)$ and doesn't require dealing with the partition function throughout training. 

Observe that if we apply Bayes rule to the AED $p(x|z)$ in \eqref{eqn:n2}, we get 
\begin{equation*}
\begin{aligned}
& p(z|x) = \frac{p(x|z) p(z)}{\sum_{z'\in \mathcal{Z}^{\mathsf{train}}} p(x|z') p(z')}\\
& =  \frac{
        \exp\Big(- \braket{\sigma(z), E(x)} 
        + \log p(z) 
        - \log \mathbb{Z}(z)\Big)
        }
        {\sum_{z'\in \mathcal{Z}^{\mathsf{train}}} \exp\Big(- \braket{\sigma(z'), E(x)}
        + \log p(z') 
        - \log \mathbb{Z}(z')\Big)}
\end{aligned}
\end{equation*}

We thus define our \emph{additive energy classifier} as follows. 
To guarantee that we can model this $p(z|x)$, we use a model with the same \emph{form}. For each $z\in \mathcal{Z}^{\mathsf{train}}$
\begin{equation}
\tilde{p}(z|x) = \frac{\exp\Big(-\braket{\sigma(z),\tilde{E}(x)} + \log \hat{p}(z) - \tilde{B}(z)\Big)}{\sum_{z'\in \mathcal{Z}^{\mathsf{train}}} \exp\Big(-\braket{\sigma(z'),\tilde{E}(x)} + \log \hat{p}(z') - \tilde{B}(z')\Big)}
\label{eqn: phat}
\end{equation}

where $\hat{p}(z)$ is the empirical estimate of the prior over $z$, i.e., $p(z)$, $\tilde{E}: \mathbb{R}^n \rightarrow \mathbb{R}^{md}$ is a function to be learned, bias $\tilde{B}$ is a lookup table containing a learnable offset for each combination of attribute. Given a data point $(x,z)$, loss $\ell(z, \tilde{p}(\cdot|x)) = -\log \tilde{p}(z|x)$ measures the prediction performance of $\tilde{p}(\cdot|x)$.  The risk, defined as the expected loss, corresponds to the negated conditional log-likelihood:
\begin{equation}
R(\tilde{p}) = \mathbb{E}_{(x,z)\sim p}\big[\ell(z, \tilde{p}(\cdot|x))\big] =  \mathbb{E}_{(x,z)\sim p}\big[-\log \tilde{p}(z|x)\big] 
    \label{eqn:risk}
\end{equation}

In the first step of CRM, we minimize the risk $R$. 
\begin{equation}
    \hat{E}, \hat{B} \in \argmin_{\tilde{E}, \tilde{B}} R(\tilde{p})
    \label{eqn:CRM1}
\end{equation}

If the minimization is over arbitrary functions, then $\hat{p}(\cdot|x)= p(\cdot|x), \forall x \in \mathbb{R}^n$.  In the second step of CRM, we compute our final predictor $\hat{q}(z|x)$ as follows. Let $\hat{q}(z)$ be an estimate of the marginal distribution over the attributes $q(z)$ with support $\hat{\mathcal{Z}}^{\mathsf{test}}$.  
For each $z\in \mathcal{Z}^{\mathsf{test}}$
\begin{equation}
    \hat{q}(z|x) = \frac{\exp\Big(-\braket{\sigma(z), \hat{E}(x)} + \log \hat{q}(z) - B^\star(z)\Big)}{\sum_{z'\in \hat{\mathcal{Z}}^{\mathsf{test}}} \exp\Big(-\braket{\sigma(z'),\hat{E}(x)} + \log \hat{q}(z') - B^\star(z') \Big)}
    \label{eqn:hatq}
\end{equation}

where, $B^\star$ is the \emph{extrapolated bias} defined as 
\begin{equation}
B^\star(z) = \log \mathbb{E}_{x \sim p}\Big[\frac{\exp\Big(-\braket{\sigma(z),\hat{E}(x)}\Big)}{\sum_{\tilde{z} \in \mathcal{Z}^{\mathsf{train}}}\exp\Big(-\braket{\sigma(\tilde{z}),\hat{E}(x)} + \log  p(\tilde{z}) - \hat{B}(\tilde{z})\Big)}\Big]
\label{eqn: qz}
\end{equation}

where $\hat{E}$, $\hat{B}$ are the solutions from optimization \eqref{eqn:CRM1}. 
Note that $\hat{B}(z)$ was learned for all $z \in \mathcal{Z}^\mathrm{train}$ but never for $z \in \mathcal{Z}^\mathrm{test}$, hence the necessity of extrapolation $B^*$. Each of these steps is easy to implement and we explain the process in Section~\ref{sec:algo}.

\begin{restatable}{theorem}{discexpd}
\label{thm:disc_exp}
 Consider the setting where $p(.|z)$ follows AED $\forall z \in \S$, the test distribution $q$ satisfies compositional shift characterization and 
 $\mathcal{Z}^{\mathsf{test}} \subseteq \mathsf{DAff}(\mathcal{Z}^{\mathsf{train}})$. If $\hat{p}(z |x) = p(z |x), \forall z \in \mathcal{Z}^{\mathsf{train}}, \forall x \in \mathbb{R}^{n}$ and $\hat{q}(z) = q(z), \forall z \in \mathcal{Z}^{\mathsf{test}}$, then the output of CRM (\eqref{eqn:hatq}) matches the test distribution, i.e., $\hat{q}(z|x) = q(z|x), \forall z\in \mathcal{Z}^{\mathsf{test}}, \forall x\in \mathbb{R}^n$. 
\end{restatable}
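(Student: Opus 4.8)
The plan is to exploit the shift-invariance of the softmax: two categorical distributions over the same finite index set, written as softmaxes of logit vectors, coincide if and only if the two logit vectors differ by a quantity that is constant across the index. Since $\hat{q}(z)=q(z)$ on $\mathcal{Z}^{\mathsf{test}}$ forces $\hat q(\cdot|x)$ and $q(\cdot|x)$ to share the support $\mathcal{Z}^{\mathsf{test}}$, I would reduce the claim to showing that the logits of $\hat q(\cdot|x)$ from \eqref{eqn:hatq} and of $q(\cdot|x)$ agree up to a term depending only on $x$. By Bayes' rule together with the compositional-shift property $q(x|z)=p(x|z)$, the true posterior $q(z|x)$ is the softmax over $\mathcal{Z}^{\mathsf{test}}$ of the logits $-\braket{\sigma(z),E(x)}+\log q(z)-\log\mathbb{Z}(z)$. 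Using $\hat q(z)=q(z)$, it then suffices to prove that $\braket{\sigma(z'),\hat E(x)-E(x)}+B^\star(z')-\log\mathbb{Z}(z')$ is independent of $z'$ over $\mathcal{Z}^{\mathsf{test}}$.

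First I would extract the consequences of the training hypothesis $\hat p(z|x)=p(z|x)$. Writing both sides as softmaxes over $\mathcal{Z}^{\mathsf{train}}$ and applying shift-invariance yields an $x$-dependent offset $c(x)$ with $\braket{\sigma(z),\hat E(x)}=\braket{\sigma(z),E(x)}+D(z)-c(x)$ for every $z\in\mathcal{Z}^{\mathsf{train}}$, where $D(z):=\log\mathbb{Z}(z)-\hat B(z)$ (here I identify the empirical prior with the true one, $\hat p(z)=p(z)$, as stated after \eqref{eqn: phat} in the idealized regime). Exactly as in the proof sketch of Theorem~\ref{theorem1_cat}, linearity of the dot product together with $\sigma(z')=\sum_{z\in\mathcal{Z}^{\mathsf{train}}}\alpha_z\sigma(z)$ and $\sum_z\alpha_z=1$ (valid since $z'\in\mathsf{DAff}(\mathcal{Z}^{\mathsf{train}})$) propagates this to $\braket{\sigma(z'),\hat E(x)}=\braket{\sigma(z'),E(x)}+\sum_z\alpha_z D(z)-c(x)$.

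The crux is evaluating the extrapolated bias $B^\star(z')$ from \eqref{eqn: qz}. Substituting the previous identity into its numerator and observing that the sum in its denominator is exactly the normalizing constant $N(x)$ of the trained classifier $\hat p(\cdot|x)$ (this is where $\hat p(z)=p(z)$ is used), I would first show $N(x)=e^{c(x)}p(x)$, with $p(x)=\sum_{z\in\mathcal{Z}^{\mathsf{train}}}p(x|z)p(z)$ the training marginal; this uses $\exp(-\braket{\sigma(z),E(x)})=\mathbb{Z}(z)\,p(x|z)$. The key cancellation is that the unknown factor $e^{c(x)}$ appears in both numerator and denominator and drops out, leaving $B^\star(z')=\log\big(\mathbb{Z}(z')\,e^{-\sum_z\alpha_z D(z)}\,\mathbb{E}_{x\sim p}[\,p(x|z')/p(x)\,]\big)$. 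Since $z'\in\S$ and each $p(\cdot|z')$ is a genuine density on $\mathbb{R}^n$, the importance-ratio expectation equals $\int p(x|z')\,dx=1$, so $B^\star(z')=\log\mathbb{Z}(z')-\sum_z\alpha_z D(z)$.

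Finally I would combine the two computations: $\braket{\sigma(z'),\hat E(x)-E(x)}+B^\star(z')-\log\mathbb{Z}(z')=\big(\sum_z\alpha_z D(z)-c(x)\big)+\big(\log\mathbb{Z}(z')-\sum_z\alpha_z D(z)\big)-\log\mathbb{Z}(z')=-c(x)$, which is indeed independent of $z'$. Hence the $\hat q$- and $q$-logits differ by the $z'$-independent term $c(x)$, the softmaxes coincide, and $\hat q(z|x)=q(z|x)$ on $\mathcal{Z}^{\mathsf{test}}$. I expect the main obstacle to be the evaluation of $B^\star(z')$: one must recognize both that the idealized offset $e^{c(x)}$ cancels between numerator and denominator (so $B^\star$ is well-defined despite $c(x)$ being unknown) and that the residual density ratio integrates to one. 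It is precisely this double mechanism that lets $B^\star(z')$ absorb the affine error $\sum_z\alpha_z D(z)$ and recover $\log\mathbb{Z}(z')$, cancelling the extrapolation error in the energy term.
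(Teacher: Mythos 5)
Your proof is correct, and while it rests on the same two pillars as the paper's argument --- softmax shift-invariance and the affine-hull identity $\braket{\sigma(z'),\cdot}=\sum_{z}\alpha_z\braket{\sigma(z),\cdot}$ of Lemma~\ref{lemma_lin} --- it organizes the algebra along a genuinely different decomposition. The paper first rewrites $q(x|z')$ as an affine combination of the $\log p(x|z)$ plus a normalizer $R$, converts everything into posteriors to reach the intermediate formula $q(z'|x)=\mathsf{Softmax}\big(\log q(z')+\sum_z\alpha_z\log p(z|x)-S\big)$ (\eqref{eqn: qxz1}), and only then substitutes the additive energy classifier and matches the result against \eqref{eqn:hatq}. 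You instead work directly at the level of logits: you extract from $\hat p(\cdot|x)=p(\cdot|x)$ the offset identity $\braket{\sigma(z),\hat E(x)}=\braket{\sigma(z),E(x)}+D(z)-c(x)$ with $D(z)=\log\mathbb{Z}(z)-\hat B(z)$, propagate it through the affine hull, and --- the step the paper never makes explicit --- evaluate the extrapolated bias in closed form as $B^\star(z')=\log\mathbb{Z}(z')-\sum_z\alpha_z D(z)$, using that the denominator inside \eqref{eqn: qz} equals $e^{c(x)}p(x)$ and that $\mathbb{E}_{x\sim p}\left[p(x|z')/p(x)\right]=\int p(x|z')\,dx=1$ (valid because $p(x)>0$ everywhere under the full-support assumption on the AED, a point worth stating since it is where $z'\in\S$ is actually used). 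This closed form makes the mechanism transparent: $B^\star$ simultaneously absorbs the affine error $\sum_z\alpha_z D(z)$ and reconstructs the missing test partition function $\log\mathbb{Z}(z')$, which is exactly what is needed for the $\hat q$- and $q$-logits to differ by the $z'$-independent quantity $c(x)$; the paper achieves the same cancellation implicitly in \eqref{eqn: diff4} without ever isolating $B^\star(z')$. Your version is shorter and arguably cleaner as a verification; the paper's longer detour buys the intermediate relations \eqref{eq:test-train-cond-density-rel} and \eqref{eq:test-train-classifier-rel}, which it reuses in the main text to motivate the choice of the additive energy classifier.
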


A complete proof is provided in~\Cref{app:proof-crm-disc}. Observe that $\hat{p}(\cdot |x) = p(\cdot |x)$ is a condition that even a model trained via ERM can satisfy (with sufficient capacity and data) but it cannot match the true $q(\cdot|x)$. In contrast, CRM optimally adjusts the additive-energy classifier for the compositional shifts.
CRM requires the knowledge of test prior $q(z)$ but the choice of uniform distribution over all possible groups is a reasonable one to make in the absence of further knowledge. 
Notice how learned bias $\hat{B}(z)$ can only be fitted for $z \in \mathcal{Z}^\mathrm{train}$, remaining undefined for $z' \notin \mathcal{Z}^\mathrm{train}$. But we can compute the extrapolated bias $B^\star(z')$, $\forall z' \in \mathcal{Z}^\mathrm{test}$, \emph{based remarkably on only data from the train distribution}.

\paragraph{\textbf{Illustrating CRM's adaptation to test distribution.}}
To better convey how CRM can adapt to the Bayes optimal classifier of the test distribution, we provide an example. 
Consider a two-dimensional setting,  where the distribution of $x\in \mathbb{R}^2$ conditioned on the attributes $z_1 \in \{-1,1\}$ and $z_2 \in \{-1,1\}$ is a Gaussian with mean $(z_1,z_2)$ and identity covariance. Suppose the training groups are drawn with equal probability and can take one of the following three possible values $(+1,+1), (-1,+1), (+1,-1)$. We do not observe data from the group $(-1,-1)$ during training, but at test time we draw samples from all the four groups with equal probability. First, we can show that the above distribution can be expressed as an additive energy distribution, as $E(x,(z_1,z_2)) = \frac{1}{4}  \|x- (2z_1,0)\|^2 + \frac{1}{4} \|x-(0,2z_2)\|^2 )$. For the task of classifying groups, the Bayes optimal classifier has a closed form solution, where each decision region is an intersection of two half-spaces. Figure~\ref{fig:crm-adaptation-2d} shows that CRM learns the Bayes optimal classifier on the training distribution, enables shifting from train prior to test prior to yield the Bayes optimal classifier for the test distribution, and correctly generalize to the unseen $(-1 -1)$ group. For further details, including illustration of the failure of ERM-trained binary classifier on this problem, see~\Cref{app:crm-adaptation-details}.
Beyond this simple illustrative example, we highlight that the additive energy form supports modeling nearly arbitrarily complex distributions.

\begin{figure}[t]
    \centering
     \includegraphics[scale=0.24]{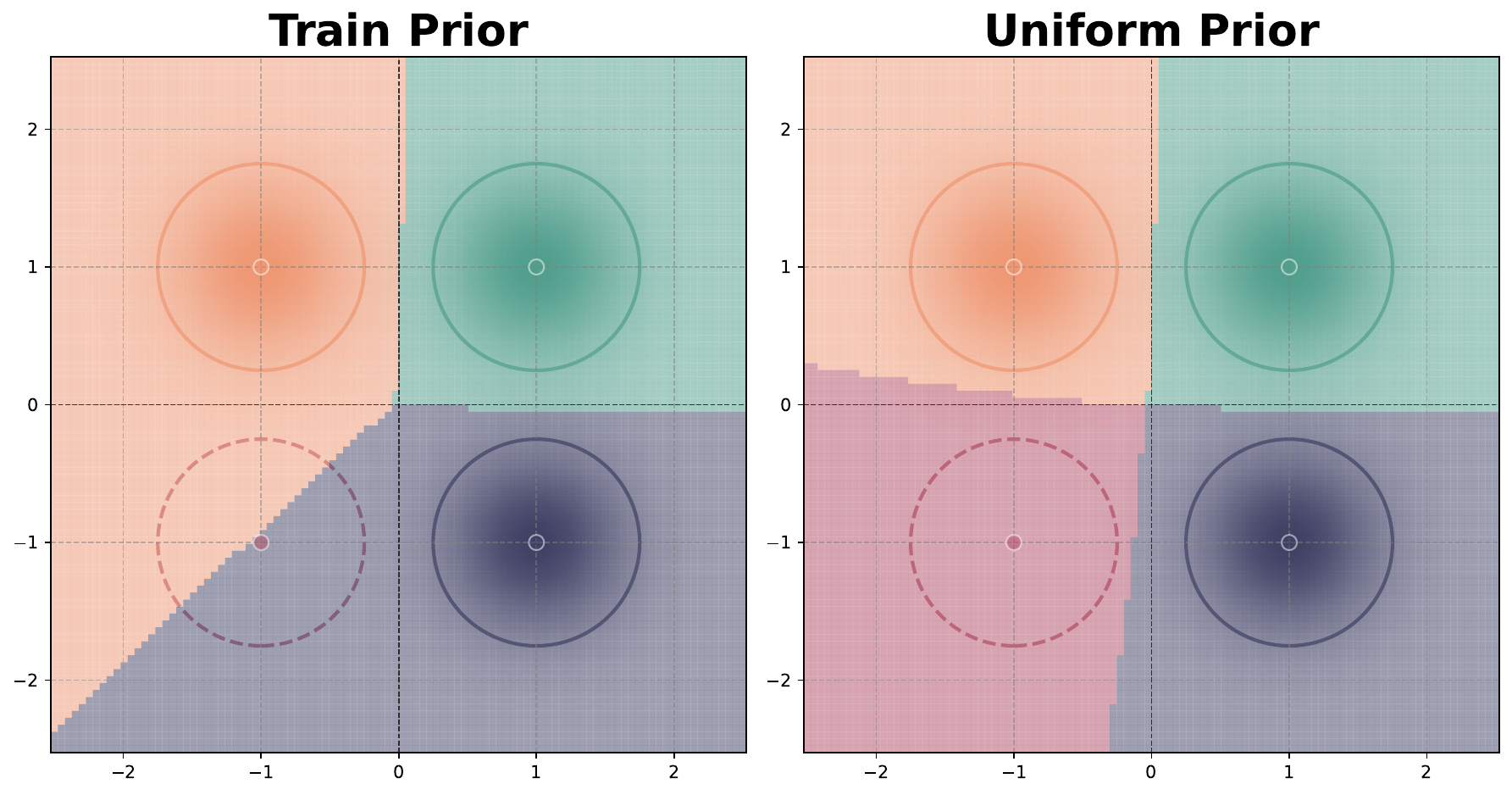}
    \caption{
        \textbf{Extrapolating to an unseen test group}. The distribution to model corresponds to a mixture of 4 Gaussians. But the $(-1, -1)$ group (pink dashed) has zero prior probability in the training distribution, i.e., is absent from the training set. As a result, discriminative training (left) learns only three decision regions and would misclassify test points from $(-1, -1)$. CRM adjusts the prior at test time (right) to a uniform distribution over all attribute combinations, enabling it to recover four decision regions and correctly generalize to the unseen $(-1, -1)$ group. The additive energy form composes the attributes' information to yield this unseen group's location. Decision regions were obtained from finite-data simulations, leading to minor imperfections.}
    \label{fig:crm-adaptation-2d}
\end{figure}

\paragraph{\textbf{Analyzing growth of Discrete Affine Hull.}}
In the discussion so far, we have relied on a crucial assumption that the attribute combinations in the test distribution are in the affine hull. Is this also a \emph{necessary} condition? Can we generalize to attributes outside the affine hull? We consider the task of learning $p(\cdot|z)$ from Theorem~\ref{theorem1_cat} and the task of learning $q(\cdot|x)$ from Theorem~\ref{thm:disc_exp}. In ~\Cref{sec:nec_affine_hull}, we show that the restriction to affine hulls is indeed necessary. 
Under the assumption of compositional shifts $\mathcal{Z}^{\mathsf{test}}$ is only restricted to be a subset of the Cartesian product set $\S$, but our results so far have required us to restrict the support further by confining it to the affine hull, i.e., $\mathcal{Z}^{\mathsf{test}} \subseteq \mathsf{DAff}(\mathcal{Z}^{\mathsf{train}}) \subseteq \S$. This leads us to a natural question. If the training groups that form $\mathcal{Z}^{\mathsf{train}}$ are drawn at random, then how many groups do we need such that the affine hull captures $\S$, i.e., $\mathsf{DAff}(\mathcal{Z}^{\mathsf{train}}) = \S$, at which point CRM can achieve Cartesian Product Extrapolation (CPE). Another way to think about this is to say, how fast does the affine hull grow and capture the Cartesian product set $\S$? Next, we answer this question. 

Consider the the general setting with $m$ attributes, where each attribute takes $d$ possible values, leading to $d^m$ possible attribute combinations.  Suppose we sample $s$ attribute vectors $z$ that comprise the support $\mathcal{Z}^{\mathsf{train}}$ uniformly at random (with replacement) from these $d^m$ possibilites. In the next theorem, we show that if the number of sampled attribute vectors exceeds $2c( md +  d\log(d))$, then $\mathsf{DAff}(\mathcal{Z}^{\mathsf{train}})$ contains all the possible $d^m$ combinations with a high probability (greater than $1-\frac{1}{c}$), hence CRM achieves CPE. We want to emphasize this finding: with almost a linear growth in $m$ and $d$, CRM generalizes to exponentially many groups. 

\begin{restatable}{theorem}{randaffine}
\label{thm4}
 Consider the setting where $p(.|z)$ follows AED $\forall z \in \S$, $\mathcal{Z}^{\mathsf{train}}$ comprises of $s$ attribute vectors $z$ drawn uniformly at random from $\S$, and the test distribution $q$ satisfies compositional shift characterization with $\mathcal{Z}^{\mathsf{test}}=\mathcal{Z}^{\times}$.  If  $s \geq 2c(md + d \log(d))$, where $d$ is sufficiently large, $ \hat{p}(z|x) = p(z|x), \forall z \in \mathcal{Z}^{\mathsf{train}}, \forall x\in \mathbb{R}^{n}$, $\hat{q}(z) = q(z), \forall z \in \S$,  then the output of CRM (\eqref{eqn:hatq}) matches the test distribution, i.e., $\hat{q}(z|x) = q(z|x) $, $\forall z\in \S, \forall x\in \mathbb{R}^n$, with probability greater than $1-\frac{1}{c}$. 
\end{restatable}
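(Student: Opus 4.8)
The plan is to reduce the probabilistic statement to a single deterministic sufficient condition and then analyze how fast random sampling guarantees it. By Theorem~\ref{thm:disc_exp}, once we know that $\mathcal{Z}^{\mathsf{test}} = \S \subseteq \mathsf{DAff}(\mathcal{Z}^{\mathsf{train}})$, the remaining hypotheses (perfect fit of the training conditionals and $\hat q = q$ on $\S$) immediately yield $\hat q(z\mid x) = q(z\mid x)$ everywhere. Since the sampled vectors lie in $\S = \mathcal{Z}$, we always have $\mathsf{DAff}(\mathcal{Z}^{\mathsf{train}}) \subseteq \S$, so the whole theorem collapses to a purely combinatorial claim: \emph{if $s \ge 2c(md + d\log d)$ attribute vectors are drawn i.i.d.\ uniformly from $\S$, then $\mathsf{DAff}(\mathcal{Z}^{\mathsf{train}}) = \S$ with probability at least $1 - 1/c$.}

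First I would restate $\mathsf{DAff}(\mathcal{Z}^{\mathsf{train}}) = \S$ as a rank condition. Stacking the one-hot encodings $\sigma(z^{(k)})$ of the samples as rows of a matrix $M$, the identity $\mathsf{DAff}(\mathcal{Z}^{\mathsf{train}}) = \S$ holds iff the affine hull of the rows equals the affine hull of all of $\sigma(\S)$, i.e.\ iff $M$ attains its maximal possible column rank $m(d-1)+1$ (the deficiency $m-1$ coming from the per-block sum relations). Equivalently — and this dual form is more convenient — it holds iff the only affine functional $f(z) = b + \sum_{i} w_{i,z_i}$ vanishing on every sample is the one vanishing on all of $\S$. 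Failure thus means some nonzero additive functional is annihilated by every sample.

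The core is a sequential rank-growth argument that tracks the \emph{codimension} $r$ of the current affine span $H$ and controls the probability that a fresh sample fails to raise the rank. The key estimate is that for any proper span $H$ of codimension $r$, a uniform $z$ satisfies $\sigma(z)\in H$ with probability at most $e^{-\Omega(r/d)}$. I would prove this by peeling over the $m$ independent coordinates: the $r$-dimensional space of functionals vanishing on $H$ spreads $\gtrsim r$ independent linear constraints across the attribute blocks, and revealing $z_i$ block by block satisfies a block carrying $r_i$ independent constraints with probability at most $1 - \Omega(r_i/d)$ (a rank-$r_i$ family of functions on $\{1,\dots,d\}$ has at most $d-r_i+1$ common preimages of any value), with $\sum_i r_i \gtrsim r$. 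Multiplying across blocks gives the exponential bound, which is uniform over $H$, so conditioning on past samples is harmless and drawing with replacement keeps the steps independent.

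Given this estimate, the number $T$ of samples needed to reach full rank is stochastically dominated by a sum of independent geometric variables, one per dimension climbed, the success probability at codimension $r$ being at least $1 - e^{-\Omega(r/d)}$. I would then split $\mathbb{E}[T] \le \sum_{r=1}^{m(d-1)+1}(1-e^{-\Omega(r/d)})^{-1}$ into two regimes: the $\Theta(md)$ dimensions with $r \gtrsim d$ each cost $O(1)$ samples, contributing $O(md)$; while the final $\approx d$ dimensions with $r \lesssim d$ behave like coupon collection, each costing $\approx d/r$ samples and summing via the harmonic series to $O(d\log d)$. For $d$ large this yields $\mathbb{E}[T]\le 2(md+d\log d)$, and a single Markov step gives $\Pr[T > 2c(md+d\log d)] \le 1/c$, matching the stated bound (the slack in the expectation bound makes it strict). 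The main obstacle is precisely the per-step estimate $e^{-\Omega(r/d)}$: the naive bound $1-1/d$ that ignores codimension only yields $O(md^2)$ sample complexity, so the correct almost-linear rate hinges on showing that high-codimension spans are hit with exponentially small probability — exactly what the coordinate-peeling argument buys.
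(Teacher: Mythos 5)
Your reduction and overall skeleton coincide with the paper's: both arguments collapse the theorem to showing $\mathsf{DAff}(\mathcal{Z}^{\mathsf{train}})=\S$ with probability $1-1/c$ (after which Theorem~\ref{thm:disc_exp} finishes), both track the affine dimension as samples arrive, bound the waiting time at each level by a geometric variable, split the sum into a $O(md)$ bulk regime and a $O(d\log d)$ coupon-collector tail, and close with Markov. Where you genuinely diverge is in the key per-step estimate. The paper's Theorem~\ref{thm: md_affine_hull_G} controls the failure probability via Lemma~\ref{lem:bound_missing_dim}, a combinatorial statement relating the codimension of an affinely closed subset $A\subseteq G$ to its number of missing grid points, proved by induction on $m$ with a slicing argument over the last attribute. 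You instead bound, for an arbitrary affine subspace $H$ of codimension $r$, the probability $P(\sigma(z)\in H)\le e^{-\Omega(r/d)}$ for uniform $z$ by a linear-algebraic peeling over the $r$-dimensional space of affine functionals vanishing on $H$. Your estimate is uniform in $r$ (the paper's lemma only applies once more than half the grid is covered and is supplemented by a trivial bound in the other regime), and it isolates the probabilistic content more cleanly; the paper's lemma is more elementary but tied to the grid structure.

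One step of your peeling needs repair. You assign to block $i$ the rank $r_i$ of its coefficient submatrix, note $\sum_i r_i\ge r$, and multiply per-block bounds $1-\Omega(r_i/d)$. This overcounts: with $m=d=2$ and the single functional $f(z)=[z_1{=}2]-[z_2{=}2]$ one has $r=1$ but $r_1=r_2=1$, the product bound gives $1/4$, yet $P(f(z)=0)=1/2$. The fix is to filter the space $F$ of vanishing functionals by $F_0\subseteq F_1\subseteq\cdots\subseteq F_m=F$, where $F_i$ consists of functionals depending only on blocks $1,\ldots,i$; with $\phi_i=\dim F_i-\dim F_{i-1}$ one has $\sum_i\phi_i=r$ exactly, and revealing $z_i$ conditioned on all of $F_{i-1}$ vanishing forces $z_i$ into a set of at most $d-\phi_i$ values (since $\phi_i$ independent residual functionals restricted to block $i$ take at least $\phi_i+1$ distinct column values). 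The telescoping product then gives $P(\sigma(z)\in H)\le\prod_i(1-\phi_i/d)\le e^{-r/d}$, which is exactly your claimed estimate and suffices for the rest of your argument unchanged.
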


In~\Cref{app:proof-affine-hull-growth-2}, we first present the proof for the case with $m=2$ attributes and provide visual illustrations to assist the reader. It is then followed by the more involved proof for the general case of $m$ attributes in~\Cref{app:proof-affine-hull-growth-m}. In summary, these results highlight that observing data from only a small number of groups is sufficient to enable extrapolation to an exponentially larger set.

\subsection{Further Insights on CRM}

\paragraph{\textbf{Does test distribution belong to the affine hull of train distributions?}} A key implication of the AED assumption is that the  energy for a novel group $z' \in \mathsf{DAff}(\mathcal{Z}^{\mathsf{train}})$ at test time can be expressed as an affine combination of the energies of the training groups, i.e., $\braket{\sigma(z'), E(x)}  = \sum_{z \in \mathcal{Z}^{\mathsf{train}}} \alpha_z \braket{\sigma(z), E(x)}$ (check~\Cref{lemma_lin} in~\Cref{app:proof-section} for details). However, this does not  imply that the conditional density $q(x|z')$ is an affine combination of the training conditional densities $\{ p(x|z) \;|\; z \in \mathcal{Z}^{\mathsf{train}} \}$. Instead, we have the following relationship.
\begin{equation}
\label{eq:test-train-cond-density-rel}    
\begin{aligned}   
\log\big(q(x|z')\big)  &= \sum_{z\in \mathcal{Z}^{\mathsf{train}}}\alpha_z \log p(x|z) - \log {\int} \exp\Big(\sum_{z\in \mathcal{Z}^{\mathsf{train}}}\alpha_z \log p(x|z) \Big) \, dx
\end{aligned}
\end{equation}
Please check~\Cref{app:proof-crm-disc} for the derivation. In contrast, prior works impose a stronger assumption that the test distribution should lie in the convex/affine hull of the train distributions~\citep{krueger2021out, qiao2023topology, yao2023improving}.

\paragraph{\textbf{Why Additive Energy Classifier?}}
In~\Cref{app:proof-crm-disc}, for novels group $z' \in \mathsf{DAff}(\mathcal{Z}^{\mathsf{train}})$, we derive the following relationship between the test classifier for the novel group $q(z'|x)$ and the training classifiers $\{ p(z|x) \;|\; z \in \mathcal{Z}^{\mathsf{train}} \}$.
\begin{equation}
\begin{aligned}
\label{eq:test-train-classifier-rel}    
q(z'|x) &= \mathsf{Softmax}\Big(\log q(z')  +\sum_{z\in \mathcal{Z}^{\mathsf{train}}}\alpha_z \log p(z|x) - \log\mathbb{E}_{x\sim p(x)}\Big[\exp\Big(\sum_{z \in \mathcal{Z}^{\mathsf{train}}}\alpha_z\log p(z|x) \Big) \Big]\Big)
\end{aligned}    
\end{equation}

This equation is central  in deriving the classifier $\hat q(z'|x)$ (\eqref{eqn:hatq}) in the second step of CRM, where we substitute $p(z|x)$ with the learned additive energy classifier $\hat p(z|x)$ (\eqref{eqn:CRM1}). While alternative methods could be used to estimate $p(z|x)$, but they would have to separately infer the affine combination weights $\alpha_{z}$. 
To address this, we adopt the additive energy classifier  (\eqref{eqn: phat}), which simplifies the computation of $\hat q(z'|x)$. Crucially, it avoids the need to explicitly estimate the affine weights ($\alpha_{z}$), rather the required adjustment is absorbed into a single updated bias term $B^{\star}$ (\eqref{eqn: qz}).

\paragraph{\textbf{Why Compositional Risk?}} To better understand the compositional risk formulation, note that the classical ERM objective (\ref{eqn:risk}) can be restated as follows,
$$
R(\tilde{p}) 
= \sum_{ z \in \S } p(z) R(\tilde{p}|z)
$$

where $R(\tilde{p}|z)= \mathbb{E}_{x \sim p(x|z)} \big[ \ell(z, \tilde{p}(\cdot|x)) \big]$. Note that in the above summation 
$p(z)$ is zero on all groups that are not in the support of the training distribution. However, to tackle compositional shifts, we want to learn predictors that instead minimize the following \emph{compositional risk},
\begin{equation}
\label{eq:comp-risk-justification}    
\begin{aligned}    
R_{\text{comp}}(\tilde{p}) 
&= \mathbb{E}_{z \sim q(z)} \mathbb{E}_{x \sim q(x|z)} \big[ \ell(z, \tilde{p}(\cdot|x)) \big] \\
&= \sum_{ z \in \S } q(z) R(\tilde{p}|z)
\end{aligned}
\end{equation}

as we have $p(x|z) = q(x|z) \; \forall z \in \S$. In the above objective, $q(z)$ can be non-zero on groups $z$ that have zero probability under $p(z)$. Hence, the minimizer of expected risk $R(\tilde{p})$ can be different from the minimizer of compositional risk $R_{q}(\tilde{p})$. In~\Cref{thm4}, we show that our approach (CRM) outputs the Bayes optimal predictor and hence it provably minimizes $R_{comp}(\tilde{p})$.

\section{Algorithm for CRM}
\label{sec:algo}
\begin{algorithm}[h]
  \small
  \caption{Compositional Risk Minimization (CRM)}
  \label{algo.crm.small}
  \textbf{Input:} Training set $\mathcal{D}^\mathrm{train}= \{ (x, z)\}$ \\
  \textbf{Output:} Classifier parameters $\hat \theta$, $\hat W$, $B^\star$ \\
  \textbf{Training:}
  \begin{itemize}[leftmargin=0.5cm,itemsep=0pt]
    \item Estimate train prior $\hat p(z)$ based on group counts in $\mathcal{D}^\mathrm{train}$ 
    \item Compute the energy terms as $\tilde E(x)= \tilde W \phi(x; \tilde \theta)$
    \item \textbf{CRM Step 1:} Train additive energy classifier $\tilde p$ (e.q.~\ref{eqn: phat})
    by emprirical risk minimization: 
    $$ \hat \theta, \hat W, \hat{B} \in \argmin_{\tilde{\theta}, \tilde{W}, \tilde{B}} R(\tilde{p})$$
    \item \textbf{CRM Step 2:} Estimate extrapolated bias $B^{\star}$ (e.q.~\ref{eqn: qz}) via an average over training examples.
  \end{itemize}
  \textbf{Inference on test point $x$:} 
  \begin{itemize}[leftmargin=0.5cm,itemsep=0pt]
    \item Set $\hat q(z)$ as uniform prior over all groups.
    \item Compute test group probabilities $\hat q(z|x)$ via e.q.~\ref{eqn:hatq} using $\hat E$ and $B^{\star}$ learned during training.
  \end{itemize}
\end{algorithm}

In a nutshell, CRM consists of: a) training additive energy classifier $\hat p(z|x)$ (e.q.~\ref{eqn: phat}) by maximum likelihood (e.q.\ref{eqn:CRM1}) for trainset group prediction; b) compute extrapolated biases $B^{\star}$ (e.q.\ref{eqn: qz}); c) infer group probabilities on compositionally shifted test distribution using $\hat q(z|x)$ (e.q~\ref{eqn:hatq}). Algorithm~\ref{algo.crm.small} provides the associated pseudo-code, where we have a basic architecture using a deep network backbone $\phi(x;\theta)$ followed by a linear mapping (matrix $W$)
\footnote{Instead of linear, we could use separate non-linear heads to obtain the energy components for each attribute.}. For the case where we have 2 attributes $z=(y,a)$ (illustrated in Figure~\ref{fig:crm-archi}), we provide a detailed algorithm and PyTorch implementation in~\Cref{app:detailed-algo}.

\section{Experiments}
\label{app:add-energy-experiments}
\subsection{Setup}
We evaluate CRM on widely recognized benchmarks for subpopulation shifts~\citep{yang2023change}, that have attributes $z= (y, a)$, where $y$ denotes the class label and $a$ denotes the spurious attribute ($y$ and $a$ are correlated). However, the standard split between train and test data mandated in these benchmarks does not actually evaluate compositional generalization capabilities, because both train and test datasets contain all the groups ($\mathcal{Z}^{\mathsf{train}} = \mathcal{Z}^{\mathsf{test}} = \S$). Therefore, we repurpose these benchmarks for compositional shifts by discarding samples from one of the groups ($z$) in the train (and validation) dataset; but we don't change the test dataset, i.e., $z  \not \in \mathcal{Z}^{\mathsf{train}}$ but $z  \in \mathcal{Z}^{\mathsf{test}}$. Let us denote the data splits from the standard benchmarks as $(\gD_{\mathsf{train}}, \gD_{\mathsf{val}}, \gD_{\mathsf{test}})$. Then we generate multiple variants of compositional shifts $\{ ( \gD_{\mathsf{train}}^{\neg z}, \gD_{\mathsf{val}}^{\neg z}, \gD_{\mathsf{test}} ) \; | \; z \in \S \}$, where $\gD_{\mathsf{train}}^{\neg z}$ and  $\gD_{\mathsf{val}}^{\neg z}$ are generated by discarding samples from $\gD_{\mathsf{train}}$ and  $\gD_{\mathsf{val}}$ that belong to the group $z$. 

Following this procedure, we adapted Waterbirds~\citep{wah2011caltech}, CelebA~\citep{liu2015deep}, MetaShift~\citep{liang2022metashift}, MultiNLI~\citep{williams2017broad}, and CivilComments~\cite{borkan2019nuanced} for experiments. We also experiment with the NICO++ dataset~\citep{zhang2023nico++}, where we already have $\mathcal{Z}^{\mathsf{train}} \subsetneq \mathcal{Z}^{\mathsf{test}}= \S$ as some groups were not present in the train dataset. 
However, these groups are still present in the validation dataset ($\mathcal{Z}^{\mathsf{val}}= \S$). Hence, the only transformation we apply to NICO++ is to drop samples from the validation dataset so that $\mathcal{Z}^{\mathsf{train}}= \mathcal{Z}^{\mathsf{val}}$. Note that our benchmarks cover diverse scenarios, with binary (Waterbirds, CelebA, MetaShift) and non-binary attributes (MultiNLI, CivilComments, NICO++), resulting in total groups varying from $4$ to $360$ (NICO++).

For baselines, we train classifiers via Empirical Risk Minimization (ERM), Group Distributionally Robust Optimization (GroupDRO)~\citep{sagawa2019distributionally}, Logit Correction (LC)~\citep{liu2022avoiding}, supervised logit adjustment (sLA)~\citep{tsirigotis2024group}, Invariant Risk Minimization (IRM)~\citep{arjovsky2019invariant}, Risk Extrapolation (VREx)~\citep{krueger2021out}, and Mixup~\citep{zhang2017mixup}. In all cases we employ a pretrained architecture as the representation network $\phi$, followed by a linear layer $W$ to get class predictions, and fine-tune them jointly (see~\Cref{app:add-energy-exp-method-details} for details).

For evaluation metrics, we computed the average accuracy, group-balanced accuracy, and worst-group accuracy (WGA) on the test set. Due to imbalances in group distribution, a method can obtain good average accuracy despite having bad worst-group accuracy. Therefore, WGA is more indicative of robustness to spurious correlations (\Cref{app:add-energy-exp-metric-details}).

\subsection{Results}

\Cref{tab:main-results} shows the results of our experiment, where due to space constraints, we only compare with the best performing baselines and  don't report the group balanced accuracy. Complete results with comparisons with IRM, VREx, and Mixup are provided in Appendix~\ref{app:add-energy-exps-full-table} (\Cref{tab:main-results-full}). For each dataset and metric, we report the \emph{average} performance over its various compositional shift scenarios  $\{ ( \gD_{\mathsf{train}}^{\neg z}, \gD_{\mathsf{val}}^{\neg z}, \gD_{\mathsf{test}} ) \; | \; z \in \S \}$ (detailed results for all scenarios are in~\Cref{app:add-energy-exp-detailed-resutls}). In all cases, CRM either outperforms or is competitive with the baselines in terms of worst group accuracy (WGA). 

Further, for Waterbirds and MultiNLI, while the logit adjustment baselines appear competitive with CRM on average, if we look more closely at the worst case compositional shift scenario, we find these baselines fare much worse than CRM. For Waterbirds, LC obtains $69.0\%$ WGA while CRM obtains $73.0\%$ WGA for the worst case scenario of dropping the group $(0,1)$ (Table~\ref{tab:waterbirds-results}). Similarly, for MultiNLI, sLA obtains $19.7\%$ WGA while CRM obtains $31.0\%$ WGA for the worst case scenario of dropping the group $(0,0)$ (Table~\ref{tab:multinli-results}).

We also report the worst group accuracy (other metrics in Table~\ref{tab:standard-subpopshift-results},~\Cref{app:standard-subpopshift-results}) for the original benchmark $(\gD_{\mathsf{train}}, \gD_{\mathsf{val}}, \gD_{\mathsf{train}})$, which was not transformed for compositional shifts, denoted WGA (No Groups Dropped). This can be interpreted as the ``oracle'' performance for that benchmark, and we can compare methods based on the performance drop in WGA due to discarding groups in compositional shifts. 
ERM and GroupDRO appear the most sensitive to compositional shifts, and the logit adjustment baselines also show a sharp drop for the CelebA benchmark; while CRM is more robust to compositional shifts.

\paragraph{\textbf{Multiple spurious attributes case.}} We benchmark CRM for class label prediction with multiple spurious attributes. For this, we augment the CelebA benchmark to have three additional attributes (total $m=5$ attributes). We provide setup details in~\Cref{app:celeba-multiple-spur-attr}, and~\Cref{tab:celeba-multi-attr} presents the associated results. We find that CRM still continues to be the superior method (as per WGA), in fact, the difference in WGA between CRM and runner up baseline increases to $28.9\%$, as compared to $14\%$ for the case of CelebA 2-attribute (Table~\ref{tab:main-results}). Hence, the benefits with CRM become more pronounced for multiple spurious attributes settings.

\paragraph{\textbf{Importance of extrapolating the bias.}} We conduct an ablation study for CRM where we test a variant that uses the learned bias $\hat{B}$ (e.q.~\ref{eqn:CRM1}) instead of the extrapolated bias $B^\star$ (e.q.~\ref{eqn: qz}). 
Results are presented in~\Cref{tab:main-results-crm-ablation}. They show a significant drop in worst-group accuracy if we use the learned bias instead of the extrapolated one. Hence, our theoretically grounded bias extrapolation step is crucial to generalization under compositional shifts. In~\Cref{supp:crm-ablations} (Table~\ref{tab:full-results-crm-ablation}) we conduct further ablation studies, showing the impact of different choices of the test log prior.

Further, we analyze CRM's performance with varying number of train groups, details in~\Cref{app:group-complexity-analysis}.

\begin{table}[t]
    \centering
\begin{tabular}{l|ll>{\columncolor{orange!10}}l|>{\color{gray}}l}
\toprule
   Dataset & Method &               Average Acc &          WGA  & \thead{ WGA \\ \scriptsize{(No Groups Dropped)} }\\
\midrule
\multirow[c]{5}{*}{Waterbirds} &                   ERM & $ 77.9 $ \small{$( 0.1)$} & $ 43.0 $ \small{$( 0.2)$}  & $62. 3$ \small{$(1.2)$}\\
    &                 G-DRO & $ 77.9 $ \small{$( 0.9)$} & $ 42.3 $ \small{$( 2.6)$} & $ 87.3 $ \small{$( 0.3)$}\\
    &                    LC & $ 88.3 $ \small{$( 0.9)$} & $ 75.5 $ \small{$( 1.8)$} & $ 88.7 $ \small{$( 0.3)$} \\
    &                   sLA & $ 89.3 $ \small{$( 0.4)$} & $ 77.3 $ \small{$( 1.4)$} & $ 89.7 $ \small{$( 0.3)$} \\
\rowcolor{blue!10}  \cellcolor{white}    &                   CRM & $ 87.1 $ \small{$( 0.7)$} & $ 78.7 $ \small{$( 1.0)$} & $ 86.0 $ \small{$( 0.6)$} \\
\midrule
    \multirow[c]{5}{*}{CelebA} 
        &                   ERM & $ 85.8 $ \small{$( 0.3)$} & $ 39.0 $ \small{$( 0.3)$} & $ 52.0 $ \small{$( 1.0)$}\\
        &                 G-DRO & $ 89.2 $ \small{$( 0.5)$} & $ 67.8 $ \small{$( 0.8)$} &  $ 91.0 $ \small{$( 0.6)$}\\
        &                    LC & $ 91.1 $ \small{$( 0.2)$} & $ 57.4 $ \small{$( 0.5)$} & $ 90.0 $ \small{$( 0.6)$}\\
        &                   sLA & $ 90.9 $ \small{$( 0.2)$} & $ 57.4 $ \small{$( 1.3)$} & $ 86.7 $ \small{$( 1.9)$}\\
\rowcolor{blue!10}  \cellcolor{white}        &                   CRM & $ 91.1 $ \small{$( 0.2)$} & $ 81.8 $ \small{$( 0.5)$} & $ 89.0 $ \small{$( 0.6)$}  \\
\midrule
 \multirow[c]{5}{*}{MetaShift}      
     &   ERM & $ 85.7 $ \small{$( 0.4)$} & $ 60.5 $ \small{$( 0.5)$} & $ 63.0 $ \small{$( 0.0)$} \\
     &                 G-DRO & $ 86.0 $ \small{$( 0.3)$} & $ 63.8 $ \small{$( 1.1)$} & $ 80.7 $ \small{$( 1.3)$}\\
     &                    LC & $ 88.5 $ \small{$( 0.0)$} & $ 68.2 $ \small{$( 0.5)$} & $ 80.0 $ \small{$( 1.2)$} \\
     &                   sLA & $ 88.4 $ \small{$( 0.1)$} & $ 63.0 $ \small{$( 0.5)$} & $ 80.0 $ \small{$( 1.2)$} \\
\rowcolor{blue!10}  \cellcolor{white}     &                   CRM & $ 87.6 $ \small{$( 0.3)$} & $ 73.4 $ \small{$( 0.4)$} & $ 74.7 $ \small{$( 1.5)$} \\
\midrule
    \multirow[c]{5}{*}{MultiNLI} 
      &                   ERM & $ 68.4 $ \small{$( 2.1)$} &  $ 7.5 $ \small{$( 1.3)$} & $ 68.0 $ \small{$( 1.7)$}\\
      &                 G-DRO & $ 70.4 $ \small{$( 0.2)$} & $ 34.3 $ \small{$( 0.2)$} & $ 57.0 $ \small{$( 2.3)$}\\
      &                    LC & $ 75.9 $ \small{$( 0.1)$} & $ 54.3 $ \small{$( 1.0)$} & $ 74.3 $ \small{$( 1.2)$}\\
      &                   sLA & $ 76.4 $ \small{$( 0.3)$} & $ 55.0 $ \small{$( 1.5)$} & $ 71.7 $ \small{$( 0.3)$} \\
\rowcolor{blue!10}  \cellcolor{white}        &                   CRM & $ 74.3 $ \small{$( 0.3)$} & $ 58.7 $ \small{$( 1.4)$} & $ 74.7 $ \small{$( 1.3)$} \\
\midrule
    \multirow[c]{5}{*}{CivilComments} 
 &                   ERM & $ 80.4 $ \small{$( 0.2)$} & $ 55.9 $ \small{$( 0.2)$} & $ 61.0 $ \small{$( 2.5)$} \\
 &                 G-DRO & $ 80.1 $ \small{$( 0.1)$} & $ 61.6 $ \small{$( 0.5)$} & $ 64.7 $ \small{$( 1.5)$} \\
 &                    LC & $ 80.7 $ \small{$( 0.1)$} & $ 65.7 $ \small{$( 0.5)$} & $ 67.3 $ \small{$( 0.3)$} \\
 &                   sLA & $ 80.6 $ \small{$( 0.1)$} & $ 65.6 $ \small{$( 0.2)$} & $ 66.3 $ \small{$( 0.9)$} \\
\rowcolor{blue!10}  \cellcolor{white}   &                   CRM & $ 83.7 $ \small{$( 0.1)$} & $ 67.9 $ \small{$( 0.5)$} & $ 70.0 $ \small{$( 0.6)$}  \\
\midrule
    \multirow[c]{5}{*}{NICO++} 
        &                   ERM & $ 85.0 $ \small{$( 0.0)$} & $ 35.3 $ \small{$( 2.3)$} & $ 35.3 $ \small{$( 2.3)$}\\
        &                 G-DRO & $ 84.0 $ \small{$( 0.0)$} & $ 36.7 $ \small{$( 0.7)$} & $ 33.7 $ \small{$( 1.2)$} \\
        &                    LC & $ 85.0 $ \small{$( 0.0)$} & $ 35.3 $ \small{$( 2.3)$} & $ 35.3 $ \small{$( 2.3)$} \\
        &                   sLA & $ 85.0 $ \small{$( 0.0)$} & $ 33.0 $ \small{$( 0.0)$} & $ 35.3 $ \small{$( 2.3)$} \\
\rowcolor{blue!10}  \cellcolor{white}          &                   CRM & $ 84.7 $ \small{$( 0.3)$} & $ 40.3 $ \small{$( 4.3)$} & $ 39.0 $ \small{$( 3.2)$} \\
\bottomrule
\end{tabular}
\caption{{\bf Robustness under compositional shift.} We compare the proposed CRM method to baseline ERM classifier training with no group information, and to robust methods that leverage group labels: G-DRO, LC, and sLA. We report test Average Accuracy and Worst Group Accuracy (WGA) (mean over 3 random seeds), averaged as a group is dropped from training and validation sets. Last column is WGA under the dataset's standard subpopulation shift benchmark, i.e. with no group dropped. All methods have a harder time to generalize when groups are absent from training, but CRM appears consistently more robust. Full results with balanced group accuracy, and baselines IRM, VREx, and Mixup are provided in Table~\ref{tab:main-results-full}, Appendix~\ref{app:add-energy-exps-full-table}.}
\label{tab:main-results}
\end{table}

\begin{table}[t]
    \centering
    \begin{tabular}{l|llllll}
    \toprule
    Method & Waterbirds & CelebA & MetaShift & MultiNLI & CivilComments & NICO++ \\
    \midrule
    CRM ($\hat{B}$)   &  $55.7$ \small{$(1.0)$} &  $58.9$ \small{$(0.4)$} & $58.7$ \small{$(0.6)$} & $30.4$ \small{$(2.6)$} & $52.4$ \small{$(0.7)$} & $31.0$ \small{$(1.0)$}   \\
\rowcolor{blue!10}  \cellcolor{white} CRM  &  $78.7$ \small{$(1.0)$} &  $81.8$ \small{$(0.5)$} & $73.4$ \small{$(0.4)$} & $58.7$ \small{$(1.4)$} & $67.9$ \small{$(0.5)$} & $40.3$ \small{$(4.3)$} \\
    \bottomrule
    \end{tabular}
\caption{{\bf Importance of bias extrapolation.} We report Worst Group Accuracy, averaged as a group is dropped from training and validation (standard error based on 3 random seeds). CRM $(\hat{B})$ is an ablated version of CRM where we use the trained bias $\hat{B}$ instead of the extrapolated bias $B^\star$ mandated by our theory. The extrapolation step appears crucial for robust compositional generalization. Merely adjusting logits based on shifting group prior probabilities does not suffice. For further ablation studies, check~\Cref{tab:full-results-crm-ablation} in~\Cref{supp:crm-ablations}.}
    \label{tab:main-results-crm-ablation}
\end{table}

\clearpage

\section{Related works} 
We briefly describe the relevant prior works, with a detailed discussion in~\Cref{sec: related_works}.

\paragraph{\textbf{Compositional Generalization.}} Compositionality has long been seen as an essential capability   \citep{fodor1988connectionism, hinton1990mapping, plate1991holographic, montague1970pragmatics} on the path to building human-level intelligence. The history of compositionality being too long to cover in detail here, we refer the reader to these surveys \citep{lin2023survey, sinha2024survey}. Most prior works have focused on generative aspect of compositionality, where the model needs to recombine individual distinct factors/concepts and generate the final output in the form of text \citep{gordon2019permutation, lake2023human} or image~\citep{liu2022compositional, wang2024concept}. For image generation in particular, a fruitful line of
work is rooted in additive energy based models~\citep{du2020compositional,
       du2021unsupervised,
       liu2021learning,
       nie2021controllable}, 
which translates naturally to additive diffusion models~\citep{liu2022compositional,su2024compositional}.
Our present work also leverages an additive energy form, but our focus is on learning  classifiers robust under compositional shifts, rather than generative models. 

\paragraph{\textbf{Domain Generalization.}} Generalization under subpopulation shifts, where certain groups or combinations of attributes are underrepresented in the training data, is a well-known challenge in machine learning. GroupDRO~\citep{sagawa2019distributionally} is a prominent method that minimizes the worst-case group loss to improve robustness across groups. IRM~\citep{arjovsky2019invariant} encourages the model to learn invariant representations that perform well across multiple environments. Closely related to our proposed method are the logit adjustment methods, LC~\citep{liu2022avoiding} and sLA~\citep{tsirigotis2024group} that use logit adjustment for group robustness. There are many other interesting approaches that were proposed, see the survey \citep{zhou2022domain} for details. The theoretical guarantees developed for these approaches \citep{,arjovsky2019invariant, rosenfeld2020risks, ahuja2020empirical} require a large diversity in terms of the environments seen at the training time. In our setting, we incorporate inductive biases based on additive energy distributions that help us arrive at provable generalization with limited diversity in the environments. 

\section{Conclusion}

We provide a novel approach (CRM) based on flexible additive energy models for compositionality in discriminative tasks. CRM can provably extrapolate to novel attribute combinations within the discrete affine hull of the training support, where the affine hull grows quickly with the training groups to cover the Cartesian product extension of the training support. Our empirical results demonstrate that the additive energy assumption is sufficiently flexible to yield good classifiers for high-dimensional images, and CRM is able to extrapolate to novel combinations in $ \mathsf{DAff}(\mathcal{Z}^{\mathsf{train}})$, without having to model high-dimensional  $p(x|z)$ nor having to estimate their partition function. CRM is a simple and efficient algorithm that empirically proved consistently more robust to compositional shifts than approaches based on other logit-shifting schemes and GroupDRO.

\textbf{Limitations.} A key limitation of our work is the reliance on labeled attributes, which may not always be available in practice. Prior works~\citep{xrm, tsirigotis2024group} propose to first infer spurious attributes and then use domain generalization methods, in the context of subpopulation shifts. A promising direction for future research is to extend such methods to the compositional shift setting and integrate them with our approach.

Further, based on AED, we provided compositional generalization guarantees that are applicable in both discriminative and generative tasks. However, we only offer a tractable approach using CRM for the discriminative task. Extending these ideas to the generative counterpart, e.g. by starting from~\eqref{eq:test-train-cond-density-rel}, remains a promising direction for future work and could further leverage our theoretical framework.

\newpage

\section*{Acknowledgements}

We thank Yash Sharma for his contribution to early exploration of compositional generalization from a generative perspective, and Vinayak Tantia for having many years ago helped shed light on the challenges posed by compositionality in discriminative training. This research was entirely funded by Meta, in the context of Meta’s AI Mentorship program with Mila. Ioannis Mitliagkas in his role as Divyat Mahajan’s academic advisor, acknowledges support by an NSERC Discovery grant (RGPIN-2019-06512), and a Canada CIFAR AI chair. Pascal Vincent is a CIFAR Associate Fellow in the Learning in Machines \& Brains program. Divyat Mahajan acknowledges support via FRQNT doctoral scholarship (\url{https://doi.org/10.69777/354785}) for his graduate studies.

\bibliographystyle{assets/plainnat}
\bibliography{paper}

\clearpage
\newpage
\beginappendix




\addtocontents{toc}{\protect\setcounter{tocdepth}{3}}

\tableofcontents



\newpage

\section{Further Discussion on Related Works}
\label{sec: related_works}

\paragraph{\textbf{Compositional Generalization.}} Compositionality has long been seen as an important capability on the path to building  \citep{fodor1988connectionism, hinton1990mapping, plate1991holographic, montague1970pragmatics} human-level intelligence.  
The history of compositionality is very long to cover in detail here, refer to these surveys \citep{lin2023survey, sinha2024survey} for more detail. Compositionality is associated with many different aspects, namely systematicity, productivity, substitutivity, localism, and overgeneralization \citep{hupkes2020compositionality}. In this work, we are primarily concerned with  systematicity, which evaluates a model's capability to understand known parts or rules and combine them in new contexts. Over the years, several popular benchmarks have been proposed to evaluate this systematicity aspect of compositionality, \cite{lake2018generalization} proposed the SCAN dataset, \cite{kim2020cogs}  proposed the COGS dataset. These works led to development of several insightful approaches to tackle the challenge of compositionality \citep{lake2023human, gordon2019permutation}. Most of these works on systematicity have largely focused on generative tasks, ~\citep{liu2022compositional, lake2023human, gordon2019permutation, wang2024concept}, i.e., where the model needs to recombine individual distinct factors/concepts and generate the final output in the form of image or text. There has been lesser work on discriminative tasks \citep{nikolaus2019compositional}, i.e., where the model is given an input composed of a novel combination of factors and it has to predict the underlying  novel combination. In this work, our focus is to build an approach that can provably solve these discriminative tasks. 
 
On the theoretical side,  recently, there has been a growing interest to build provable approaches for compositional generalization \citep{wiedemer2023provable, wiedemer2024compositional, brady2023provably, dong2022first, lachapelle2024additive}. These works study models where the labeling function or the decoder is additive over individual features, and  prove generalization guarantees over the Cartesian product of the support of individual features. The ability of a model to generalize to Cartesian products of the individual features is an important form of compositionality, which checks the model's capability to correctly predict in novel circumstances described as combination of contexts seen before. 
\cite{dong2022first} developed results for additive models, i.e., labeling function is additive over individual features.  While in \cite{wiedemer2023provable}, the authors considered a more general model class in comparison to \citet{dong2022first}. The labeling function/decoder in \citep{wiedemer2023provable} takes the form $f(x_1,\cdots, x_n) = C(\psi_1(x_1),\cdots, \psi_n(x_n))$. However, they require a strong assumption, where the learner needs to know the function $C$ that is used to generate the data. \cite{lachapelle2024additive, brady2023provably} extended the results from \cite{dong2022first} to the unsupervised setting. \cite{lachapelle2024additive, brady2023provably} are inspired by the success of object-centric models and show additive decoders enable generative models (autoencoders) to achieve Cartesian product extrapolation. While these works take promising and insightful first steps for provable compositional guarantees, the assumption of additive deterministic decoders (labeling functions) may come as quite restrictive. In particular a given attribute combination can then only correspond to a \emph{unique} observation, produced by a very limited interaction between generative factors, not to a rich distribution of observations. By contrast an additive energy model can associate an almost arbitrary distribution over observations to a given set of attributes. Hence, we take inspiration independent mechanisms principle \citep{janzing2010causal, parascandolo2018learning} for our setting based  on additive energy models. In the spirit of this principle, we think of each factor impacting the final distribution through an independent function, where independence is in the algorithmic sense and not the statistical sense. Based on this more realistic assumption of additive energy, our goal is to develop an approach that provably enables \emph{zero-shot compositional generalization in discriminative tasks}, where the model needs to robustly predict never seen before factor combinations that the input is composed of.  These additive energy distributions have also been used in generative compositionality~\citep{liu2022compositional} but not in discriminative compositionality.

Finally, in another line of work~\citep{schug2023discovering}, the authors consider compositionality in the task space and develop an approach that achieves provable compositional guarantees over this task space and empirically outperforms meta-learning approaches such as MAML and ANIL.  Specifically, they operate in a student-teacher framework, where each task has a latent code that specifies the weights for different modules that are active for that task. 

\paragraph{\textbf{Domain Generalization.}} Generalization under subpopulation shifts, where certain groups or combinations of attributes are underrepresented in the training data, is a well-known challenge in machine learning. Group Distributionally Robust Optimization (GroupDRO) \citep{sagawa2019distributionally} is a prominent method that minimizes the worst-case group loss to improve robustness across groups.  Invariant Risk Minimization (IRM) \cite{arjovsky2019invariant} encourages the model to learn invariant representations that perform well across multiple environments. Perhaps the simplest methods are SUBG and RWG \cite{idrissi2022simple}, which focus on constructing a balanced subset or reweighting examples to minimize or eliminate spurious correlations. There are many other interesting approaches that were proposed, see the survey for details \cite{zhou2022domain}.
The theoretical guarantees developed for these approaches \citep{rosenfeld2020risks,arjovsky2019invariant, ahuja2020empirical} require a large diversity in terms of the environments seen at the training time. In our setting, we incorporate inductive biases based on additive energy distributions that help us arrive at provable generalization with limited diversity in the environments. 

Closely related to our proposed method are the logit adjustment methods \cite{kang2019decoupling, menon2020long, ren2020balanced, bayat2024pitfalls} used in robust classification.
\cite{kang2019decoupling} introduced Label-Distribution-Aware Margin (LDAM) loss for long-tail learning, proposing a method that adjusts the logits of a classifier based on the class frequencies in the training set to counteract bias towards majority classes.
Similarly, \cite{menon2020long} and \cite{ren2020balanced} (Balanced Softmax), modify the standard softmax cross-entropy loss to account for class imbalance by shifting the logits according to the prior distribution over the classes.
\cite{bayat2024pitfalls} shifts the logits according to the held-out prediction of examples to prevent memorization towards robust generalization.
Closest to our work are the Logit Correction (LC) \citep{liu2022avoiding} and Supervised Logit Adjustment (sLA) \citep{tsirigotis2024group} methods that use logit adjustment for group robustness. 
LC adjusts logits based on the joint distribution of environment and class label, reducing reliance on spurious features in imbalanced training sets. When environment annotations are unknown, a second network infers them.
Supervised Logit Adjustment (sLA) adjusts logits according to the conditional distribution of classes given the environment. In the absence of environment annotations, Unsupervised Logit Adjustment (uLA) uses self-supervised learning (SSL) to pre-train a model for general feature representations, then derives a biased network from this pre-trained model to infer the missing environment annotations.

\section{Additive Energy Distributions versus Additive Decoder Models: A Closer Look}
\label{sec:aed_vs_ade}

In this section, we will explore examples demonstrating the flexibility of additive energy distributions and how they model complex interactions between attributes. Note that AED does not imply additive interactions in the data space as in additive decoders~\citep{brady2023provably, lachapelle2024additive}. Instead it models the AND operation between attributes, as illustrated below via examples.

\begin{itemize}
    \item Consider a set of images, each containing a distinct object that varies in shape, size, and color. At any  pixel, it is unlikely that shape, color, and size attributes interact additively, rather their interactions are complex which cannot be captured via additive decoders. However, under AED, interactions are modeled via an energy component per attribute: one detecting shape, AND another detecting color, AND a third detecting size. Together, these energy terms define the distribution of images conditioned on the attributes, i.e., object's shape, color, and size.
    \item Another example from a different data modality is the CivilComments benchmark~\citep{borkan2019nuanced}, where the attributes toxic language (class label) and demographic identity (spurious attribute) interact non-trivially in text space. However, under additive energy distributions, we can model their interactions via an energy component that checks whether the language is toxic, and another energy component checks the demographic identity.
\end{itemize}

We now provide a simple example that can be modeled both by an additive decoder and additive energy distribution. Consider a two dimensional $x$, where each dimension of $x$ is controlled by a different attribute. 
\begin{equation}
    x = (x_1, x_2) = (g(z_1), h(z_2)) \\ 
\end{equation}

Observe that we can express the above in terms of an additive decoder $x = (g(z_1),0) + (0, h(z_2))$. We can also express the above in terms of an AED as follows using Dirac delta functions $\delta$ as follows. 

$p(x|z) = \delta(x - (g(z_1), h(z_2))) = \delta(x_1-g(z_1)) \delta(x_2-h(z_2)) = \exp\big(\log(\delta(x_1-g(z_1))) + \log(\delta(x_2-g(z_2))) \big)$. 

The above example is meant to illustrate the point that if there are different subsets of the pixels, where each subset is controlled or impacted by a different set of attributes, then they can be modeled via an AED or an additive decoder equivalently. 

\section{CRM Implementation Details}
\label{app:detailed-algo}

\subsection{Algorithm for Two Attribute Case}

\begin{algorithm*}
  \caption{Compositional Risk Minimization (CRM) for $2$ Attribute Case}
  \label{algo.crm}
  \textbf{Input:} training set $\mathcal{D}^\mathrm{train}$ with examples $(x, y, a)$, where $y$ is the class to predict and $a$ is an attribute spuriously correlated with $y$ \\
  \textbf{Output:} classifier parameters $\theta$, $W$, $B^\star$.

  \begin{itemize}[leftmargin=0.5cm,itemsep=0pt]
    \item Let $L, B \in \mathbb{R}^{d_y \times d_a}$ be the log prior and the bias terms.
    \item Define logits: $F_{L,B}(x) := -((W \cdot \phi(x; \theta))_{1:d_y} + (W \cdot \phi(x; \theta))_{d_y+1:d_y+d_a}^\top) + L - B$
    \item Define log probabilities: $\log p(y,a | x; \theta, W, L, B) := (F_{L,B}(x) - \mathrm{logsumexp}(F_{L,B}(x)))_{y,a}$
  \end{itemize}

  \textbf{Training:}
  \begin{itemize}[leftmargin=0.5cm,itemsep=0pt]
    \item Estimate log prior $L^\mathrm{train}$ from $\mathcal{D}^\mathrm{train}$; \hfill $L^\mathrm{train}_{y,a} \leftarrow -\infty$ if $(y,a)$ absent from $\mathcal{D}^\mathrm{train}$.
    \item Optimize $\theta$, $W$, and $B$ to maximize the log-likelihood over $\mathcal{D}^\mathrm{train}$: \\
    $\theta, W, B \leftarrow \arg\max_{\theta, W, B} \sum_{(x, y, a) \in \mathcal{D}^\mathrm{train}} \log p(y,a  |  x; \theta, W, L^\mathrm{train}, B)$
    \item Extrapolate bias: $B^\star \leftarrow \log \left( \frac{1}{n} \sum_{x \in \mathcal{D}^\mathrm{train}} \exp(F_{0,0}(x) - \mathrm{logsumexp}(F_{L^\mathrm{train}, B}(x))) \right)$
  \end{itemize}

  \textbf{Inference on test point $x$:} \hspace*{1cm} 
  \begin{itemize}[leftmargin=0.5cm,itemsep=0pt]
    \item Compute group probabilities, using $B^\star$, and $L^\mathrm{unif}=\log \frac{1}{d_y d_a}$ aiming for shift to uniform prior:\\
    $q(y, a | x) \leftarrow \exp(\log p(y,a | x; \theta, W, L^\mathrm{unif}, B^\star))$
    \item Marginalize over $a$ to get  class probabilities: 
    $q(y | x) \leftarrow \sum_a q(y, a | x)$
  \end{itemize}
\end{algorithm*}

\subsection{PyTorch Implementation for Two Attribute Case}
\label{app:pseduo-code}

\lstinputlisting[language=Python, xleftmargin=4em,]{pseudo_code.py}

\clearpage
\newpage

\section{Proofs}
\label{app:proof-section}

\paragraph{Remark on proofs} 
We want to emphasize that the proofs developed here are quite different from related works on compositionality \citep{dong2022first, wiedemer2023provable}. The foundation of proofs is built on a new mathematical object, discrete affine hull. The proof of Theorem~\ref{thm:disc_exp} cleverly exploits  properties of softmax and discrete affine hulls to show how we can learn the correct distribution without involving the intractable partition function in learning. The proof of Theorem~\ref{thm4}, uses fundamental ideas from randomized algorithms to arrive at the probabilistic extrapolation guarantees. 

We start with a basic lemma. 

\begin{lemma}
\label{lemma_lin}
   If $z' \in \mathsf{DAff}(\mathcal{Z}^{\mathsf{train}})$, i.e., $\sigma(z') = \sum_{z\in \mathcal{Z}^{\mathsf{train}}}\alpha_z \sigma(z)$,  where $\braket{1,\alpha_z}=1$, then $\braket{\sigma(z'), E(x)} = \sum_{z \in \mathcal{Z}^{\mathsf{train}}} \alpha_z \braket{\sigma(z), E(x)}$. 
\end{lemma}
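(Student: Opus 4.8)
The plan is to invoke the bilinearity of the dot product directly; the statement is essentially a restatement of linearity once the definition of the discrete affine hull is unpacked. By the definition of $\mathsf{DAff}(\mathcal{Z}^{\mathsf{train}})$, membership $z' \in \mathsf{DAff}(\mathcal{Z}^{\mathsf{train}})$ supplies a coefficient vector $\alpha$, indexed by $z \in \mathcal{Z}^{\mathsf{train}}$, such that $\sigma(z') = \sum_{z\in \mathcal{Z}^{\mathsf{train}}}\alpha_z \sigma(z)$. I would substitute this expression for $\sigma(z')$ directly into the left-hand side $\braket{\sigma(z'), E(x)}$.

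First I would write $\braket{\sigma(z'), E(x)} = \braket{\sum_{z\in \mathcal{Z}^{\mathsf{train}}}\alpha_z \sigma(z),\, E(x)}$. Since the dot product $\braket{\cdot,\cdot}$ on $\mathbb{R}^{md}$ is linear in its first argument, I can pull the finite sum and the scalars $\alpha_z$ outside, obtaining $\sum_{z\in \mathcal{Z}^{\mathsf{train}}}\alpha_z \braket{\sigma(z), E(x)}$, which is exactly the claimed identity. This holds pointwise for every fixed $x$, since for each $x$ the quantity $E(x)$ is simply a fixed vector in $\mathbb{R}^{md}$, and the manipulation is purely algebraic in the first slot of the inner product.

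I expect no genuine obstacle here: the result is a one-line consequence of linearity of the inner product. It is worth remarking that the affine normalization constraint $\braket{1,\alpha}=1$ plays no role in establishing \emph{this} identity — linearity holds for arbitrary coefficients, and the constraint is only needed elsewhere (for instance, to make the additive constant $\sum_z \alpha_z C(z)$ behave correctly and be absorbed into the partition function in the proof of Theorem~\ref{theorem1_cat}). The only care required is notational: ensuring the summation index set matches the one appearing in the definition of $\mathsf{DAff}(\mathcal{Z}^{\mathsf{train}})$, and that $\alpha$ is consistently read as indexed by $z \in \mathcal{Z}^{\mathsf{train}}$.
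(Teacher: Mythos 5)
Your proposal is correct and is exactly the paper's argument: substitute $\sigma(z') = \sum_{z}\alpha_z \sigma(z)$ into the first slot of the dot product and apply linearity. Your side remark that the affine constraint $\braket{1,\alpha}=1$ is not needed for this particular identity is also accurate; the paper likewise uses it only in the downstream results.
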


\begin{proof}   
    $ \braket{\sigma(z'),E(x)}
    =\braket{\sum_{z \in \mathcal{Z}^{\mathsf{train}}}\alpha_z \sigma(z),E(x)}
    = \sum_{z\in \mathcal{Z}^{\mathsf{train}}}\alpha_z \braket{\sigma(z), E(x)}.$

\end{proof}

\subsection{Proof for Theorem~\ref{theorem1_cat}: Extrapolation of Conditional Density}
\label{app:proof-crm-gen}

\genexp*

\begin{proof}

We start by expanding the expressions for true and estimated log densities below
\begin{equation}
\begin{split}
       -\log\Big[ p(x|z)\Big] = \braket{\sigma(z), \E(x)} + \log(\mathbb{Z}(z)), \\
       -\log\Big[ \hat{p}(x|z)\Big] = \braket{\sigma(z), \hat{E}(x)} + \log(\hat{\mathbb{Z}}(z)). 
\end{split}
\end{equation}
We equate these densities for the training attributes $z \in \mathcal{Z}^{\mathsf{train}}$.  For a fixed $z\in \mathcal{Z}^{\mathsf{train}}$,  we obtain that for all $x \in \mathbb{R}^n$ 
\begin{equation}
    \braket{\sigma(z), \hat{E}(x)} = \braket{\sigma(z), E(x)} + C(z),
    \label{eqn: train_energy}
\end{equation}

where $C(z) = \log\big(\mathbb{Z}(z)/\hat{\mathbb{Z}}(z)\big)$.  Since $z' \in \mathsf{DAff}(\mathcal{Z}^{\mathsf{train}})$, we can write $z' = \sum_{z\in \mathcal{Z}^{\mathsf{train}}}\alpha_z z$, $\braket{1,\alpha_z}=1$.    From Lemma~\ref{lemma_lin}, we know that  $\braket{\sigma(z'), E(x)} =  \sum_{z\in \mathcal{Z}^{\mathsf{train}}}\alpha_z\braket{\sigma(z), \hat{E}(x)}$.

We use this decomposition and \eqref{eqn: train_energy} to arrive at the key identity below. For all $x\in \mathbb{R}^n$
\begin{equation} 
\begin{split}
    \braket{\sigma(z'),\hat{E}(x)}  & = \sum_{z\in \mathcal{Z}^{\mathsf{train}}} \alpha_z\braket{\sigma(z),\hat{E}(x)} \\ 
    & =  \sum_{z\in \mathcal{Z}^{\mathsf{train}}}\alpha_z (\braket{\sigma(z), E(x)}  + C(z))\\ 
    & =  \Big( \sum_{z\in \mathcal{Z}^{\mathsf{train}}}\alpha_z (\braket{\sigma(z), E(x)} \Big) + \Big( \sum_{z \in \mathcal{Z}^{\mathsf{train}}}\alpha_{z}C(z) 
    \Big) \\ 
   & =  \braket{\sigma(z'), E(x)} + \sum_{z\in \mathcal{Z}^{\mathsf{train}}} \alpha_z C(z)   \\ 
     \label{eqn: test_energy}
\end{split}
\end{equation}

Using the above equality, we infer the following.
\begin{equation}
\begin{split}
    \hat{p}(x|z') & =  \frac{1}{\hat{\mathbb{Z}}(z')}\exp\Big(-\braket{\sigma(z'),\hat{E}(x)}\Big)  \\ 
    &= \frac{1}{\hat{\mathbb{Z}}(z')}\exp\Big(-\braket{\sigma(z'), E(x)} - \sum_{z\in \mathcal{Z}^{\mathsf{train}}} \alpha_z C(z) \Big) 
    \label{eqn: p_hat}
\end{split}
\end{equation}
We now use the fact that density integrates to one for continuous random variables (or alternatively the probability sums to one for discrete random variables).
\begin{align}
\int \hat{p}(x|z') dx &= 1 \nonumber \\
\int  \frac{1}{\hat{\mathbb{Z}}(z')} \exp\Big(-\braket{\sigma(z'), E(x)} - \sum_{z\in \mathcal{Z}^{\mathsf{train}}} \alpha_z C(z) \Big) dx &= 1 \nonumber \\
\frac{1}{\hat{\mathbb{Z}}(z')} 
\exp \Big(- \sum_{z\in \mathcal{Z}^{\mathsf{train}}} \alpha_z C(z) \Big)
 \int   \exp\Big(-\braket{\sigma(z'), E(x)} \Big) dx   &= 1 \nonumber \\
 \frac{1}{\hat{\mathbb{Z}}(z')} 
\exp \Big(- \sum_{z\in \mathcal{Z}^{\mathsf{train}}} \alpha_z C(z) \Big)
\mathbb{Z}(z')   &= 1 \nonumber \\
       \frac{1}{\hat{\mathbb{Z}}(z')}\exp\Big(- \sum_{z\in \mathcal{Z}^{\mathsf{train}}} \alpha_z C(z)\Big) &= \frac{1}{\mathbb{Z}(z')} 
\label{eqn: partition}
\end{align}

Finally, we substitute \eqref{eqn: partition} into \eqref{eqn: p_hat} to obtain our result.
   \begin{equation}
       \hat{p}(x|z') = \frac{1}{\mathbb{Z}(z')} \exp\Big(-\braket{\sigma(z'), E(x)}\Big) = p(x|z'), \forall x \in \mathbb{R}^n
   \end{equation}

\end{proof}

\subsection{Proof for Theorem~\ref{thm:disc_exp}: Extrapolation of CRM}
\label{app:proof-crm-disc}

\discexpd*
\begin{proof}
Since $q$ follows compositional shifts, \begin{equation}\log q(x|z)= \log p(x|z) = -\braket{\sigma(z), E(x)} - \log\mathbb{Z}(z)\label{eqn1:proof_disc}\end{equation} 

We can write it as  $-\braket{\sigma(z), E(x)} = \log p(x|z) + \log \mathbb{Z}(z) $. 

Now consider $z' \in \mathsf{DAff}(\mathcal{Z}^{\mathsf{train}})$, which can be expressed as $z'$ as $\sigma(z') = \sum_{z \in \mathcal{Z}^{\mathsf{train}}}\alpha_z \sigma(z)$, where $\braket{1,\alpha_z}=1$. 

We use \eqref{eqn1:proof_disc} and show that the partition function at $z'$ can be expressed as affine combination of partition of the individual points and a correction term. We obtain the following condition $\forall z' \in \mathcal{Z}^{\mathsf{test}} $, where recall $\mathcal{Z}^{\mathsf{test}} \subseteq \mathsf{DAff}(\mathcal{Z}^{\mathsf{train}})$, 
\begin{equation}
\begin{split}
    \log\big(\mathbb{Z}(z')\big) & = \log{\int}\exp\big(-\braket{\sigma(z'), E(x)}\big) \, dx \\
    & = \log{\int}\exp\Big(-\sum_{z\in \mathcal{Z}^{\mathsf{train}}}\alpha_z\braket{\sigma(z), E(x)}\Big) \, dx \\
    &=  \log{\int}\exp\Big(\sum_{z\in \mathcal{Z}^{\mathsf{train}}}\alpha_z\big(\log p(x|z) + \log \mathbb{Z}(z)\big)\Big) \, dx \\ 
    & = \sum_{z\in \mathcal{Z}^{\mathsf{train}}}\alpha_z \log\mathbb{Z}(z) + \log{\int}\exp\Big(\sum_{z\in \mathcal{Z}^{\mathsf{train}}}\alpha_z\log p(x|z)\Big) \, dx  \\ 
\end{split}
\label{eqn2:proof_disc}
\end{equation}

Denote the latter term in the above expression as follows.
\begin{equation}
    R(\{\alpha_z\}_{z\in \mathcal{Z}^{\mathsf{train}}}) = \log {\int} \exp\Big(\sum_{z\in \mathcal{Z}^{\mathsf{train}}}\alpha_z \log p(x|z) \Big) \, dx
    \label{eqn3:proof_disc}
\end{equation}

We now simplify $\log\big(q(x|z')\big)$ using the property of partition function from \eqref{eqn2:proof_disc} below. $\forall z' \in \mathcal{Z}^{\mathsf{test}},$
\begin{equation}
\begin{split}
    &\log\big(q(x|z')\big)  =  -\braket{\sigma(z'), E(x)} - \log \mathbb{Z}(z') \\ 
    &= \sum_{z\in \mathcal{Z}^{\mathsf{train}}}\alpha_z \Big(\log p(x|z) + \log \mathbb{Z}(z)\Big)- \log \mathbb{Z}(z') \\
    & = \sum_{z\in \mathcal{Z}^{\mathsf{train}}}\alpha_z \log p(x|z) + \sum_{z\in \mathcal{Z}^{\mathsf{train}}} \alpha_z\log \mathbb{Z}(z) - \sum_{z\in \mathcal{Z}^{\mathsf{train}}}\alpha_z \log \mathbb{Z}(z) -R\big(\{\alpha_z\}_{z\in \mathcal{Z}^{\mathsf{train}}}\big)   \\ 
    & = \sum_{z\in \mathcal{Z}^{\mathsf{train}}}\alpha_z \log p(x|z) - R\big(\{\alpha_z\}_{z\in \mathcal{Z}^{\mathsf{train}}}\big)
\end{split}
\label{eqn_pxz_1}
\end{equation}

We now simplify the first term in the above expression, i.e., $ \sum_{z\in \mathcal{Z}^{\mathsf{train}}}\alpha_z \log p(x|z) $, in  terms of $p(z|x)$. 
\begin{equation}
\begin{split}
    \sum_{z\in \mathcal{Z}^{\mathsf{train}}}\alpha_z\big(\log\big(p(x|z)\big) & =\sum_{z\in \mathcal{Z}^{\mathsf{train}}}\alpha_z\log\Bigg(\frac{p(z|x)p(x)}{p(z)}\Bigg) \\ 
    & = \sum_{z\in \mathcal{Z}^{\mathsf{train}}}\alpha_z\Big(\log p(z|x) -\log p(z)\Big) + \log p(x)
    \label{eqn_pxz_2}
\end{split}
\end{equation}

Similarly, $R(\{\alpha_z\}_{z\in \mathcal{Z}^{\mathsf{train}}})$ can be phrased in terms of $p(z|x)$ as follows. 
\begin{equation}
    \begin{split}
         R\big(\{\alpha_z\}_{z\in \mathcal{Z}^{\mathsf{train}}}\big)   & = \log {\int} \exp\Big(\sum_{z\in \mathcal{Z}^{\mathsf{train}}}\alpha_z \log p(x|z)\Big) \, dx\\
        & = -\sum_{z\in \mathcal{Z}^{\mathsf{train}}}\alpha_z \log p(z) + \log\Big(\mathbb{E}_{x\sim p(x)}\Big[\exp\Big(\sum_{z \in \mathcal{Z}^{\mathsf{train}}}\alpha_z\log p(z|x) \Big) \Big]\Big) \\
        & = -\sum_{z\in \mathcal{Z}^{\mathsf{train}}}\alpha_z \log  p(z) + S\big(\{\alpha_z\}_{z\in \mathcal{Z}^{\mathsf{train}}}\big), 
    \end{split}
    \label{eqn_pxz_3}
\end{equation}

where $S\big(\{\alpha_z\}_{z\in \mathcal{Z}^{\mathsf{train}}}\big) = \log\Big(\mathbb{E}_{x\sim p(x)}\Big[\exp\Big(\sum_{z \in \mathcal{Z}^{\mathsf{train}}}\alpha_z\log p(z|x) \Big) \Big]\Big)$ and $\mathbb{E}_{x\sim p(x)}$ is the expectation w.r.t distribution $p(x)$. 
We use \eqref{eqn_pxz_2}, \eqref{eqn_pxz_3} to simplify \eqref{eqn_pxz_1} as follows.$\forall z' \in \mathcal{Z}^{\mathsf{test}},$
\begin{equation}
\begin{split}
  &  \log q(x|z') = \sum_{z\in \mathcal{Z}^{\mathsf{train}}}\alpha_z\log p(z|x)  - S\big(\{\alpha_z\}_{z\in \mathcal{Z}^{\mathsf{train}}}\big) + \log p(x) \\
   & \log\Big(\frac{q(z'|x)q(x)}{q(z')}\Big) = \sum_{z\in \mathcal{Z}^{\mathsf{train}}}\alpha_z\log p(z|x) - S\big(\{\alpha_z\}_{z\in \mathcal{Z}^{\mathsf{train}}}\big) + \log p(x) \\
   & \log\big(q(z'|x)\big) = \sum_{z\in \mathcal{Z}^{\mathsf{train}}}\alpha_z\Big(q(z') + \log p(z|x)\Big) - S\big(\{\alpha_z\}_{z\in \mathcal{Z}^{\mathsf{train}}}\big) + \log\Big(\frac{p(x)}{q(x)}\Big) \\
\end{split}
\label{eqn: qxz}
\end{equation}

We now use the translation invariance of softmax to obtain
\begin{equation}
\begin{split}
   &q(z'|x) = \mathsf{Softmax}\Big(\log q(z')  +\sum_{z\in \mathcal{Z}^{\mathsf{train}}}\alpha_z \log p(z|x)   - S\big(\{\alpha_z\}_{z\in \mathcal{Z}^{\mathsf{train}}}\big)\Big) \\
  & q(z'|x) = \mathsf{Softmax}\Big(\log q(z')  +\sum_{z\in \mathcal{Z}^{\mathsf{train}}}\alpha_z \log p(z|x)   - \log\Big(\mathbb{E}_{x\sim p(x)}\Big[\exp\Big(\sum_{z \in \mathcal{Z}^{\mathsf{train}}}\alpha_z\log p(z|x) \Big) \Big]\Big)\Big)
\end{split}
\label{eqn: qxz1}
\end{equation}

To avoid cumbersome notation, we took the liberty to show only one input to softmax, other inputs bear the same parametrization, they are computed at other $z$'s.  From the above equation it is clear that if the learner knows the marginal distribution over the groups at test time, i.e., $q(z)$ and estimates $p(z|x)$ for all $z$'s in the training distribution's support, i.e., $\mathcal{Z}^{\mathsf{train}}$, then the learner can successfully extrapolate to $q(z'|x)$. 

Let us now use the \emph{additive energy classifier} of the form we defined in \eqref{eqn: phat} and whose energy $\hat{E}$ and bias $\hat{B}$ we optimized (\eqref{eqn:CRM1}) to match $p(z|x)$, so that:
\begin{equation*}{p}(z|x) = \frac{\exp\bigg(-\braket{\sigma(z),\hat{E}(x)} + \log \hat{p}(z) - \hat{B}(z)\bigg)}{\sum_{\tilde{z}\in \mathcal{Z}^{\mathsf{train}}} \exp\bigg(-\braket{\sigma(\tilde{z}),\hat{E}(x)} + \log \hat{p}(\tilde{z}) - \hat{B}(\tilde{z})\bigg)},
\end{equation*}

Consequently,
\begin{equation}
\begin{split}
   &\sum_{z\in \mathcal{Z}^{\mathsf{train}}}\alpha_z \log p(z|x)  \\ 
   & = \left( \sum_{z \in \mathcal{Z}^{\mathsf{train}}}\alpha_z\Big(-\braket{\sigma(z),\hat{E}(x)} + \log p(z) -\hat{ B}(z)\Big) \right)
   - \log\Big(\sum_{\tilde{z}\in \mathcal{Z}^{\mathsf{train}}}\exp\Big(-\braket{\sigma(\tilde{z}),\hat{E}(x)} + \log p(\tilde{z}) - \hat{B}(\tilde{z})\Big)\Big)
\end{split}
\label{eqn: alphapz}
\end{equation}
where we used the property that $\braket{1,\alpha_z} =1$.

We now use this to simplify simplify the last term ($\mathbb{E}_{x\sim p(x)}\Big[\exp\Big(\sum_{z \in \mathcal{Z}^{\mathsf{train}}}\alpha_z\log\big(p(z|x)\big)\Big) \Big]\Big)$) of \eqref{eqn: qxz1}:
\begin{equation}
    \begin{split}
& \log \left( \mathbb{E}_{x\sim p(x)}\Big[\exp\Big(\sum_{z \in \mathcal{Z}^{\mathsf{train}}}\alpha_z\log p(z|x) \Big) \Big] \right) \\ 
&  =  \log  \left(   \mathbb{E}_{x\sim p(x)}\Bigg[\frac{\exp\Big(\sum_{z \in \mathcal{Z}^{\mathsf{train}}}\alpha_z\Big(-\braket{\sigma(z),\hat{E}(x)} + \log p(z)  - \hat{B}(z)\Big)\Big)}{\Big(\sum_{\tilde{z}\in \mathcal{Z}^{\mathsf{train}}}\exp\Big(-\braket{\sigma(\tilde{z}),\hat{E}(x)} + \log p(\tilde{z}) - \hat{B}(\tilde{z})\Big)}\Bigg] \right)  \\ 
&  = \log \left(\mathbb{E}_{x\sim p(x)}\Bigg[\frac{\exp\Big(\sum_{z \in \mathcal{Z}^{\mathsf{train}}}\alpha_z\Big(-\braket{\sigma(z),\hat{E}(x)}\Big)}{\Big(\sum_{\tilde{z}\in \mathcal{Z}^{\mathsf{train}}}\exp\Big(-\braket{\sigma(\tilde{z}),\hat{E}(x)} + \log p(\tilde{z}) - \hat{B}(\tilde{z})\Big)}\Bigg]\exp\Big(\sum_{z \in \mathcal{Z}^{\mathsf{train}}}\alpha_z\Big(\log p(z)  - \hat{B}(z)\Big)\Big)\right) \\ 
&  = \log \left(\mathbb{E}_{x\sim p(x)}\Bigg[\frac{\exp\Big(\sum_{z \in \mathcal{Z}^{\mathsf{train}}}\alpha_z\Big(-\braket{\sigma(z),\hat{E}(x)}\Big)}{\Big(\sum_{\tilde{z}\in \mathcal{Z}^{\mathsf{train}}}\exp\Big(-\braket{\sigma(\tilde{z}),\hat{E}(x)} + \log p(\tilde{z}) - \hat{B}(\tilde{z})\Big)}\Bigg]\right)
+ \sum_{z \in \mathcal{Z}^{\mathsf{train}}}\alpha_z\Big(\log p(z)  - \hat{B}(z)\Big) \\ 
&  = \log \left(\mathbb{E}_{x\sim p(x)}\Bigg[\frac{\exp \Big(-\braket{\sigma(z'),\hat{E}(x)}\Big)}{\Big(\sum_{\tilde{z}\in \mathcal{Z}^{\mathsf{train}}}\exp\Big(-\braket{\sigma(\tilde{z}),\hat{E}(x)} + \log p(\tilde{z}) - \hat{B}(\tilde{z})\Big)}\Bigg]\right)
+ \sum_{z \in \mathcal{Z}^{\mathsf{train}}}\alpha_z\Big(\log p(z)  - \hat{B}(z)\Big) \\ 
 & = B^\star(z') + \sum_{z \in \mathcal{Z}^{\mathsf{train}}}\alpha_z\Big(\log p(z) - \hat{B}(z)\Big) 
    \end{split}
    \label{eqn: exppz}
\end{equation}

where we used Lemma~\ref{lemma_lin}, and $B^\star$ is as defined in \eqref{eqn: qz}. 

Let us also define
$c(x) =\log\Big(\sum_{\tilde{z}\in \mathcal{Z}^{\mathsf{train}}}\exp\Big(-\braket{\sigma(\tilde{z}),\hat{E}(x)} + \log p(\tilde{z}) - \hat{B}(\tilde{z})\Big)\Big)$ so that we can 
express \eqref{eqn: alphapz} as follows:
\begin{equation}
\label{eqn: logexp3}
   \sum_{z\in \mathcal{Z}^{\mathsf{train}}}\alpha_z \log p(z|x)  \\ 
   = \left( \sum_{z \in \mathcal{Z}^{\mathsf{train}}}\alpha_z\Big(-\braket{\sigma(z),\hat{E}(x)} + \log p(z) -\hat{ B}(z)\Big) \right) 
   - c(x)
\end{equation}

Subtracting \eqref{eqn: exppz} from \eqref{eqn: logexp3} we get:
\begin{align}
\label{eqn: diff4}
   & \sum_{z\in \mathcal{Z}^{\mathsf{train}}}\alpha_z \log p(z|x) 
  -  \log \left( \mathbb{E}_{x\sim p(x)}\Big[\exp\Big(\sum_{z \in \mathcal{Z}^{\mathsf{train}}}\alpha_z\log p(z|x) \Big) \Big] \right)
   \nonumber \\ 
   = & \sum_{z \in \mathcal{Z}^{\mathsf{train}}}\alpha_z\Big(-\braket{\sigma(z),\hat{E}(x)} + \log p(z) -\hat{ B}(z)\Big) 
   - c(x) \nonumber \\
   & - B^\star(z') - \sum_{z \in \mathcal{Z}^{\mathsf{train}}}\alpha_z\Big(\log p(z) - \hat{B}(z)\Big) \nonumber \\ 
   = & \sum_{z \in \mathcal{Z}^{\mathsf{train}}}\alpha_z\Big(-\braket{\sigma(z),\hat{E}(x)} \Big) 
   - c(x) 
   - B^\star(z')
   \nonumber \\   
   = & -\braket{\sigma(z'),\hat{E}(x)} 
   - c(x) 
   - B^\star(z')   
\end{align}

Substituting this inside \eqref{eqn: qxz1} yields
\begin{equation}
\begin{split}    
q(z'|x) & = \mathsf{Softmax}\Big(\log q(z') -\braket{\sigma(z'),\hat{E}(x)}    - c(x) 
   - B^\star(z')  \Big) \\
   & = 
\mathsf{Softmax}\Big(-\braket{\sigma(z'),\hat{E}(x)} 
+ \log q(z') - B^\star(z')  \Big)    
\end{split}
\end{equation}
where we removed the $c(x)$ term as softmax is invariant to addition of terms that do not depend on $z'$. 

If $\hat{q}(z')=q(z'), \forall z' \in \mathcal{Z}^{\mathsf{test}}$, then the expression in RHS corresponds to $\hat{q}(z'|x)$, as we had defined it in \eqref{eqn:hatq}, before stating our theorem.
Thus $q(z'|x) = \hat{q}(z'|x)$. This completes the proof. 

\end{proof}

\subsection{Growth of Discrete Affine Hull for Two Attribute Case}
\label{app:proof-affine-hull-growth-2}

Before presenting the proof of Theorem~\ref{thm4}, we specifically deal with the case of $m=2$ attributes and present a proof guided with visual intuitions. We believe this proof is easier to understand and helps build intuition for a reader to better grasp the proof in the next section for the general case. We first establish some basic lemmas. In the first lemma below, we consider a setting with two attributes, where each attribute takes two possible values, i.e., $m=2$ and $d=2$. In this setting there are four possible one-hot vectors  $z^1,z^2, z^3, z^4$.  We first show that each $z^i$ can be expressed as an affine combination of the remaining three.

\begin{lemma}
\label{lemma1}
If $m=2, d=2$, then there are four possible concatenated one-hot vectors $z$ denoted $z^1,z^2, z^3, z^4$. Each $z^i$ can be expressed as an affine combination of the remaining.  
\end{lemma}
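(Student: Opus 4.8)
The plan is to exhibit the relevant affine combinations explicitly and then to explain the structural reason behind them. First I would write out the four concatenated one-hot vectors: taking attribute values in $\{1,2\}$ and associating $z^1,z^2,z^3,z^4$ with $(1,1),(1,2),(2,1),(2,2)$, we have $\sigma(z^1)=[1,0,1,0]^\top$, $\sigma(z^2)=[1,0,0,1]^\top$, $\sigma(z^3)=[0,1,1,0]^\top$, and $\sigma(z^4)=[0,1,0,1]^\top$. The single key observation is that the two diagonal pairs of a $2\times 2$ grid have matching one-hot sums, namely $\sigma(z^1)+\sigma(z^4)=[1,1,1,1]^\top=\sigma(z^2)+\sigma(z^3)$, which I would rewrite as the linear dependency $\sigma(z^1)-\sigma(z^2)-\sigma(z^3)+\sigma(z^4)=\mathbf{0}$.

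Rearranging this single identity four ways yields, for each $i$, an expression of $\sigma(z^i)$ as a combination of the remaining three with coefficients $(+1,+1,-1)$ in some order; for instance $\sigma(z^1)=\sigma(z^2)+\sigma(z^3)-\sigma(z^4)$, which is exactly the Waterbirds computation shown earlier in the text. Since $1+1-1=1$, the coefficients sum to one, so by definition each $z^i$ lies in $\mathsf{DAff}(\{z^j : j\neq i\})$, which is the claim.

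The conceptual reason I would emphasize, to show that this is structural rather than a lucky coincidence, is the block-normalization of $\sigma$: every concatenated one-hot vector lies in the $2$-dimensional affine flat $H=\{v\in\mathbb{R}^4 : v_1+v_2=1,\ v_3+v_4=1\}$. The four $\sigma(z^i)$ are the vertices of a unit square inside $H$; any three of them are affinely independent (no three vertices of a square are collinear), so their affine hull is already all of $H$, which contains the fourth vertex. This same block structure also explains why the affine constraint $\sum_j\alpha_j=1$ comes for free: summing the two coordinates of either block on both sides of $\sigma(z^i)=\sum_j\alpha_j\sigma(z^j)$ forces $1=\sum_j\alpha_j$.

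The computation is elementary, so I do not expect a genuine obstacle here; the only point that deserves care is confirming the affine (rather than merely linear) constraint $\sum_j\alpha_j=1$, which distinguishes $\mathsf{DAff}$ from an ordinary span. I would dispatch this either by direct inspection of the coefficients $(+1,+1,-1)$ or, more robustly, via the block-sum argument above, which will also be the natural ingredient to generalize the lemma to arbitrary $d$ and to the $m$-attribute setting treated later.
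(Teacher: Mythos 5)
Your proof is correct and follows essentially the same route as the paper, which simply exhibits the four explicit affine combinations with coefficients $(+1,-1,+1)$; your version packages them as four rearrangements of the single dependency $\sigma(z^1)-\sigma(z^2)-\sigma(z^3)+\sigma(z^4)=\mathbf{0}$ and correctly checks that the coefficients sum to one. The added block-sum/affine-flat observation is a nice structural gloss but is not needed for, and does not change, the argument.
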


\begin{proof}
Below we explicitly show how each $z^{i}$ can be expressed in terms of other $z^{j}$'s. 
\begin{equation}
   (+1) \cdot \begin{bmatrix} 
           0 \\ 
           1 \\                  
           0 \\
           1
         \end{bmatrix} \;\; +\;\;
     (-1) \cdot \begin{bmatrix}
           0 \\ 
           1 \\
           1 \\
           0
         \end{bmatrix} \;\; +\;\; 
           (+1) \cdot \begin{bmatrix}
           1 \\ 
           0 \\
           1 \\
           0
         \end{bmatrix} =   \begin{bmatrix}
           1 \\ 
           0 \\
           0 \\
           1
         \end{bmatrix}
\end{equation}

\begin{equation}
   (-1) \cdot \begin{bmatrix} 
           0 \\ 
           1 \\                  
           0 \\
           1
         \end{bmatrix} \;\; +\;\;
     (+1) \cdot \begin{bmatrix}
           0 \\ 
           1 \\
           1 \\
           0
         \end{bmatrix} \;\; +\;\; 
           (+1) \cdot \begin{bmatrix}
           1 \\ 
           0 \\
           0 \\
           1
         \end{bmatrix} =   \begin{bmatrix}
           1 \\ 
           0 \\
           1 \\
           0
         \end{bmatrix}
\end{equation}

\begin{equation}
   (+1) \cdot \begin{bmatrix} 
           1 \\ 
           0 \\                  
           1 \\
           0
         \end{bmatrix} \;\; +\;\;
     (+1) \cdot \begin{bmatrix}
           0 \\ 
           1 \\
           0 \\
           1
         \end{bmatrix} \;\; +\;\; 
           (-1) \cdot \begin{bmatrix}
           1 \\ 
           0 \\
           0 \\
           1
         \end{bmatrix} =   \begin{bmatrix}
           0 \\ 
           1 \\
           1 \\
           0
         \end{bmatrix}
\end{equation}

\begin{equation}
   (-1) \cdot \begin{bmatrix} 
           1 \\ 
           0 \\                  
           1 \\
           0
         \end{bmatrix} \;\; +\;\;
     (+1) \cdot \begin{bmatrix}
           0 \\ 
           1 \\
           1 \\
           0
         \end{bmatrix} \;\; +\;\; 
           (+1) \cdot \begin{bmatrix}
           1 \\ 
           0 \\
           0 \\
           1
         \end{bmatrix} =   \begin{bmatrix}
           0 \\ 
           1 \\
           0 \\
           1
         \end{bmatrix}
\end{equation}

\end{proof}

\begin{figure}
    \centering
    \includegraphics[scale=0.4]{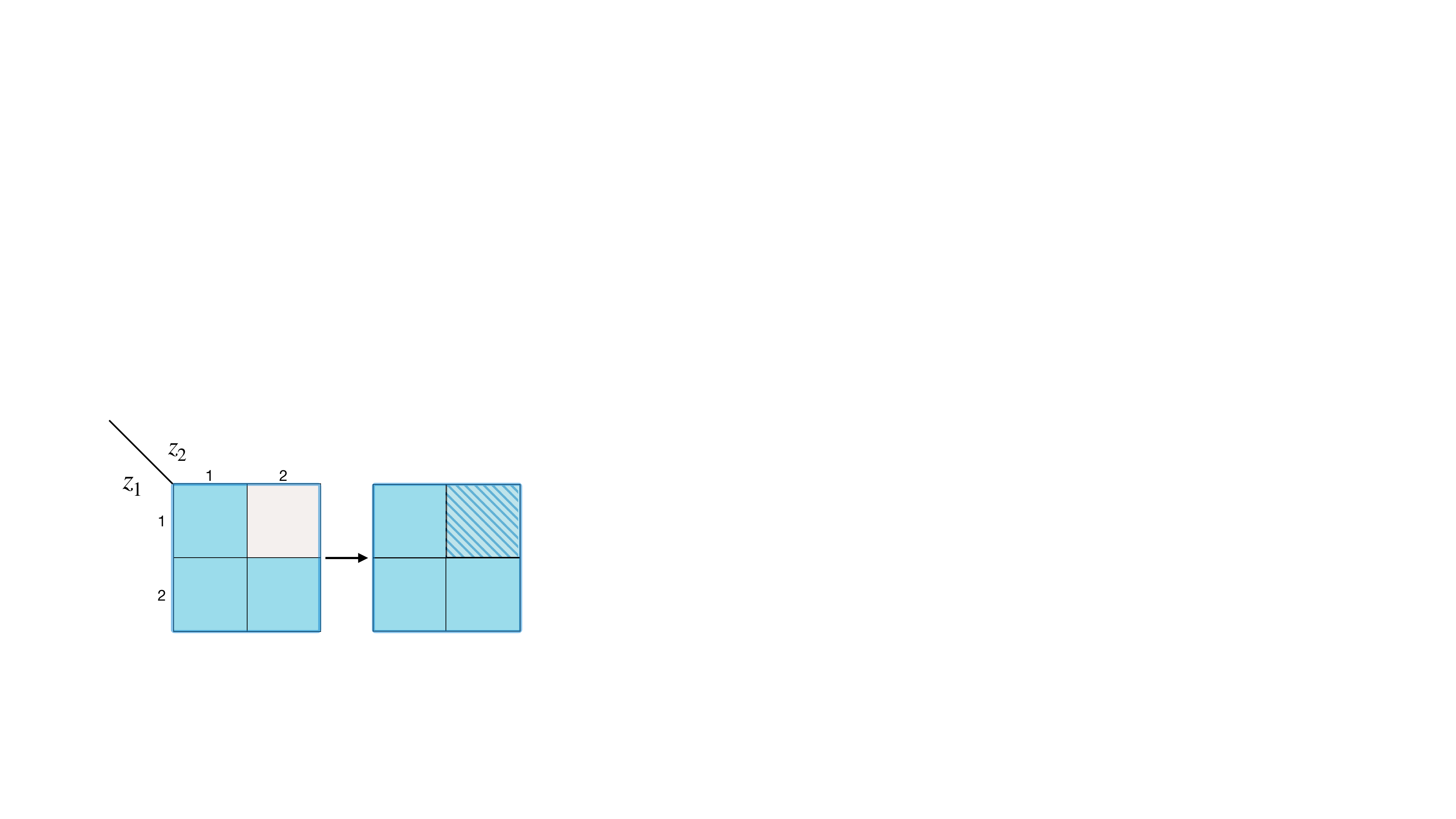}
    \caption{Setting of two attributes and two possible values per attribute. Illustration of extrapolation from three groups to the remaining fourth group. Three dark colored groups indicate the observed groups and the light colored shaded group indicates the group that is the affine combination of the three observed groups.}
    \label{fig:3group_extrapolation}
\end{figure}

We illustrate the setting of Lemma~\ref{lemma1} in Figure~\ref{fig:3group_extrapolation}.  
We now understand an implication of Lemma~\ref{lemma1}. Let us consider the setting where $m=2$ and $d>2$.  Consider a subset of four groups $\{(i,j), (i',j), (i,j'), (i',j')\}$. Under one-hot concatenations these groups are denoted as  $z^1 =[\underbrace{0, \cdots, 1_i, \cdots 0}_{\text{first attribute}}, \underbrace{0, \cdots, 1_j, \cdots 0}_{\text{second attribute}}]$, $z^2 =[0, \cdots, 1_{i'}, \cdots 0, 0, \cdots, 1_j, \cdots 0]$, $z^3 =[0, \cdots, 1_{i}, \cdots 0, 0, \cdots, 1_{j'}, \cdots 0]$, and $z^4=[0, \cdots, 1_{i'}, \cdots 0, 0, \cdots, 1_{j'}, \cdots 0]$. Observe that using Lemma~\ref{lemma1}, we get $z^4 = z^2 + z^3 -z^1$. Similarly, we can express every other $z^i$ in terms of rest of $z^j$'s in the the set $\{(i,j), (i',j), (i,j'), (i',j')\}$. 

In the setting when $m=2$ and $d\geq 2$, the total number of possible values $z$ takes is $d^2$. Each group recall is associated with attribute vector $z=[z_1,z_2]$, where $z_1 \in \{1, \cdots, d\}$ and $z_2 \in \{1, \cdots d\}$. The set of all possible values of $z$ be visualized as $d\times d$ grid in this notation. We call this $d\times d$ grid as $G$.  We will first describe a specific approach of selecting observed groups $z$ for training, which shows that with just $2d-1$ it is possible to affine span all the possible $d^2$ groups in the grid $G$. We leverage the insights from this approach and show that with a randomized approach of selecting groups, we can continue to affine span $d^2$ groups with $\mathcal{O}(d\log(d))$ groups. 

Denote the set of observed groups as $N$. Suppose that their affine hull contains all the points in a subgrid $S\subseteq G$ of size $m\times n$. 
Let the subgrid $S = \{x_1, \cdots, x_m\}\times \{y_1, \cdots, y_n\}$.  Without loss of generality, we can permute the points and make the subgrid contiguous as follows  $S =\{1, \cdots, m\} \times \{1, \cdots, n\}$. Next, we add a new point $g = (g_x,g_y)\in G$ but  $g\not \in S$. We argue that if  $g_x \in \{1,\cdots, m\}$, then the affine hull of $N \cup \{g\}$ contains a larger subgrid of size $m\times (n+1)$. 
Similarly, we want to argue that if $g_y \in \{1,\cdots, n\}$, then the affine hull of $N \cup \{g\}$ contains a larger subgrid of size $(m+1)\times n$. Define $C_x$ as the Cartesian product of $\{g_x\}$ with $\{1, \cdots, n\}$, i.e.,  $C_x = \{(g_x,1), (g_x,2), \cdots, (g_x,n)\}$.  Define $C_y$ as the Cartesian product of $\{1, \cdots, m\}$ with $\{g_y\}$, i.e.,  $C_{y} = \{(1,g_y), (2, g_y), \cdots, (m, g_y)\}$.

\begin{theorem}
\label{theorem1}
   Suppose the affine hull of the observed set $N$ contains a subgrid $S$ of size $m\times n$. If the new point $g=(g_x,g_y)$  shares the $x$-coordinate with a point in $S$,  and $g\not\in S$, then the the affine hull of $N \cup \{g\}$ contains $S \cup C_y$. 
\end{theorem}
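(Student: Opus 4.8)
The plan is to reduce the statement to a single ``rectangle-completion'' gadget — the affine identity already exhibited in Lemma~\ref{lemma1} — and apply it once for each column of the new row. Concretely, since $g_x \in \{1,\dots,m\}$ and $g \notin S$, the new point must carry a fresh second coordinate, $g_y \notin \{1,\dots,n\}$. Because $S \subseteq \mathsf{DAff}(N) \subseteq \mathsf{DAff}(N\cup\{g\})$ already, it suffices to show that every point $(k, g_y)$ with $k \in \{1,\dots,m\}$ lies in $\mathsf{DAff}(N \cup \{g\})$; this yields $C_y = \{1,\dots,m\}\times\{g_y\} \subseteq \mathsf{DAff}(N\cup\{g\})$ and hence $S \cup C_y \subseteq \mathsf{DAff}(N\cup\{g\})$.

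First I would fix any $y_0 \in \{1,\dots,n\}$ (which exists since $n \geq 1$) and, for each $k$, consider the four corners of the axis-aligned rectangle with first coordinates in $\{g_x, k\}$ and second coordinates in $\{y_0, g_y\}$. Writing out the concatenated one-hot encodings exactly as in Lemma~\ref{lemma1}, the corner $(k,g_y)$ is the affine combination of the other three:
\begin{equation*}
\sigma\big((k,g_y)\big) = \sigma\big((g_x, g_y)\big) + \sigma\big((k, y_0)\big) - \sigma\big((g_x, y_0)\big),
\end{equation*}
where the coefficients $(+1,+1,-1)$ sum to $1$. The point $(g_x,g_y)=g$ belongs to $N\cup\{g\}$, while $(k,y_0)$ and $(g_x,y_0)$ lie in $S$ (their coordinates are in $\{1,\dots,m\}\times\{1,\dots,n\}$) and hence in $\mathsf{DAff}(N)$. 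Thus all three $\sigma$-vectors on the right-hand side lie in the ordinary affine hull of $\{\sigma(z) : z \in N\cup\{g\}\}$ in $\mathbb{R}^{2d}$.

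The step I expect to carry the real weight is the chaining of affine-hull memberships: I must argue that an affine combination of vectors that are \emph{themselves} already in the affine hull remains in that affine hull. This is exactly the idempotence of the ordinary affine hull, $\mathrm{Aff}(\mathrm{Aff}(\cdot)) = \mathrm{Aff}(\cdot)$, and it is what lets the discrete affine hull behave transitively despite being restricted to grid points. Invoking it, $\sigma((k,g_y))$ lies in the affine hull of $\{\sigma(z): z \in N\cup\{g\}\}$, so the grid point $(k,g_y)$ lies in $\mathsf{DAff}(N\cup\{g\})$. Since $k \in \{1,\dots,m\}$ was arbitrary, the entire new row $C_y$ is captured, and combining with $S$ completes the argument; note this also realizes the enlarged subgrid of size $m \times (n+1)$ promised in the preceding discussion.
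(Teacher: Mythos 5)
Your proof is correct, and it rests on the same two ingredients as the paper's: the $2\times 2$ rectangle identity of Lemma~\ref{lemma1} applied to concatenated one-hot encodings, and the fact that an affine combination of points already in an affine hull stays in that hull (which you invoke as idempotence of $\mathrm{Aff}$, and which the paper instead verifies by hand by expanding the composite weights and checking they sum to one). The difference is in the choice of rectangles. The paper sweeps the new row by induction: it first completes the rectangle $\{(1,n),(2,n),(1,n{+}1)\}\mapsto(2,n{+}1)$ and then chains adjacent rectangles $\{(i,n),(i,n{+}1),(i{+}1,n)\}\mapsto(i{+}1,n{+}1)$, each step relying on the previously established point $(i,n{+}1)$. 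You instead anchor every rectangle at $g$ itself and a fixed row $y_0$ of $S$, so each target $(k,g_y)$ is obtained in one shot from three points that are all already known to lie in $\mathrm{Aff}(N\cup\{g\})$; this eliminates the induction entirely and is a genuine (if modest) simplification. Your observation that $g\notin S$ together with $g_x\in\{1,\dots,m\}$ forces $g_y\notin\{1,\dots,n\}$ is also a correct and necessary sanity check, and the degenerate case $k=g_x$ causes no trouble since the identity then collapses to $\sigma(g)=\sigma(g)$. No gaps.
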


\begin{proof}

We write the set of observed groups $N$ as $N = \{z^{\theta_j}\}_{j}$. The affine hull of $N$ contains  $S=\{1, \cdots, m\} \times \{1, \cdots,n \}$.  We observe a new group $g\not\in S$, which shares its $x$ coordinate with one of the points in $S$. Without loss of generality let this point be $g=(1,n+1)$ (if this were not the case, then we can always permute the columns and rows to achieve such a configuration). Consider the triplet --  $(z^1, z^2, z^3) = \big((1,n), (2,n), (1,n+1)\big)$. Observe that $z^1,z^2, z^3, z^4$ form a $2\times 2$ subgrid, where $z^4=(2,n+1)$.  We use Lemma~\ref{lemma1} to infer that the fourth point $z^4 = (2, n+1)$ on this $2\times 2$ subgrid can be obtained as an affine combination of this triplet, i.e., $z^4 = \alpha z^1 + \beta z^2 + \gamma z^3$. Since $z^1,z^2$ are in the affine hull of $N$, they can be written as an affine combination of seen points in $N$ as follows $z^1 = \sum_{k \in N} a_k z^{\theta_k}$, $z^2 = \sum_{k \in N} b_k z^{\theta_k}$. As a result, we obtain 
\begin{equation}
\begin{split}
    & z^4 =  \alpha z^1 + \beta z^2 + \gamma z^3  = \alpha \Big(\sum a_k z^{\theta_k}\Big) + \beta \Big(\sum b_k z^{\theta_k}\Big) + \gamma z^3 \\ 
    &  = \sum_{k \in N} \Big(\alpha a_k + \beta b_k \Big) z^{\theta_k} + \gamma z^3
\end{split}
\end{equation}

Observe that $\sum_{k} (\alpha a_k + \beta b_k) =(\alpha  \sum_{k} a_k + \beta  \sum_{k}  b_k) = \alpha + \beta$. Since $\alpha + \beta +\gamma=1$,  $z^4$ is an affine combination of points in $N \cup \{g\}$. Thus we have shown the claim for the point $(2,n+1)$. We can repeat this claim for point $(3,n+1)$ and so on until we reach $(m,n+1)$ beyond which there would be no points in $S$ that are expressed as affine combination of $N$. We can make this argument formal through induction. We have already shown the base case above. Suppose all the points $(j, n+1)$ in $j\leq i < m$ are in the affine hull of $N \cup \{g\}$. Consider the point $z^4=(i+1,n+1)$. Construct the triplet $(z^1,z^2,z^3) = \big((i,n), (i,n+1), (i+1,n)\big)$. Again from Lemma~\ref{lemma1}, it follows that  $z^4 = \alpha z^1 + \beta z^2 + \gamma z^3$. We substitute $z^1, z^2$ and $z^3$ with their corresponding affine combinations.  $z^4 = \alpha \sum_{k \in N \cup \{g\}}a_k z^{\theta_k} + \beta \sum_{k \in N \cup \{g\}}b_k z^{\theta_k} + \gamma \sum_{k \in N \cup \{g\}}c_k z^{\theta_k}$. Since $\sum_{k \in N \cup \{g\}}\alpha a_k + \beta b_k + \gamma c_k=1$, it follows that $z^4$ is an affine combination of $z^1,z^2$ and $z^3$. This completes the proof.  

\end{proof}

We now describe a simple deterministic procedure that helps us understand how many groups we need to see before we are guaranteed that the affine hull of seen points span the whole grid $G = \{1, \cdots, d\} \times \{1, \cdots, d\}$.

\begin{itemize}
    \item We start with a base set of three points -- $B=\{(1,1), (1,2), (2,1)\}$. From Lemma~\ref{lemma1}, the affine hull contains $(2,2)$. 
    \item For each $i \in \{2,\cdots, d-1\}$ add the points $(1,i+1), (i+1,1)$ to the set $B$. From Theorem~\ref{theorem1}, it follows that affine hull of $B \cup \{(1,i+1)\} \cup \{(i+1,1)\}$ contains $(i+1\times i+1)$ subgrid $\{1,\cdots, i+1\}\times \{1,\cdots, i+1\}$ (here we apply Theorem~\ref{theorem1} in two steps once for the addition of  $(1,i+1)$ and then for the addition of $(i+1,1)$.  
\end{itemize}

At the end of the above procedure $B$ contains $2d-1$ points and its affine hull contains the grid $G$. We illustrate this procedure in Figure~\ref{fig:grid_generation_illustration}. 

\begin{figure}
   \begin{subfigure}{0.3\textwidth}
    \centering
    \begin{tikzpicture}
      \draw[step=1cm,gray,very thin] (0,0) grid (4,4);
      
      \draw[fill=mygray] (0,0) rectangle (1,1);
      \draw[fill=mygray] (1,0) rectangle (2,1);
      \draw[fill=mygray] (0,1) rectangle (1,2);
      \draw[fill=mygray] (1,1) rectangle (2,2);
    \end{tikzpicture}
    \caption{Step 1}
  \end{subfigure}
   \begin{subfigure}{0.3\textwidth}
    \centering
    \begin{tikzpicture}
      \draw[step=1cm,gray,very thin] (0,0) grid (4,4);
      
      \draw[fill=mygray] (0,0) rectangle (1,1);
      \draw[fill=mygray] (1,0) rectangle (2,1);
      \draw[fill=mygray] (0,1) rectangle (1,2);
      \draw[fill=mygray] (1,1) rectangle (2,2);
    \draw[fill=myblue] (2,0) rectangle (3,1);
    \draw[fill=myblue] (2,1) rectangle (3,2);
    \end{tikzpicture}
    \caption{Step 2}
  \end{subfigure}
   \begin{subfigure}{0.3\textwidth}
    \centering
    \begin{tikzpicture}
      \draw[step=1cm,gray,very thin] (0,0) grid (4,4);
      
      \draw[fill=mygray] (0,0) rectangle (1,1);
      \draw[fill=mygray] (1,0) rectangle (2,1);
      \draw[fill=mygray] (0,1) rectangle (1,2);
      \draw[fill=mygray] (1,1) rectangle (2,2);
    \draw[fill=myblue] (2,0) rectangle (3,1);
    \draw[fill=myblue] (2,1) rectangle (3,2);
      \draw[fill=myyellow] (0,2) rectangle (1,3);
    \draw[fill=myyellow] (1,2) rectangle (2,3);
    \draw[fill=myyellow] (2,2) rectangle (3,3);
    \end{tikzpicture}
    \caption{Step 3}
  \end{subfigure}
  \caption{Illustration of steps of the deterministic sampling procedure for a $4\times 4$ grid. (a) shows the base set, (b) add a group in blue and the affine hull extends to include all the blue cells, (c) add a group in yellow and the affine hull extends to include all yellow cells.}
  \label{fig:grid_generation_illustration}
\end{figure}

We now discuss a randomized procedure that also allows us to span the entire grid $G$ with $\mathcal{O}(d\log(d))$ groups.  The idea of the procedure is to start with a base set of groups and construct their affine hull $S$. Then we wait to sample a group $g$ that is outside this affine hull. If this sampled group shares the $x$ coordinate with affine hull of $B$ denoted as $S$, then we expand the subgrid by one along with $y$ coordinate. Similarly, we also wait for a point that shares a $y$ coordinate and then we expand the subgrid by one along the $x$ coordinate. 

We use $S_x$ to denote the distinct set of $x$-coordinates that appear in $S$ and same goes for $S_y$. We write $g=(g_x,g_y)$. The procedure goes as follows. 

Set $S=\emptyset, B=\emptyset$ and $\mathsf{Flag}=x$. 

\begin{itemize}
\item  Sample a group $g$ from $G$ uniform at random. Update $B=B\cup \{g\}, S=S\cup \{g\}$. 
\item While $S\not= G$, sample a group $g$ from $G$ uniform at random.  
\begin{itemize}
\item If $\mathsf{Flag} == x$, $g_x\in S_x$,  $g \not \in S$, then update $B=B\cup \{g\}$, $S = S \cup ( S_x \times \{g_y\})$ and $\mathsf{Flag}=y$.

\item If $\mathsf{Flag} == y$, $g_y\in S_y$,  $g \not \in S$, then update $B=B\cup \{g\}$, $S = S \cup ( \{g_x\} \times S_y)$ and $\mathsf{Flag}=x$.
\end{itemize}
\end{itemize}

In the above procedure, in every step in the while loop a group $g$ is sampled. Whenever the $\mathsf{Flag}$ flips from $x$ to $y$, then following Theorem~\ref{theorem1}, the updated set $S$ belongs to the affine hull of $B$. We can say the same when  $\mathsf{Flag}$ flips from $y$ to $x$. 
In the next theorem, we will show that the while loop terminates after  $8c d\log(d)$ steps with a high probability and the affine hull of $B$ contains the entire grid $G$. We follow this strategy. We count the time it takes for $\mathsf{Flag}$ to flip from $x$ to $y$ (from $y$ to $x$) as it grows the size of $S$ from a  $k\times k$ subgrid to $ k \times (k+1)$ ($k\times (k+1)$ subgrid to $ (k+1) \times (k+1)$) subgrid.

\begin{restatable}{theorem}{randomsampgroup}
\label{thm:span}
Suppose we sample the groups based on the randomized procedure described above. If the number of sampled groups is greater than $8c d\log(d)$, then $G\subseteq \mathsf{DAff}(B)$  with a probability greater than equal to $1-\frac{1}{c}$. 
\end{restatable}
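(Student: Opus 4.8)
The plan is to reduce the statement to a waiting-time argument about how long the randomized procedure takes to grow its subgrid $S$ from a single cell to the full grid $G$, and then to control that waiting time by a single application of Markov's inequality. First I would record the invariant that drives everything: whenever the procedure performs an update (flipping $\mathsf{Flag}$), Theorem~\ref{theorem1} — applied in the $x$-direction, and its mirror image in the $y$-direction — guarantees that the newly added cells lie in the affine hull of $B$, so that $S \subseteq \mathsf{DAff}(B)$ is preserved throughout. Consequently it suffices to show that, with probability at least $1-\frac1c$, the procedure reaches $S = G$ within $8c\,d\log(d)$ samples; the containment $G \subseteq \mathsf{DAff}(B)$ then follows immediately.

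Next I would decompose a run into its successful steps. Starting from $S$ of size $1\times 1$, each successful $x$-step takes $S$ from $k\times k$ to $k\times(k+1)$ and each successful $y$-step from $k\times(k+1)$ to $(k+1)\times(k+1)$, so exactly $2(d-1)$ successful steps bring $S$ to the full $d\times d$ grid. The key computation is the success probability of each step. A fresh uniform sample $g=(g_x,g_y)$ from the $d^2$ cells is accepted for an $x$-step out of a $k\times k$ grid precisely when $g_x\in S_x$ and $g_y\notin S_y$, i.e.\ with probability $\tfrac{k(d-k)}{d^2}$; likewise a $y$-step out of a $k\times(k+1)$ grid succeeds with probability $\tfrac{(k+1)(d-k)}{d^2}$. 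Because the samples are i.i.d.\ and the acceptance probability depends only on the current shape of $S$, by memorylessness the total number of samples $T$ is a sum of independent geometric variables, so $\mathbb{E}[T] = 1 + \sum_i \tfrac1{p_i}$.

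The remaining work is to bound this expectation and conclude. Using the partial-fraction identity $\tfrac{1}{k(d-k)} = \tfrac1d\big(\tfrac1k + \tfrac1{d-k}\big)$, the $x$-steps contribute $\sum_{k=1}^{d-1}\tfrac{d^2}{k(d-k)} = 2d\,H_{d-1}$, and since $(k+1)(d-k)\ge k(d-k)$ the $y$-steps contribute no more, so $\mathbb{E}[T] \le 1 + 4d\,H_{d-1} \le 1 + 4d\big(1+\ln d\big)$, which is at most $8d\log(d)$ once $d$ is large enough. A single application of Markov's inequality then yields $\mathbb{P}\big(T > 8c\,d\log(d)\big) \le \tfrac{\mathbb{E}[T]}{8c\,d\log(d)} \le \tfrac1c$, exactly the claimed failure probability. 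I expect the only delicate point to be the careful bookkeeping of the per-step success probabilities and the justification that $T$ splits as a sum of independent geometrics; the concentration itself is deliberately crude, since the weak $1-\tfrac1c$ guarantee is precisely what Markov (with threshold $c\cdot\mathbb{E}[T]$) delivers, so no Chernoff-type bound is needed.
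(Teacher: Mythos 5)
Your proposal is correct and follows essentially the same route as the paper's proof: the same invariant via Theorem~\ref{theorem1}, the same decomposition of the run into $2(d-1)$ geometric waiting times with success probabilities $\tfrac{k(d-k)}{d^2}$ and $\tfrac{(k+1)(d-k)}{d^2}$, the same partial-fraction evaluation of the expected total, and the same final application of Markov's inequality. The only (cosmetic) difference is that your harmonic-sum bookkeeping is a bit cleaner than the paper's symmetric-split approximation.
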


\begin{proof}
    We take the first group $g$ that is sampled. Without loss of generality, we say this group is $(1,1)$.

    Suppose the $\mathsf{Flag}$ is set to $x$.  Define an event $A_{1}^{k}$: newly sampled $g=(g_x,g_y)$ shares $x$-coordinate with a point in $S$ (size $k\times k$), $g  \not \in S$. Under these conditions  $\mathsf{Flag}$ flips from $x$ to $y$. To compute the probability of this event let us count the number of scenarios in which this happens. If $g_x$ takes one of the  $k$ values in $S_x$ and $g_y$ takes one of the remaining $(d-k)$, then the event happens.  As a result, the probability of this event is  $P(A_1^{k})=\frac{(k)(d-k)}{d^2}$.

     Suppose the $\mathsf{Flag}$ is set to $y$. Define an event $A_{2}^{k}$: newly sampled $g=(g_x,g_y)$ shares $y$-coordinate with a point in $S$ (size $k\times (k+1)$) and $g  \not \in S$. Under these conditions $\mathsf{Flag}$ flips from $y$ to $x$. The probability of this event is$P(A_2^{k})=\frac{(k+1)(d-k)}{d^2}$. 

    Define $T_{1}^{k}$ as the number of groups that need to be sampled before $A_1^{k}$ occurs. Define $T_{2}^{k}$ as the number of groups that need to be sampled before $A_2^{k}$ occurs.
    Observe that after $T_1^{k} + T_2^k$ number of sampled groups the size of the current subgrid $S$, which is in the affine hull of $B$, grows to $(k+1) \times (k+1)$.

    Define $T_{\mathsf{sum}}=\sum_{k=1}^{d-1}(T_1^{k} + T_2^{k})$ as the total number of groups sampled before the affine span of the observed groups $B$ contains the grid $G$.

    We compute $\mathbb{E}[T_{\mathsf{sum}}]$ as follows:
    \begin{equation}
    \begin{split}
&          \mathbb{E}[T_{\mathsf{sum}}] = \sum_{k=1}^{d-1}(\mathbb{E}[T_1^{k}] + \mathbb{E}[T_2^{k}]) \\  
&           \sum_{k=1}^{d}\mathbb{E}[T_1^{k}] = \sum_{k=1}^{d-1} d^2/(k(d-k)) = 2\sum_{k=1}^{(d-1)/2} d^2/(k(d-k)) \\ 
&           2\sum_{k=1}^{(d-1)/2} d^2/(k(d-k)) = 2 d\sum_{k=1}^{(d-1)/2} \Big[\frac{1}{k} + \frac{1}{d-k}\Big] \approx 4d \log((d-1)/2)
        \end{split}
    \end{equation}
    Similarly, we obtain a similar bound for $\sum_{k=1}^{d-1}\mathbb{E}[T_2^{k}]$.

    \begin{equation}
        \begin{split}
&           \sum_{k=1}^{d}(\mathbb{E}[T_2^{k}] = \sum_{k=1}^{d-1} d^2/((k+1)(d-k)) = 2\sum_{k=1}^{(d-1)/2} d^2/((k+1)(d-k)) \\ 
&           2\sum_{k=1}^{(d-1)/2} d^2/((k+1)(d-k)) \leq  2 d\sum_{k=1}^{(d-1)/2} \Big[\frac{1}{k+1} + \frac{1}{d-k}\Big] \approx 4d \log((d-1)/2)
        \end{split}
    \end{equation}
    Overall $\mathbb{E}[T_{\mathsf{sum}}]  \approx 8d\log(d/2)$. 
    From Markov inequality, it immediately follows that $P(T_{\mathsf{sum}} \leq 8cd\log(d/2)) \geq 1-\frac{1}{c}$.  
    In the above approximations, we use  $\sum_{i=1}^{d}\frac{1}{i} \approx \log d + \gamma$, where $\gamma$ is Euler's constant. We drop $\gamma$ as its a constant, which can always be absorbed by adapting the constant $c$. 
\end{proof} 

\begin{theorem}
\label{thm4_m2}
 Consider the setting where $p(.|z)$ follows AED $\forall z \in \S$, $\mathcal{Z}^{\mathsf{train}}$ comprises of $s$ attribute vectors $z$ drawn uniformly at random from $\S$, and the test distribution $q$ satisfies compositional shift characterization.  If  $s \geq 8 c d \log d$, where $d$ is sufficiently large, $ \hat{p}(z|x) = p(z|x), \forall z \in \mathcal{Z}^{\mathsf{train}}, \forall x\in \mathbb{R}^{n}$, $\hat{q}(z) = q(z), \forall z \in \mathcal{Z}^{\mathsf{test}}$,  then the output of CRM (\eqref{eqn:hatq}) matches the test distribution, i.e., $\hat{q}(z|x) = q(z|x) $, $\forall z\in \mathcal{Z}^{\mathsf{test}}, \forall x\in \mathbb{R}^n$, with probability greater than $1-\frac{1}{c}$. 
\end{theorem}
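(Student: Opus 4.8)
The plan is to derive Theorem~\ref{thm4_m2} by chaining the two results already proved for $m=2$: the affine-hull growth bound of Theorem~\ref{thm:span} supplies the high-probability event on which the training support spans the full grid, and the deterministic extrapolation guarantee of Theorem~\ref{thm:disc_exp} then finishes the argument verbatim on that event.

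First I would fix the geometry. For $m=2$ the attribute combinations form the $d\times d$ grid $G=\{1,\ldots,d\}\times\{1,\ldots,d\}$, and under the stated sampling model $\S=G$, so drawing $s$ groups uniformly at random from $\S$ produces the training support $\mathcal{Z}^{\mathsf{train}}$. The crucial reduction is to recognize the i.i.d. uniform draws as the input stream to the sequential procedure analyzed in Theorem~\ref{thm:span}: that procedure also draws groups uniformly at random, and its base set $B$ is by construction a subset of the groups it has drawn. I would therefore couple the two processes so that the first $s$ draws of the procedure are exactly the $s$ training draws; then $B\subseteq\mathcal{Z}^{\mathsf{train}}$, and by monotonicity of the discrete affine hull $\mathsf{DAff}(B)\subseteq\mathsf{DAff}(\mathcal{Z}^{\mathsf{train}})$.

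Next I would invoke Theorem~\ref{thm:span}. Since $s\geq 8cd\log d\geq 8cd\log(d/2)$, the Markov estimate established there shows the procedure terminates within $s$ draws with probability at least $1-\frac{1}{c}$; on that event $G\subseteq\mathsf{DAff}(B)\subseteq\mathsf{DAff}(\mathcal{Z}^{\mathsf{train}})$, whence $\mathsf{DAff}(\mathcal{Z}^{\mathsf{train}})=\S$. Conditioning on this event, the compositional-shift characterization gives $\mathcal{Z}^{\mathsf{test}}\subseteq\S=\mathsf{DAff}(\mathcal{Z}^{\mathsf{train}})$, so every hypothesis of Theorem~\ref{thm:disc_exp} is met (the conditions $\hat{p}(z|x)=p(z|x)$ on $\mathcal{Z}^{\mathsf{train}}$ and $\hat{q}(z)=q(z)$ on $\mathcal{Z}^{\mathsf{test}}$ are assumed directly). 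Applying Theorem~\ref{thm:disc_exp} then yields $\hat{q}(z|x)=q(z|x)$ for all $z\in\mathcal{Z}^{\mathsf{test}}$ and all $x\in\mathbb{R}^n$, which is the claim, holding with probability at least $1-\frac{1}{c}$.

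The geometric identification $\S=G$ and the closing appeal to Theorem~\ref{thm:disc_exp} are immediate. The hard part will be making the coupling in the second step airtight: Theorem~\ref{thm:span} uses an adaptive stopping rule and retains only a subset of its draws, so I must check that feeding the $s$ i.i.d. uniform training draws into that procedure is legitimate and that the affine-hull monotonicity $\mathsf{DAff}(B)\subseteq\mathsf{DAff}(\mathcal{Z}^{\mathsf{train}})$ transports the coverage bound from $B$ to the full training set without disturbing the constant $c$ or the failure probability $\frac{1}{c}$. The ``sufficiently large $d$'' caveat is inherited from the harmonic-sum approximation underlying Theorem~\ref{thm:span}.
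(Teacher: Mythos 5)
Your proposal is correct and follows essentially the same route as the paper: invoke Theorem~\ref{thm:span} to get $\S\subseteq\mathsf{DAff}(\mathcal{Z}^{\mathsf{train}})$ with probability at least $1-\tfrac{1}{c}$, note that compositional shift then gives $\mathcal{Z}^{\mathsf{test}}\subseteq\mathsf{DAff}(\mathcal{Z}^{\mathsf{train}})$, and conclude via Theorem~\ref{thm:disc_exp}. Your explicit coupling of the i.i.d. draws to the sequential procedure and the monotonicity step $\mathsf{DAff}(B)\subseteq\mathsf{DAff}(\mathcal{Z}^{\mathsf{train}})$ make rigorous a transition the paper leaves implicit, but the argument is the same.
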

\begin{proof}
 Suppose the support of training distribution $p(z)$ contains $s$ groups. We know that these $s$ groups are drawn uniformly at random. From Theorem~\ref{thm:span}, it is clear that if $s$ grows as $\mathcal{O}(d\log d)$, then with a high probability the entire grid of $d^2$ combinations is contained in the affine span of these observed groups. This can be equivalently stated as $\S\subseteq \mathsf{DAff}(\mathcal{Z}^{\mathsf{train}})$ with a probability greater than equal to $1-\frac{1}{c}$.    If $\S\subseteq \mathsf{DAff}(\mathcal{Z}^{\mathsf{train}})$, then from the assumption of compositional shifts, it follows that $\mathcal{Z}^{\mathsf{test}} \subseteq \mathsf{DAff}(\mathcal{Z}^{\mathsf{train}})$. We can now use Theorem~\ref{thm:disc_exp} and arrive at our result. This completes the proof.  
\end{proof}

\subsection{Proof for Theorem~\ref{thm4}: Growth of Discrete Affine Hull for $m$ Attribute Case}
\label{app:proof-affine-hull-growth-m}

Let $G = \{1, \cdots, d\}^m$  and let us consider the groups $z=[z_1,\ldots, z_m] \in G$. $G$ contains $d^m$ points.
We are given a set of groups $S \subset G$.  $\mathsf{DAff}(S)$ is the discrete affine hull of $S$, where recall $\mathsf{DAff}(S) = \mathsf{Aff}(S) \cap G$ is the set of points in the affine hull of $S$ that lie on the grid $G$. 

\begin{theorem}
\label{thm: md_affine_hull_G}
If the total number of groups sampled is greater than $ 2cd(m+1+\ln(d))$, then $\mathsf{DAff}(S)=G$ with probability at least $1-\frac{1}{c}$. 
\end{theorem}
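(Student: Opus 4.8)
The plan is to recast $\mathsf{DAff}(S)=G$ as a statement about the dimension of an affine span and then to analyze a sequential rank-growth process under random sampling. First I would reduce to linear algebra. Since $\mathsf{DAff}(S)=\mathsf{Aff}(\sigma(S))\cap G$ and $S\subseteq G$ forces $\mathsf{Aff}(\sigma(S))\subseteq \mathsf{Aff}(\sigma(G))$, we get $\mathsf{DAff}(S)=G$ iff $\mathsf{Aff}(\sigma(S))=\mathsf{Aff}(\sigma(G))$, i.e. iff $\dim \mathsf{Aff}(\sigma(S))=m(d-1)$ (each of the $m$ one-hot blocks contributes $d-1$ affine dimensions). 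So it suffices to bound how many uniform samples are needed before their one-hot images affinely span the full $m(d-1)$-dimensional space.

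Second, I would analyze the process of revealing samples one at a time. The quantity $r=\dim\mathsf{Aff}(\sigma(S_{\mathrm{cur}}))$ increases by exactly $1$ whenever a fresh sample's image lies outside the current hull, and is unchanged otherwise. Write $t=m(d-1)-r$ for the codimension. Conditioned on the current hull, a fresh uniform $z$ escapes (raises the dimension) with probability $(d^m-|\mathsf{DAff}(S_{\mathrm{cur}})|)/d^m$, because $\sigma(z)\in\mathsf{Aff}(\sigma(S_{\mathrm{cur}}))$ iff $z\in\mathsf{DAff}(S_{\mathrm{cur}})$. Thus each jump from codim $t$ to codim $t-1$ is a geometric waiting time, and starting from $r=0$ after the first sample we need exactly $m(d-1)$ escapes.

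Third — and this is the crux — I would control $|\mathsf{DAff}(S_{\mathrm{cur}})|$ in terms of $t$. The claim is that any affinely closed $P\subsetneq G$ of codimension $t$ (equivalently, a common level set of $t$ independent additive functions $z\mapsto\sum_i c_i[z_i]$, these being precisely the functionals cutting out $\mathsf{Aff}(\sigma(P))$) satisfies $|P|\le (1-1/d)^t\,d^m$, with equality for axis-aligned product restrictions $\prod_i B_i$ (note $\prod_i(d-t_i)\le d^m(1-1/d)^{\sum_i t_i}$ by Bernoulli). Granting this, the escape probability at codimension $t$ is at least $1-(1-1/d)^t\ge 1-e^{-t/d}$. \textbf{Proving this extremal bound is the main obstacle}: it asserts that coordinate-wise product restrictions maximize the number of grid points among all codimension-$t$ sub-hulls. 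I would attempt it by induction on $m$, slicing on one coordinate (each slice is affinely closed in $\{1,\dots,d\}^{m-1}$) while bookkeeping how the slice-coranks plus the inter-slice connectivity sum to $t$; alternatively via the level-set characterization, showing each independent additive constraint multiplies the maximal density by at most $(1-1/d)$.

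Finally I would combine these with linearity of expectation and Markov's inequality. Using the worst-case escape probability per codimension, the expected total number of samples $T$ satisfies $\mathbb{E}[T]\le 1+\sum_{t=1}^{m(d-1)}(1-e^{-t/d})^{-1}$. Splitting the sum at $t=d$ and using $1-e^{-x}\ge(1-e^{-1})x$ on $(0,1]$ gives a harmonic contribution $\sum_{t\le d}(1-e^{-t/d})^{-1}\lesssim d\,(\ln d+1)$ and a linear contribution $\sum_{t>d}(1-e^{-t/d})^{-1}\le (1-e^{-1})^{-1}m(d-1)\lesssim md$, so that $\mathbb{E}[T]\le 2d(m+1+\ln d)$. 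Markov's inequality then yields $\Pr[T>2cd(m+1+\ln d)]\le 1/c$; hence after $2cd(m+1+\ln d)$ samples the span is full and $\mathsf{DAff}(S)=G$ with probability at least $1-1/c$, as claimed.
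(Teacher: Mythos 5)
Your overall architecture is the same as the paper's: reduce $\mathsf{DAff}(S)=G$ to $\dim\mathsf{Aff}(\sigma(S))=m(d-1)$, view the samples as a sequential dimension-growth process whose escape probability at any stage equals $(d^m-|\mathsf{DAff}(S_{\mathrm{cur}})|)/d^m$, bound the sum of geometric waiting times, and finish with Markov. All of that bookkeeping (including the constant accounting in the final sum) is correct. The difference, and the problem, is the extremal lemma you rest everything on: you assert that every affinely closed $P\subsetneq G$ of codimension $t$ satisfies $|P|\le(1-1/d)^t d^m$, and you yourself flag that you do not know how to prove it. That assertion is the entire content of the theorem --- without it the waiting-time sum cannot be controlled --- so as written the proposal has a genuine gap at its crux. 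Note also that your lemma is \emph{stronger} than what the paper establishes: Lemma~\ref{lem:bound_missing_dim} only gives the linear bound $t\le 2d\,(d^m-|A|)/d^m$, and only under the hypothesis $|A|\ge d^m/2$; proving even that weaker statement takes a full page of delicate induction on $m$ (slicing along one coordinate, choosing a densest slice, and carefully comparing dimensions of $\mathsf{Aff}(G^1\cup A)$, $\mathsf{Aff}(A)$ and $\mathsf{Aff}(\bigcup_{k\in I}G^k)$). Your sketched attack --- induction on $m$ with slice-corank bookkeeping, or triangularizing the $t$ additive functionals so each successive constraint is non-constant on the fibers of the previous ones --- is plausible in spirit but is exactly where the real work lives, and the functional-triangularization step in particular is not obviously available once the constraints mix blocks.

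It is worth seeing why the paper can get away with its weaker, dense-regime-only lemma, because the same trick would simplify your life: when $|\mathsf{DAff}(S_{\mathrm{cur}})|<d^m/2$ the escape probability exceeds $1/2$ outright, so each of those (at most $m(d-1)$) stages costs expected time at most $2$, contributing the $2md$ term; the lemma is only needed once the hull covers more than half the grid, at which point it forces the remaining codimension below $d$ and yields the harmonic sum $\sum_{j\le d}2d/j\approx 2d\ln d$. So if you replace your global exponential bound by the two-regime argument, you only need an extremal statement in the dense regime, which is what the paper proves. If instead you insist on the exponential bound $|P|\le(1-1/d)^td^m$, you should either supply a complete proof (your Bernoulli computation only verifies it for axis-aligned products $\prod_iB_i$, which are far from the only affinely closed sets --- e.g.\ diagonals and unions of subgrids are affinely closed too) or explain why it follows from the dense-regime version; at present neither is done.
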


\begin{proof}
Let us write $s:=  2cd(m+1+\ln(d))$.
It is easy to see that $\mathsf{Aff}(G)$ is an affine vector space of dimension $m(d-1)$.
Consider an infinite sequence $(z^l)_{l\in \mathbb{N}}$ of i.i.d. uniformly sampled groups in $G$, the increasing sequence of sets $ S^l:= \{z^1,\ldots,z^l\}$, and the corresponding increasing sequence of affine spaces $\mathsf{Aff}(S^l)$.
The theorem is equivalent to stating that we have $\mathsf{Aff}(S^s) =  \mathsf{Aff}(G) $ with probability at least $1-\frac{1}{c}$.

For every $l \geq 1$, if the ``newly sampled'' point  $z^{l}$ does not belong to $\mathsf{Aff}(S^{l-1})$, then necessarily $\dim \mathsf{Aff}(S^{l}) = \dim  \mathsf{Aff}(S^{l-1}) +1$.
For $\mathsf{Aff}(S)$ to be equal to $\mathsf{Aff}(G)$,  there needs to be $m(d-1) = \dim \mathsf{Aff}(G) $ such increases in dimensionality when going from $S^0 = \emptyset$ to $S^s$ (and there cannot be more increases than that).
Note that as long as $ \mathsf{DAff}(S^{l-1}) \neq G $,  the probability that $z^{l}$ does not belong to $\mathsf{Aff}(S^{l-1})$ is at least $1/|G|$, hence with probability $1$ we have $\mathsf{DAff}(S^{l}) = G$ for $l$ large enough.

For $i=1,\ldots, m(d-1)$, define the random variable
\[T_i := |\{j \ : \ \dim  \mathsf{Aff}(S^{j}) = i\}|
\] 
(as noted above, $T_i$ is well-defined and finite with probability $1$) 
and  the random index 
\[t_i := \min \{ j \ : \dim  \mathsf{Aff}(S^{j}) = i \}.\]
The random variable $T_i$ counts the number of points $z^l$ sampled before a point not yet in $\mathsf{Aff}(S^{t_i})$ is sampled (thus leading to an increase in dimension).
Define also
\[T_\mathsf{sum}  := 1+ \sum_{i=1}^{m(d-1) -1} T_i.\] 
Note that  $T_\mathsf{sum} = \min \{j \ : \ \mathsf{Aff}(S^j) =  \mathsf{Aff}(G)\}$. Hence we have to show that with high probability, the random variable $T_\mathsf{sum}$ is smaller than $s$.

The probability of a newly sampled point not belonging to $\mathsf{DAff}(S^{t_i})$ (and thus leading to an increase in dimension) is equal to $(|G|-|\mathsf{DAff}(S^{t_i})|)/|G|$, hence $T_i$ is a geometric variable of mean  $|G|/(|G|-|\mathsf{DAff}(S^{t_i})|) = d^m/(d^m - |\mathsf{DAff}(S^{t_i})|)$,
and
\begin{equation}
    \mathbb{E}[T_\mathsf{sum}]  = 1 +\sum_{i=1}^{m(d-1)-1} \mathbb{E}[T_i] =  1 +\sum_{i=1}^{m(d-1)-1} \frac{d^m}{d^m - |\mathsf{DAff}(S^{t_i})|}.
\end{equation}

The following lemma, whose proof is given further below, is the key ingredient to bound this sum:
\begin{lemma}\label{lem:bound_missing_dim}
    Let $A\subset G$ be such that $\mathsf{DAff}(A) = A$.
    If $\frac{|A|}{d^m} \geq \frac{1}{2} $, i.e. if $A$ contains more than half the points of $G$, then the following inequality holds:
    \[
    \dim \mathsf{Aff}(G) - \dim \mathsf{Aff}(A) \leq 2d \frac{d^m- |A|}{d^m}.
    \]
\end{lemma}
Intuitively, the lemma states that if the set $A$ contains most points in $G$ (whose cardinality is $d^m$), then the dimension of the affine space it spans is almost that of $\mathsf{Aff}(G)$.

Let $i^*:= \min\{i \ : \ |\mathsf{DAff}(S^{t_i})|/d^m > 1/2 \}$. Then Lemma \ref{lem:bound_missing_dim} applies to $\mathsf{DAff}(S^{t})$ for all $t\geq t_{i^*}$, and we see in particular that
\[\dim \mathsf{Aff}(G) - i^* =   \dim \mathsf{Aff}(G) - \dim \mathsf{Aff}(S^{t_{i^*}}) \leq 2d \frac{d^m- |\mathsf{DAff}(S^{t_{i^*}})|}{d^m} < d, \]
hence $i^*$ must be greater than  $\dim \mathsf{Aff}(G) -d = m(d-1) - d$.
Thus we can split the sum as follows:
\begin{align*}
 \mathbb{E}[T_\mathsf{sum}] & = 1 +\sum_{i=1}^{m(d-1)-1} \mathbb{E}[T_i] \\ & \leq 1 +  \sum_{i=1}^{i^*-1}  \frac{d^m}{d^m - |\mathsf{DAff}(S^{t_i})|} +  \sum_{i=i^*}^{m(d-1)-1}  \frac{d^m}{d^m - |\mathsf{DAff}(S^{t_i})|}
 \\ & \leq  1 +  \sum_{i=1}^{i^*-1} 2 +  \sum_{i=i^*}^{m(d-1)-1}  \frac{2d}{\dim \mathsf{Aff}(G) - \dim \mathsf{Aff}(S^{t_i})}
 \\ & \leq 1 + 2 (i^*-1) +\sum_{i=m(d-1)-d}^{m(d-1)-1}  \frac{2d}{m(d-1) - \dim \mathsf{Aff}(S^{t_i})}
 \\ & \leq 2 md  +\sum_{j=1}^{d}  \frac{2d}{j} \approx 2d(m+1+\ln(d)),
\end{align*}
where we apply both Lemma \ref{lem:bound_missing_dim} and the definition of $i^*$ to get the third line.

From Markov's inequality,  we then see that $P(T_{\mathsf{sum}}\leq  c \mathbb{E}[T_{\mathsf{sum}}]) \geq 1-\frac{1}{c} \implies P(T_{\mathsf{sum}}\leq s =  2cd(m+1+\ln(d))) \geq 1-\frac{1}{c} $.

\end{proof}

\begin{proof}[Proof of Lemma \ref{lem:bound_missing_dim}
]
We prove the lemma by induction on $m \in \mathbb{N}$. If $m = 1$, then $\dim \mathsf{Aff}(A) = |A| - 1 $ and the statement reduces to the almost trivial inequality
\[\dim \mathsf{Aff}(G) - \dim \mathsf{Aff}(A) = d - |A| \leq 2d \frac{d- |A|}{d}. \]
Let us now consider $m\in \mathbb{N}$ and assume that the lemma has been proved for $m-1$.
Let $c := (d^m- |A|)/d^m$ be the ratio of missing points, and let us define the subsets
\[G^k = \{z \in G \ : \ z_m = k\}\]
for $k\in [d]$.
Consider the set of indices
\[
I:= \{k \in [d]  \ : \ A \cap G^k \neq \emptyset\}.
\]
As 
\[A \subset \bigcup_{k \in I} G^k, \]
the average number of points of $A$ within each set $G^k$ (for $k\in I$) must be $|A|/|I|$, and in particular there exists $k' \in I$ such that $|A\cap G^{k'}| \geq |A|/|I|$. Up to reordering the elements $\{1,\ldots,d\}$, we can assume without loss of generality that $k'=1$.
Consider now the sets $G^1$ and $A^1 := A \cap G^1$. There is a trivial isomorphism between the one hot encoding representations of $G^1$ and that of the set $[d]^{m-1}$, and the set $A^1$ does verify $\mathsf{Aff}(A^1) \cap G^1 = A^1 $. Moreover, $\frac{|A^1|}{d^{m-1}} \geq \frac{|A|}{|I|d^{m-1}} = \frac{d}{|I|} \frac{|A|}{d^{m}} \geq \frac{1}{2}$ by assumption. 
Hence we can apply our recursive hypothesis to the sets $A^1$ and $G^1$ to conclude that 
\begin{equation}\label{eq:first_bound_missing_dim}
  \dim \mathsf{Aff}(G^1) - \dim \mathsf{Aff}(A^1)  \leq 2d \frac{d^{m-1}- |A^1|}{d^{m-1}} \leq  2d \frac{d^{m}- |A|d/|I|}{d^{m}}.  
\end{equation}

As $A^1 \subset A$, we can write $\{a_1,\ldots, a_l\} = A \backslash A^1$ for some $l$, and we see that
\begin{align*}
  \dim \mathsf{Aff}(G^1 ) - \dim \mathsf{Aff}(A^1 )  \geq &  \dim  \mathsf{Aff}(G^1 \cup \{ a_1\} ) - \dim \mathsf{Aff}(A^1  \cup \{ a_1\} ) \\
  \geq & \dim \mathsf{Aff}(G^1 \cup \{a_1, a_2\} ) - \dim \mathsf{Aff}(A^1  \cup \{ a_1, a_2\} ) \\
  \geq & \ldots \\
  \geq & \dim \mathsf{Aff}(G^1 \cup \{a_1, \ldots, a_l\} ) \\
  & - \dim \mathsf{Aff}(A^1  \cup \{ a_1, \ldots, a_l\} )  \\
  = & \dim \mathsf{Aff}(G^1 \cup A) - \dim \mathsf{Aff}(A),
\end{align*}
where each successive inequality is true because the newly added point $a_{s+1}$ either belongs to both $\mathsf{Aff}(G^1 \cup \{a_1, \ldots, a_s\} ) $ and $\mathsf{Aff}(A^1  \cup \{ a_1, \ldots, a_s\})$, to neither of them, or only to  $\mathsf{Aff}(A^1  \cup \{ a_1, \ldots, a_s\})$.
Hence we find that
\begin{equation}\label{eq:second_bound_missing_dim}
\dim \mathsf{Aff}(G^1 \cup A) - \dim \mathsf{Aff}(A) \leq   \dim \mathsf{Aff}(G^1 ) - \dim \mathsf{Aff}(A^1 ).
\end{equation}

Note also that $ \bigcup_{k\in I} G^k \subset \mathsf{Aff}(G^1 \cup A) $ (in fact, the inclusion is an equality). Indeed, it is easy to see that for any $ z \in G^k$, we have $G^k \subset \mathsf{Aff}(G^1 \cup \{z\}) $: with our vectorization of $G$, the set $G^i$ is composed of all vectors of the shape $[u_1,\ldots,u_{m-1}, e_i^d] $, where the $u_j \in \R^d$ are one-hot encodings and $e_i^d = [0,\ldots,1,\ldots,0]$ is a one-hot encoding of $i\in [d]$, i.e. $d-1$ zeroes and a one in position $i$.
Let us write $z \in G^k$ as $[v_1,\ldots,v_{m-1}, e_k^d]$, and consider $\tilde z := [v_1,\ldots,v_{m-1}, e_1^d] \in G^1$.
Note that $z-\tilde z = [0,\ldots,0,e_k^d - e_1^d]$.
Then any $z' = [u_1,\ldots,u_{m-1}, e_k^d] \in G^k$ can be written as $ [u_1,\ldots,u_{m-1}, e_1^d] + z-\tilde z \in \mathsf{Aff}(G^1 \cup \{z\})  $, which shows that $G^k \subset \mathsf{Aff}(G^1 \cup \{z\})$.
As $A \cap G^k \neq \emptyset$ for any $k\in I$ (by definition), this means that $G^k \subset \mathsf{Aff}(G^1 \cup A)$ for all $k\in I$.
Hence we get
\begin{equation}\label{eq:third_bound_missing_dim}
\dim \mathsf{Aff}\left(\bigcup_{k\in I} G^k\right) - \dim \mathsf{Aff}(A) \leq
2d \frac{d^{m}- |A|d/|I|}{d^{m}}
\end{equation}
by combining  \eqref{eq:first_bound_missing_dim} and \eqref{eq:second_bound_missing_dim}.
Furthermore, the same argument as above shows that if $z^j\in G^j$ for $j \in [d] \backslash I$, then $G^j \subset  \mathsf{Aff}(G^1 \cup \{z^j\})$, which means that 
\[G =  \mathsf{Aff}\left(\left(\bigcup_{k\in I} G^k\right) \cup \{z^j\}_{j\in [d] \backslash I}\right). \]
This means that adding $d-|I|$ vectors to $\mathsf{Aff}\left(\bigcup_{k\in I} G^k\right)$ is enough for the resulting set to affinely generate $G$; this is equivalent to saying that 
\begin{equation}\label{eq:fourth_bound_missing_dim}
\dim \mathsf{Aff}\left(G\right)- \dim \mathsf{Aff}\left(\bigcup_{k\in I} G^k\right) \leq d- |I|.
\end{equation}
By combining \eqref{eq:third_bound_missing_dim} and \eqref{eq:fourth_bound_missing_dim}, we find that 
\begin{align*}
\dim \mathsf{Aff}\left(G\right)- \dim \mathsf{Aff}\left(A\right) & \leq 2d \frac{d^{m}- |A|d/|I|}{d^{m}} + d- |I| \\
&=  2d \frac{d^{m}- |A|}{d^{m}} + 2d \frac{ |A| }{d^{m}}( 1-d/|I|) + d- |I|.  
\end{align*} 
Now we only need to show that  $2d \frac{ |A| }{d^{m}}( 1-d/|I|) + d- |I|   \leq 0$ to complete the recurrence and prove the lemma.
Remember that we have assumed that $\frac{ |A| }{d^{m}} \geq 1/2$. Note also that as $A = \bigsqcup_{k\in I} (G^k \cap A) \leq |I| d^{m-1}$, we have $|A|/d^m \leq |I|/d $. Then the desired inequality is equivalent (by setting $a:=\frac{ |A| }{d^{m}}$ and $c:= |I|/d$)  to showing that for any $a\in [1/2,1]$ and any $c \in [a,1]$, we have 
\[2a(1 - 1/c) + 1 - c \leq 0, \]
which is a simple exercise (one sees that the expression is an increasing function of $c$ on $[a,1]$ by deriving with respect to $c$, and that it is equal to $0$ when $c=1$). 

\end{proof}

\begin{theorem}
 Consider the setting where $p(.|z)$ follows AED $\forall z \in \S$, $\mathcal{Z}^{\mathsf{train}}$ comprises of $s$ attribute vectors $z$ drawn uniformly at random from $\S$, and the test distribution $q$ satisfies compositional shift characterization.  If  $s \geq  2d(m+1+\ln(d))$, $ \hat{p}(z|x) = p(z|x), \forall z \in \mathcal{Z}^{\mathsf{train}}, \forall x\in \mathbb{R}^{n}$, $\hat{q}(z) = q(z), \forall z \in \mathcal{Z}^{\mathsf{test}}$,  then the output of CRM (\eqref{eqn:hatq}) matches the test distribution, i.e., $\hat{q}(z|x) = q(z|x) $, $\forall z\in \mathcal{Z}^{\mathsf{test}}, \forall x\in \mathbb{R}^n$, with probability greater than $1-\frac{1}{c}$. 
\end{theorem}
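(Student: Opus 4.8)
The plan is to recognize this final statement as a corollary that simply stitches together two results already established in the excerpt: the combinatorial coverage guarantee of Theorem~\ref{thm: md_affine_hull_G} and the deterministic extrapolation guarantee of Theorem~\ref{thm:disc_exp}. Essentially all the work has been done in proving those two theorems (in particular the delicate dimension-counting Lemma~\ref{lem:bound_missing_dim}), so the only genuinely new content is checking that the hypotheses chain together, and the argument should be short. It mirrors exactly the structure of the $m=2$ proof of Theorem~\ref{thm4_m2}.

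First I would invoke Theorem~\ref{thm: md_affine_hull_G}. Since $\mathcal{Z}^{\mathsf{train}}$ consists of $s$ groups drawn i.i.d.\ uniformly from $\S = \{1,\ldots,d\}^m$, and $s$ exceeds the stated threshold, that theorem gives $\mathsf{DAff}(\mathcal{Z}^{\mathsf{train}}) = \S$ with probability at least $1-\frac{1}{c}$. I would then condition on this high-probability event for the remainder of the argument. On that event I would use the second defining property of compositional shift, namely $\mathcal{Z}^{\mathsf{test}} \subseteq \S$, to obtain the inclusion $\mathcal{Z}^{\mathsf{test}} \subseteq \S = \mathsf{DAff}(\mathcal{Z}^{\mathsf{train}})$. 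With this inclusion in hand, every hypothesis of Theorem~\ref{thm:disc_exp} is met: $p(\cdot|z)$ is AED for all $z \in \S$, $q$ satisfies the compositional shift characterization, $\mathcal{Z}^{\mathsf{test}} \subseteq \mathsf{DAff}(\mathcal{Z}^{\mathsf{train}})$, and by assumption $\hat p(z|x) = p(z|x)$ on $\mathcal{Z}^{\mathsf{train}}$ together with $\hat q(z) = q(z)$ on $\mathcal{Z}^{\mathsf{test}}$. Applying Theorem~\ref{thm:disc_exp} then yields $\hat q(z|x) = q(z|x)$ for all $z \in \mathcal{Z}^{\mathsf{test}}$ and all $x \in \mathbb{R}^n$; since this holds on an event of probability at least $1-\frac{1}{c}$, the claim follows.

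The main obstacle is therefore not in this combination step, which is routine bookkeeping, but was already dispatched inside Theorem~\ref{thm: md_affine_hull_G}: the coupon-collector-style control of $\mathbb{E}[T_\mathsf{sum}]$ through geometric random variables and Markov's inequality, powered by the dimension bound of Lemma~\ref{lem:bound_missing_dim}. The one detail I would double-check is the exact constant in the sampling threshold. Theorem~\ref{thm: md_affine_hull_G} produces probability $1-\frac{1}{c}$ precisely when $s \geq 2cd(m+1+\ln(d))$, so for the stated $1-\frac{1}{c}$ conclusion to hold the factor $c$ must be present in the threshold; I would write the hypothesis as $s \geq 2cd(m+1+\ln(d))$ to keep it consistent with the coverage lemma it relies on.
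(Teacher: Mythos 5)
Your proposal matches the paper's own proof exactly: it invokes Theorem~\ref{thm: md_affine_hull_G} to get $\mathsf{DAff}(\mathcal{Z}^{\mathsf{train}}) = \S$ with probability at least $1-\frac{1}{c}$, uses the compositional-shift property $\mathcal{Z}^{\mathsf{test}} \subseteq \S$ to conclude $\mathcal{Z}^{\mathsf{test}} \subseteq \mathsf{DAff}(\mathcal{Z}^{\mathsf{train}})$, and then applies Theorem~\ref{thm:disc_exp}. Your observation about the missing factor of $c$ in the stated threshold is also correct — the hypothesis should read $s \geq 2cd(m+1+\ln(d))$ to be consistent with Theorem~\ref{thm: md_affine_hull_G} (and with the main-text statement of Theorem~\ref{thm4}); the appendix version appears to have dropped it by typo.
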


\begin{proof}
    Firstly, we can use Theorem~\ref{thm: md_affine_hull_G} to conclude that $\mathsf{DAff}(\mathcal{Z}^{\mathsf{train}}) = \mathcal{Z}^{\times}$ with probability at least $1-1/c$. Owing to compositional shifts $\mathcal{Z}_{\mathsf{test}}\subseteq \mathsf{DAff}(\mathcal{Z}^{\mathsf{train}})$. We can now use Theorem~\ref{thm:disc_exp} to arrive at the result. 
\end{proof}

\subsection{Discrete Affine Hull: A Closer Look}
\label{sec:disc_aff_closer_look}
In the next result, we aim to give a characterization of discrete affine hull that helps us give a two-dimensional visualization of $\mathsf{DAff}(\mathcal{Z}^{\mathsf{train}})$.  Before we even state the result, we illustrate discrete affine hull of a $6\times 6$ grid. Consider the $6\times 6$ grid shown in Figure~\ref{fig1: affine_hull}. The attribute combinations corresponding to the observed groups are shown as solid colored cells (blue and yellow). The light shaded elements (blue and yellow) denote the set of groups that belong to the affine hull of the solid colored groups. We now build the characterization that helps explain this visualization.

\begin{figure}
    \centering
    \includegraphics[width=0.9\linewidth]{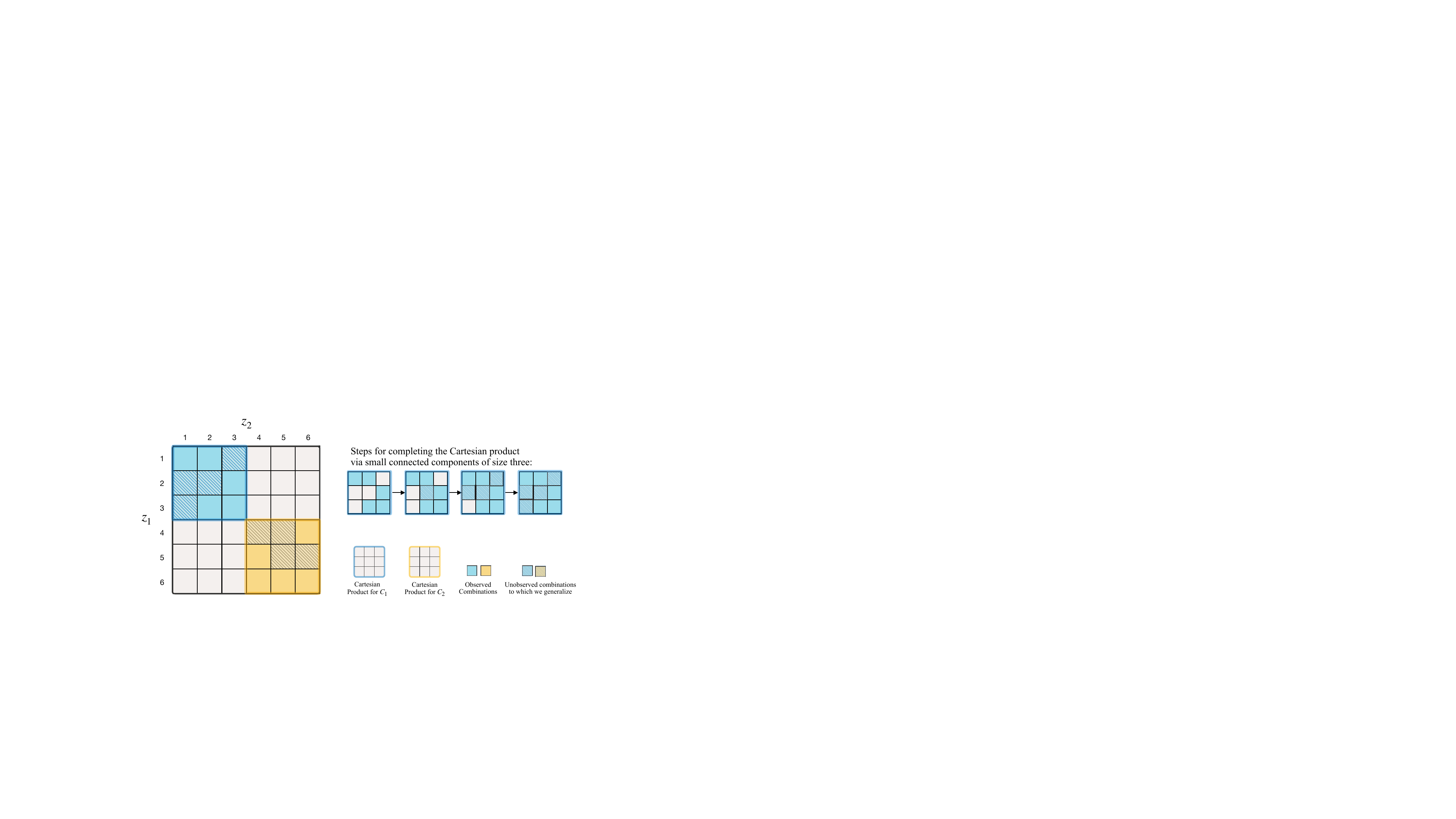}
    \caption{Illustration of the discrete affine hull. Each cell in the $6\times 6$ grid represents an attribute combination, where observed combinations are solid-colored. The elements in blue form one connected component, $C_1$, and the elements in yellow form another connected component, $C_2$. Extrapolation is possible for unobserved combinations, represented by the crosshatched cells, as long as the test distribution samples from the Cartesian products of the connected components. The steps for completing the Cartesian product visually shows the intuition behind the extrapolation process.}
    \label{fig1: affine_hull}
\end{figure}

 We introduce a graph on the attribute vectors observed. Each vertex corresponds to the attribute vector, i.e., $[z_1,z_2]$. There is an edge between two vertices if the Hamming distance between the attribute vectors is one. A connected component is a subgraph in which all vertices are connected, i.e., between every pair in the subgraph there exists a path. Let us start by making an observation about the connected components in this graph. 

We consider a partition of observed groups into  $K$ maximally connected components, $\{C_1,\cdots, C_K\}$. Define $C_{ij}$ as the set of values the $j^{th}$ component takes in the $i^{th}$ connected component. Observe that $C_{ij}\cap C_{lj} = \emptyset$ for $i\not=l$.  Suppose this was not that case and $C_{ij}\cap C_{lj} \not= \emptyset$. In such a case, there exists a point in $C_{i}$ and another point in $C_{l}$ that share the $j^{th}$ component. As a result, the two points are connected by an edge and hence that would connect $C_i$ and $C_j$. This contradicts the fact that $C_i$ and $C_j$ are maximally connected, i.e., we cannot add another vertex to the graph while maintaining that there is a path between any two points in the component. In what follows, we will show that the afine hull of $C_j$ is $C_{j1}\times C_{j2}$, which is the Cartesian product extension of set $C_j$. Next, we give some definitions and make a simple observation that allows us to think of sets $C_{j1}\times C_{j2}$ as subgrids, which are easier to visualize. 

\begin{definition}
\textbf{Contiguous connected component:} For each coordinate $j\in\{1,2\}$, consider the smallest value and the largest value assumed by it in the connected component $C$ and call it $\min_j$ and $\max_j$. We say that the connected component  $C$ is contigous if each value in the set $\{\min_j,\min_j+1, \cdots, \max_j-1,\max_j\}$ is assumed by some point in $C$ for all $j \in \{1,2\}$.
\end{definition}

\textbf{Smallest subgrid containing a contigous connected component $C$:}  The range of values assumed by $j^{th}$ coordinate in $C$, where $j\in \{1,2\}$,  are $\{\min_j, \cdots, \max_j\}$. The subgrid $\{\min_1, \cdots \max_1\}\times \{\min_2, \cdots \max_2\}$ is the smallest subgrid containing $C$.  Observe that this subgrid is the smallest grid containing $C$ because if we drop any column or row, then some point taking that value in $C$ will not be in the subgrid anymore. 

The groups observed at training time can be divided into $K$ maximally connected components $\{C_1,\cdots, C_K\}$. We argue that without any loss of generality each of these components are contiguous. Suppose  some of the components in $\{C_1,\cdots, C_K\}$ are not contiguous. We relabel the first coordinate as $\pi(c_{i1}^{r}) = \sum_{j<i}|C_{j1}| +  r$, where $c_{i1}^{r}$ is the $r^{th}$ point in $C_{i1}$. We can similarly relabel the second coordinate as well. Under the relabeled coordinates, each component is maximally connected and contiguous. Also, under this relabeling the Cartesian products $C_{j1}\times C_{j2}$ correspond to the smallest subgrid containing $C_{j}$.  Let us go back to the setting of Figure~\ref{fig1: affine_hull}. The sets of observed groups shown in solid blue and solid yellow form two connected components $C_1$ and $C_2$ respectively. Their Cartesian product extensions are shown as well in the Figure~\ref{fig1: affine_hull}. Since the connected components were contiguous the Cartesian product extensions correspond to smallest subgrids containing the respective connected component.

\begin{theorem}
\label{thm:nec}
Given the partition of training support as $\mathcal{Z}^{\mathsf{train}}=\{ C_1, \cdots, C_K\}$, we have:
\begin{itemize}
  \item  The affine span of a contiguous connected component $C$ is the smallest subgrid that contains that connected component $C$.
 \item   The affine span of the union over disjoint contiguous connected components is the union of the smallest subgrids that contain  the respective connected components.
\end{itemize}
\end{theorem}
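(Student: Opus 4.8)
Throughout I read ``affine span'' as the discrete affine hull $\mathsf{DAff}$. Fix a contiguous connected component $C$ and write $A$, $B$ for the sets of values its first and second coordinates assume, so the smallest subgrid containing $C$ is $A\times B$. The plan for the first bullet is a double inclusion. The inclusion $\mathsf{DAff}(C)\subseteq A\times B$ is the easy half: if $z\in\mathsf{DAff}(C)$ then $\sigma(z)=\sum_{w\in C}\alpha_w\sigma(w)$ with $\sum_w\alpha_w=1$, so reading off the first $d$ coordinates gives $\mathrm{onehot}(z_1)=\sum_{w\in C}\alpha_w\,\mathrm{onehot}(w_1)$; the right-hand side is supported on $A$ while the left is nonzero exactly at position $z_1$, forcing $z_1\in A$, and symmetrically $z_2\in B$.

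For the reverse inclusion $A\times B\subseteq\mathsf{DAff}(C)$, the substantive step, the idea is that $\mathsf{DAff}(C)$ is closed under the rectangle rule of Lemma~\ref{lemma1} (three corners of an axis-aligned $2\times2$ box place the fourth in the hull) and that, for connected $C$, iterating this fills the whole box. I would encode $C$ as a bipartite graph $H$ on vertex set $A\sqcup B$, with one edge $a\!-\!b$ per point $(a,b)\in C$; two points of $C$ lie at Hamming distance one iff their edges share a vertex, so $C$ is connected iff $H$ is connected. Given $a\in A$ and $b\in B$, I take an alternating path $a=a_0,b_1,a_1,b_2,\ldots,b_k=b$ in $H$ and promote membership along it: from $(a_0,b_1)\in C$, a short induction combines $(a_0,b_j)\in\mathsf{Aff}(C)$ with the consecutive path-points $(a_j,b_j),(a_j,b_{j+1})\in C$ to obtain $(a_0,b_{j+1})\in\mathsf{Aff}(C)$ via Lemma~\ref{lemma1}, ending at $(a,b)\in\mathsf{DAff}(C)$. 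Equivalently, one may grow rectangular subgrids by applying Theorem~\ref{theorem1} along a spanning tree of $H$. The two inclusions give the first bullet.

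For the second bullet, $\bigcup_j(C_{j1}\times C_{j2})\subseteq\mathsf{DAff}(\bigcup_jC_j)$ follows from monotonicity of $\mathsf{DAff}$ and the first bullet. The content is the reverse inclusion, whose key input is the disjointness $C_{ij}\cap C_{lj}=\emptyset$ for $i\ne l$ noted above. Given $z\in\mathsf{DAff}(\bigcup_jC_j)$, write $\sigma(z)=\sum_j\sum_{w\in C_j}\alpha_w\sigma(w)$ and set $\beta_j:=\sum_{w\in C_j}\alpha_w$, so $\sum_j\beta_j=1$. Since the first-coordinate projections are pairwise disjoint, the first $d$ entries of $\sigma(z)$ split into disjoint per-component blocks; as $\mathrm{onehot}(z_1)$ is supported at the single position $z_1$, lying in exactly one $C_{j_01}$, every other block must vanish, giving $\beta_j=0$ for $j\ne j_0$ and $\beta_{j_0}=1$. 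The second coordinate gives the same conclusion with an index $j_1$, and $\beta_{j_0}=1=\beta_{j_1}$ forces $j_0=j_1$; hence $z_1\in C_{j_01}$ and $z_2\in C_{j_02}$, i.e. $z\in C_{j_01}\times C_{j_02}$.

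I expect the reverse inclusion of the first bullet to be the main obstacle, since it converts graph connectivity into reachability of every cell of the bounding box; the bipartite reformulation is what keeps this manageable, and the only bookkeeping is checking that each invoked rectangle is non-degenerate (distinct rows and columns), which holds because a path visits distinct vertices. The remaining pieces — the two easy inclusions and the block argument for the union — are routine once the disjoint-projection observation and the first bullet are in place, and I would treat the trivial singleton-component case (bounding box a single cell) separately.
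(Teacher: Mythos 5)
Your proof is correct and reaches both bullets by the same overall outline as the paper: the easy inclusions via the support of one-hot blocks, the hard inclusion of the first bullet by walking a path and repeatedly applying the $2\times 2$ rectangle rule of Lemma~\ref{lemma1}, and the second bullet via the pairwise disjointness of the components' coordinate projections. The differences are in the packaging, and they are worth noting. For the first bullet, the paper picks points $(t_1,s_2),(s_1,t_2)\in C$, takes a shortest path between them in the Hamming-distance graph on $C$, and constructs the affine combination by a two-state sign-assignment procedure along that path; your bipartite incidence graph $H$ on $A\sqcup B$ turns the same combinatorial content into a one-line induction, $\sigma(a_0,b_{j+1})=\sigma(a_0,b_j)-\sigma(a_j,b_j)+\sigma(a_j,b_{j+1})$, which is easier to verify (and, as you note, the identity holds even for degenerate rectangles, so the non-degeneracy bookkeeping is not actually load-bearing). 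For the second bullet, the paper normalizes the coefficients within each component to write the combination as $\sum_i\tilde\alpha_i z_i'$ with $z_i'\in\mathsf{Aff}(C_i)$ and then argues a fractional coordinate appears if two $\tilde\alpha_i$ are nonzero; your version instead sums the entries of each component's disjoint coordinate block directly to force $\beta_j=0$ for all but one $j$. This is slightly more robust, since it sidesteps the case where a component's total weight $\tilde\alpha_i$ vanishes while its vector contribution $\sum_j\alpha_{ij}\sigma(z_{ij})$ does not, a case the paper's normalization step passes over. Both routes buy the same theorem; yours trades the paper's explicit constructive bookkeeping for shorter, more mechanical verifications.
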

\begin{proof}
 $C$ denotes the connected component under consideration and the smallest subgrid containing it is $S$. Denote the affine span of $C$ as $A$. We first show that the subgrid $S\subseteq A$. 

We start with a target point  $t=(t_1,t_2)$ inside $S$.  We want show that the one-hot concatention of this point $t$ can be expressed as an affine combination of the points in $C$.

Firstly, if $t$ is already in $C$, then the point is trivially in the affine span. If that is not the case, then let us proceed to more involved cases. Consider the shortest path joining a point of the form $(t_1,s_2)\in C$, where $s_2\not=t_2$, and a point of the form $(s_1, t_2) \in C$, where $s_1\not=t_1$. If such points do not exist, then $t$ cannot be in $S$, which is a contradiction.

We assign a weight of $(+1)$ to the concatenation of one-hot encodings of the point $(t_1,s_2)$. We then traverse the path until we encounter a point where $s_2$ changes, note that such a point has to occur because of existence of $(s_1,t_2)$ on the path. We call this point $v=(\tilde{s}_1^{'},s_2^{'})$. The point before $v$ on the path is $w=(\tilde{s}_1^{'}, s_2)$. We assign a weight of $(-1)$ to $w$.   We summarize the path seen so far below.  We also write the weights assigned to the points 

\begin{equation}
    \begin{split}
        s &= (t_1, s_{2}) \hspace{3cm}\textcolor{blue}{(+1)} \\ 
        u &= (s_1^{'}, s_2)  \\ 
        &\vdots \\ 
        w &= (\tilde{s}_1^{'}, s_2 )  \hspace{3cm} \textcolor{blue}{(-1)}\\ 
        v &= (\tilde{s}_1^{'}, s_2^{'} )
    \end{split}
\end{equation}

 After $w$, we have a weight of $+1$ assigned to $t_1$, $-1$ assigned to $\tilde{s}_1^{'}$ (note that $\tilde{s}_1^{'}$ cannot be $t_1$, this follows from the fact that we are on shortest path between points of the form $(t_1,s_2)$  and $(s_1, t_2)$). We call this state $S_1$. After $w$, we wait for a point on the path where $\tilde{s}_1^{'}$ changes or we reach the terminal state $(s_1, t_2)$. The latter can happen if $\tilde{s}_1^{'}=s_1$. In the latter case, we assign a weight $(+1)$ to the terminal state and thus the final weights are $(+1)$ for $t_1$ and $t_2$ and zero for everything else. This leads to the desired affine combination. We call this state $T_1$, corresponding to terminal state.

Now suppose we were in a situation where we reach a point $q=(s_1^{+}, \tilde{s}_2^{'})$. The point before $q$ is $r=(\tilde{s}_1^{'}, \tilde{s}_2^{'})$. We assign a weight of $(+1)$ to $r$. We summarize the path seen after encountering $w$ below.

\begin{equation}
    \begin{split}
        v &= (\tilde{s}_1^{'}, s_2^{'} ) \\ 
        &\vdots \\ 
        r &= (\tilde{s}_1^{'}, \tilde{s}_2^{'}) \hspace{3cm} \textcolor{blue}{(+1)}\\ 
        q &= (s_1^{+}, \tilde{s}_2^{'})
    \end{split}
\end{equation}

 After $r$, we have a weight of $+1$ assigned to $t_1$ and a weight of $+1$ assigned to $\tilde{s}_2^{'}$.  We call this state $S_2$. After $r$, we wait for a point where $\tilde{s}_2^{'}$ changes. It could be that $\tilde{s}_2^{'}$ changes to $t_2$. The state before it is say $u=(s_1, \tilde{s}_2^{'})$ and last state $e = (s_1, t_2)$.  Assign a weight of $-1$ to $u$ and assign
 a weight of $+1$ to $e$. Thus we achieve the target as affine combination of points on the path. We call this state $T_2$, corresponding to the terminal state. 
 
Now let us consider the other possibility that  the terminal state has not been reached. We call such a point $m=(\tilde{s}_1^{+}, \tilde{s}_2^{+})$. The point that occurs before this point is $l = (\tilde{s}_1^{+}, \tilde{s}_2^{'})$. We assign a weight of $(-1)$ to $l$. We summarize the path taken below. 

\begin{equation}
    \begin{split}
        q &= (s_1^{+}, \tilde{s}_2^{'}) \\ 
        &\vdots \\ 
        l &=  (\tilde{s}_1^{+}, \tilde{s}_2^{'}) \hspace{3cm} \textcolor{blue}{(-1)}\\ 
        m &= (\tilde{s}_1^{+}, \tilde{s}_2^{+})
    \end{split}
\end{equation}

 After $l$, $t_1$ is assigned a weight of $+1$ and $\tilde{s}_1^{+}$ is assigned a weight of $-1$. We reach the state $S_1$ again. From this point on, the same steps repeat. We keep cycling between $S_1$ and $S_2$ until we reach the terminal state from either $S_1$ or $S_2$ at which point we achieve the desired affine combination. The cycling of states only goes on for a finite number of steps as the entire path we are concerned with has a finite length. We show the process in Figure~\ref{fig:state_trans}. Thus $S\subseteq A$.

We now make an observation about the set $A$, which is the affine hull of set $C$. Suppose the first coordinate takes values between $\{\min_1, \cdots, \max_1\}$. The corresponding one-hot encodings of the first coordinate are written as  $\{\mathrm{onehot}(\min_1), \cdots, \mathrm{onehot}(\max_1)\}$. Now consider a value $c$ which is not in $\{\min_1, \cdots, \max_1\}$.  We claim that no affine combination of vectors in $\{\mathrm{onehot}(\min_1), \cdots, \mathrm{onehot}(\max_1)\}$ can lead to $\mathrm{onehot}(c)$. We justify this claim as follows. Observe that no vector in  $\{\mathrm{onehot}(\min_1), \cdots, \mathrm{onehot}(\max_1)\}$ has a non-zero entry in the same coordinate where $\mathrm{onehot}(c)$ is also non-zero. Hence, any affine combination of vectors in $\{\mathrm{onehot}(\min_1), \cdots, \mathrm{onehot}(\max_1)\}$ will always have a zero weight in the entry where $\mathrm{onehot}(c)$ is non-zero. It is now clear that the first component of affine hull of $A$ is always between $\{\min_1, \cdots, \max_1\}$. Similarly, the second component of affine hull of $A$ is always between $\{\min_2, \cdots, \max_2\}$. Therefore, $A\subseteq S$. As a result, $A=S$. Another way to say this is that $\mathsf{DAff}(C_j) = C_{j1}\times C_{j2}$.

We now move to the second part of the theorem. We have already shown that $\mathsf{DAff}(C_j) = C_{j1}\times C_{j2}$. We now want to show that $$\mathsf{DAff}\big(\bigcup_{j=1}^{K} C_j\big) = \bigcup_{j=1}^{K } \Big( C_{j1}\times C_{j2} \Big)$$

Observe that $\mathsf{DAff}(A) \subseteq \mathsf{DAff}(A \cup B) $ and $\mathsf{DAff}(B) \subseteq \mathsf{DAff}( A \cup B)$, which implies  $\mathsf{DAff}(A)\cup \mathsf{DAff}(B) \subseteq \mathsf{DAff}(A \cup B) $. Therefore, from the first part and this observation it follows that $\bigcup_{j=1}^{K } \Big( C_{j1}\times C_{j2} \Big)  \subseteq \mathsf{DAff}\big(\bigcup_{j=1}^{K} C_j\big)$. We now show $\mathsf{DAff}\big(\bigcup_{j=1}^{K} C_j\big) \subseteq \bigcup_{j=1}^{K } \Big( C_{j1}\times C_{j2} \Big)$.

 Take the $K$ maximally connected components $\{C_1, \cdots, C_K\}$ and let the set of respective smallest subgrids containing them be $\{S_1, \cdots, S_K\}$. Define a point $z'$ as the affine combination of points across these components as $z' = \sum_{i=1}^{K}\sum_{j=1}^{N_i} \alpha_{ij}z_{ij}$, where $z_{ij}$ is the $j^{th}$ point in $C_i$, which contains $N_i$ points. We can also write $z'$ as $z' = \sum_{i=1}^{K} \Big(\sum_{j=1}^{N_i} \alpha_{ij}\Big) \sum_{j=1}^{N_i} \frac{\alpha_{ij}}{\sum_{j=1}^{N_i} \alpha_{ij}}z_{ij}$. Define $z'_{i} = \sum_{j=1}^{N_i} \frac{\alpha_{ij}}{\sum_{j=1}^{N_i} \alpha_{ij}}z_{ij}$. Observe that $z'_i$ is in the affine combination of points in $C_i$ and hence $z'_i$ is a point in $S_i$. Let $\tilde{\alpha}_i = \sum_{j=1}^{N_i} \alpha_{ij}$. In this notation, we can see $z'$ is an affine combination of  $z'_i$'s denoted as $\sum_{i=1}^{K}\tilde{\alpha}_i z'_i$. In this representation, there is at most one point per $S_i$ in the affine combination. There are two cases to consider. In the first case, exactly one component $\tilde{\alpha}_i$ is non-zero and rest all components are zero. In the second case, at least two components $\tilde{\alpha}_i$'s are non-zero. In this setting, we can only keep the non-zero $\tilde{\alpha}_i$'s in the sum denoted as $\sum_{i}\tilde{\alpha}_i z'_i$. Suppose $z'_i = (e_p, e_q)$ (without loss of generality), where $e_p$ is one-hot vector that is one on the $p^{th}$ coordinate. Observe that no other point in the sum  $\sum_{i}\tilde{\alpha}_i z'_i$ will have a non-zero contribution on the $p^{th}$ coordinate. As a result, in the final vector the $p^{th}$ coordinate of the first attribute will take the value $0<\tilde{\alpha}_i<1$. This point is not a valid point in the set of all possible one-hot concatenations $\mathcal{Z}$ and hence it does not belong to the affine hull $\mathsf{DAff}\big(\bigcup_{j=1}^{K} C_j\big)$. Thus we are left with the first case. Observe that in the first case, we will always generate a point in one of the $\mathsf{DAff}(C_j)$, where $j \in \{1, \cdots, K\}$. Thus $\mathsf{DAff}\big(\bigcup_{j=1}^{K} C_j\big) \subseteq \bigcup_{j=1}^{K} \mathsf{DAff}(C_j)$, which implies $\mathsf{DAff}\big(\bigcup_{j=1}^{K} C_j\big) \subseteq \bigcup_{j=1}^{K} C_{j1}\times C_{j2}$. This completes the proof.

\end{proof}

\begin{figure}
    \centering
    \includegraphics[width=4in, trim=0 5in 0 0in]{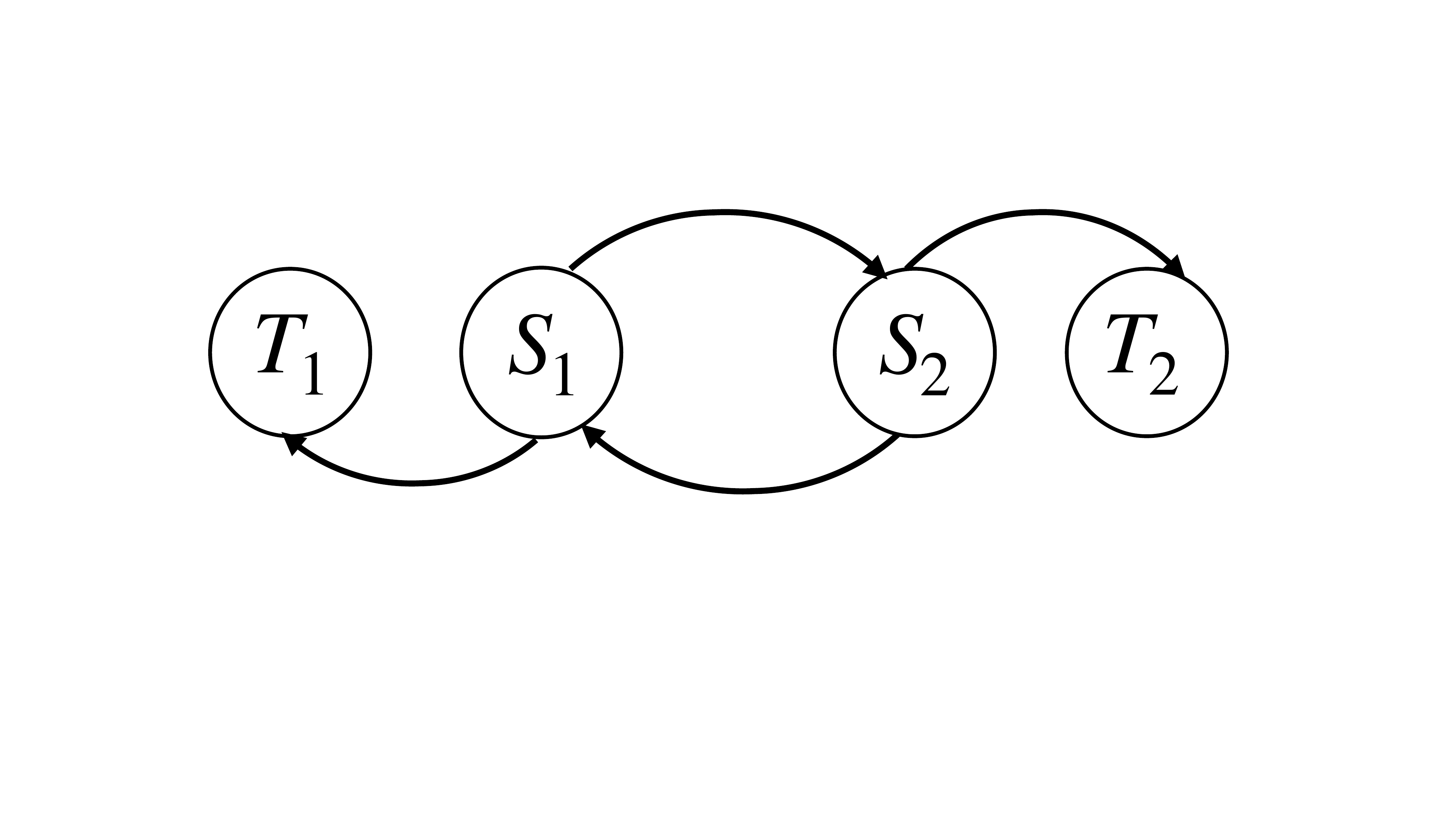}
    \caption{Illustration of state transition in proof of Theorem~\ref{thm:nec}.}
    \label{fig:state_trans}
\end{figure}

\subsection{No Extrapolation beyond Discrete Affine Hull: Proof for Theorem~\ref{thm:nec1}}
\label{sec:nec_affine_hull}

In this section, we rely on the characterization of discrete affine hulls shown in the previous section in Theorem~\ref{thm:nec}. Suppose we learn an additive energy model to estimate $\hat{p}(x|z)$ and estimate the density $p(x|z)$ for all training groups using maximum likelihood. In this case, we know that $\hat{p}(x|z) = p(x|z)$ for all $z \in \mathsf{DAff}(\mathcal{Z}^{\mathsf{train}})$. In the next theorem, we show that such densities that satisfy   $\hat{p}(x|z) = p(x|z)$ for all $z \in \mathsf{DAff}(\mathcal{Z}^{\mathsf{train}})$ may not match the true density outside the affine hull. In the next result, we assume that  $\forall z \in \S, p(\cdot|z)$ is not uniform.

\begin{theorem}
\label{thm:nec1}
    Suppose we learn an additive energy model to estimate $\hat{p}(x|z)$ and estimate the density $p(x|z)$ for all training groups.  There exist densities that maximize likelihood and exactly match the training distributions but do not extrapolate to distributions outside the affine hull of $\mathcal{Z}^{\mathsf{train}}$, i.e., $\exists z \in \mathcal{Z}^{\times}$, where $\hat{p}(\cdot|z) \not= p(\cdot|z)$. 
\end{theorem}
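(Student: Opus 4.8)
The plan is to prove sharpness by explicit construction: for a target group $z^\star \in \mathcal{Z}^{\times} \setminus \mathsf{DAff}(\mathcal{Z}^{\mathsf{train}})$, I would build an energy map $\hat E$ whose induced additive energy distribution agrees with the true $p$ on every training group, yet disagrees at $z^\star$. (The statement implicitly assumes $\mathsf{DAff}(\mathcal{Z}^{\mathsf{train}}) \subsetneq \mathcal{Z}^{\times}$, so such a $z^\star$ exists.) By definition of the discrete affine hull, $\sigma(z^\star) \notin \mathsf{Aff}\{\sigma(z): z\in\mathcal{Z}^{\mathsf{train}}\}$.

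First I would upgrade this affine-hull statement into one about the \emph{linear span} $V := \mathrm{span}\{\sigma(z): z \in \mathcal{Z}^{\mathsf{train}}\}$. The key observation is that for one-hot concatenations the two notions coincide: letting $\1_i$ denote the indicator vector of the $i$-th attribute block, every valid $\sigma(z)$ satisfies $\braket{\1_i, \sigma(z)} = 1$. Hence if $\sigma(z^\star) = \sum_{z} \alpha_z \sigma(z)$ were \emph{any} linear combination, applying $\braket{\1_i, \cdot}$ forces $\sum_z \alpha_z = 1$, so the combination is automatically affine. Thus $\sigma(z^\star) \notin \mathsf{Aff}$ implies $\sigma(z^\star) \notin V$, and standard linear algebra then yields a vector $w \in V^\perp$ with $\braket{w, \sigma(z^\star)} \neq 0$.

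Next I would define the perturbed energy $\hat E(x) := E(x) + g(x)\, w$ for a bounded, non-constant scalar function $g:\mathbb{R}^n \to \mathbb{R}$ (e.g. $g = \tanh(x_1)$), and let $\hat p(x|z)$ be the AED with this energy, extended to all of $\mathcal{Z}^{\times}$ as in Section~\ref{sec:cond-dens}. Boundedness of $g$ keeps $\hat{\mathbb{Z}}(z) \le e^{M|\braket{\sigma(z),w}|}\mathbb{Z}(z)$ finite, so $\hat p$ is a genuine AED. For training groups, $w \in V^\perp$ gives $\braket{\sigma(z), \hat E(x)} = \braket{\sigma(z), E(x)}$ exactly, hence $\hat{\mathbb{Z}}(z) = \mathbb{Z}(z)$ and $\hat p(\cdot|z) = p(\cdot|z)$, so the training likelihood is maximized. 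At $z^\star$, however, $\braket{\sigma(z^\star), \hat E(x)} = \braket{\sigma(z^\star), E(x)} + g(x)\braket{\sigma(z^\star), w}$, which yields $\hat p(x|z^\star) = (\mathbb{Z}(z^\star)/\hat{\mathbb{Z}}(z^\star))\exp(-g(x)\braket{\sigma(z^\star),w})\, p(x|z^\star)$; since $\braket{\sigma(z^\star),w}\neq 0$ and $g$ is non-constant, the prefactor is not constant in $x$, forcing $\hat p(\cdot|z^\star) \neq p(\cdot|z^\star)$.

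I expect the main obstacle — and the step most worth getting right — to be the span-versus-affine-hull identification, because the construction preserves the training densities only when $\braket{\sigma(z),w}=0$, which is a \emph{linear} orthogonality condition rather than merely an affine one; the block-sum identity $\braket{\1_i,\sigma(z)}=1$ is precisely what closes this gap and makes the counterexample valid exactly on the complement of $\mathsf{DAff}(\mathcal{Z}^{\mathsf{train}})$. The remaining ingredients (existence of $w$, finiteness of the partition functions) are routine. I would note that the non-uniformity hypothesis on $p(\cdot|z)$ serves only to keep the setup non-degenerate, as the mismatch at $z^\star$ follows from $g$ being non-constant independently of the form of $E$.
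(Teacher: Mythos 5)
Your proposal is correct, and it reaches the conclusion by a genuinely different route than the paper. The paper's proof works through the combinatorial characterization of $\mathsf{DAff}(\mathcal{Z}^{\mathsf{train}})$ from Theorem~\ref{thm:nec} (maximally connected components and their subgrids, stated for $m=2$): it adds a component-wise shift $\pm\alpha_k(x)$ to the two attribute energies inside each subgrid so that the shifts cancel on every training group, then tunes the shifts so that the total energy at a reference group outside the hull is identically zero, contradicting the standing assumption that $p(\cdot|z)$ is non-uniform. Your argument is purely linear-algebraic: you observe that the block-sum identity $\braket{\mathbf{1}_i,\sigma(z)}=1$ forces any linear representation of a valid one-hot concatenation to be automatically affine, so $z^\star\notin\mathsf{DAff}(\mathcal{Z}^{\mathsf{train}})$ gives $\sigma(z^\star)\notin\mathrm{span}\{\sigma(z)\}$, and you then perturb along an orthogonal direction $w$ by a bounded non-constant $g$. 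This buys you several things the paper's proof does not: it works for arbitrary $m$ rather than $m=2$, it does not need Theorem~\ref{thm:nec} at all, it does not need the non-uniformity assumption on $p(\cdot|z)$, and — because $g$ is bounded — every perturbed partition function stays finite, whereas the paper's construction drives the energy at $z_{\mathsf{ref}}$ to zero, which yields $\exp(0)=1$ and hence an improper ``uniform'' density on $\mathbb{R}^n$. The paper's version, in exchange, is more concretely tied to the subgrid picture and makes explicit which energy components are being traded off. In fact your $w$ can be seen as a difference of the paper's component indicator vectors, so the two constructions are consistent; yours is simply the coordinate-free generalization.
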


\begin{proof} 

We first take $\mathcal{Z}^{\mathsf{train}}$ and partition the groups into $K$ maximally connected components denoted $\{C_1, \cdots, C_K\}$.  From Theorem~\ref{thm:nec}, we know that the affine hull of $\mathcal{Z}^{\mathsf{train}}$ is the union of subgrids $\{S_1, \cdots, S_K\}$, where each subgrid  $S_j$ is the Cartesian product $C_{j1}\times C_{j2}$.

Let us consider all points $(\tilde{z}_1, \tilde{z}_2)$ in some subgrid $S_k$. For each such $(\tilde{z}_1 , \tilde{z}_2) \in S_k$, define $\hat{E}_1(x, \tilde{z}_1) = E_1(x, \tilde{z}_1) + \alpha_k(x),$  $\hat{E}_2(x, \tilde{z}_2) = E(x, \tilde{z}_2) - \alpha_k(x)$. Note that regardless of choice of $\alpha_k$ the density, $\hat{p}(x|z) = \frac{1}{\mathbb{Z}(z)}e^{-\braket{\sigma(z), \hat{E}(x)}}$ matches the true density $p(x|z)$ for all groups $z$ in $\bigcup_{i=1}^{K}S_i$.

 Select any group $z_{\mathsf{ref}} = (z_1,z_2)$ that is not in the union of subgrids.  From the definition of $\mathcal{Z}^{\times}$,  it follows that there are points of the form $(z_1, z_2')$ in one of the subgrid $S_j$ and points of the form $(z_1', z_2)$ are in some subgrid $S_r$.  Let $\alpha_j(x) = -\frac{E_1(x,z_1)+E_2(x,z_2)}{2}$ and $\alpha_r(x) = \frac{E_1(x,z_1)+E_2(x,z_2)}{2}$.  Observe that $\hat{E}_1(x,z_1) + \hat{E}_2(x,z_2) = E_1(x,z_1) + E_2(x,z_2) + \alpha_j(x) -\alpha_r(x) = 0$.  Thus this choice of $\alpha_j(x)-\alpha_r(x)$ ensures that  $\hat{p}(x|z_1,z_2)$ is uniform and hence cannot match the true $p(x|z_1,z_2)$.

This completes the proof. 

\end{proof}
Based on the above proof, we now argue that there exist solutions to CRM that do not extrapolate outside the affine hull. Let us consider solutions to CRM denoted $\hat{E}, \hat{B}$, which satisfies the property that $\braket{\sigma(z), \hat{E}(x)} = \braket{\sigma(z), E(x)}, \hat{B}(z) = B(z) \forall z \in \mathcal{Z}^{\mathsf{train}}$.  Following the proof above, we can choose $\hat{E}'s$ in such a way that the sum of energies at a certain reference point outside the affine hull is zero and at all points inside the affine hull the sum of energies achieve a perfect match. For the group $z_{\mathsf{ref}} = (z_1,z_2)$ not in the affine hull of $\mathcal{Z}^{\mathsf{train}}$, we set $\hat{E}_1(x,z_1) + \hat{E}_2(x,z_2)=\braket{\sigma(z), \hat{E}(x)} = 0$. 

Suppose $\hat{q}(z|x) = q(z|x), \forall z \in \mathsf{DAff}(\mathcal{Z}^{\mathsf{train }})\bigcup \{z_{\mathsf{ref}}\}$.  We now compute the likelihood ratio at $z_{\mathsf{ref}}$ and a point $z\in \mathcal{Z}^{\mathsf{train}}$. We obtain 

\begin{equation}
\begin{split}
&    \frac{\hat{q}(z_{\mathsf{ref}}|x)}{\hat{q}(z|x) } =  \frac{q(z_{\mathsf{ref}}|x)}{q(z|x) } \implies \\ 
&    -\log\Big(\frac{\hat{q}(z_{\mathsf{ref}}|x)}{\hat{q}(z|x) }\Big) = -\log \Big( \frac{q(z_{\mathsf{ref}}|x)}{q(z|x) } \Big)  \implies \\ 
& \braket{\sigma(z_{\mathsf{ref}}), \hat{E}(x)}  - \braket{\sigma(z), \hat{E}(x)}  = \braket{\sigma(z_{\mathsf{ref}}), E(x)}  - \braket{\sigma(z), E(x)} - (\theta(z) -\theta(z_{\mathsf{ref}})) \\ 
\end{split}
\end{equation}
where $\theta(z)$ corresponds to collection of all terms that only depend on $z$. We already know that $\braket{\sigma(z), \hat{E}(x)} = \braket{\sigma(z), E(x)}$ and $\braket{\sigma(z_{\mathsf{ref}}), \hat{E}(x)}=0$. Substituting these into the above expression we obtain 

\begin{equation}
    \braket{\sigma(z_{\mathsf{ref}}), E(x)} = \theta(z) - \theta(z_{\mathsf{ref}})
\end{equation}

From the above condition, it follows that $q(x|z_{\mathsf{ref}})$ is uniform. This implies that $p(x|z_{\mathsf{ref}})$ is also uniform, which contradicts the condition that $p(x|z_{\mathsf{ref}})$ is not uniform. Therefore, $\hat{q}(z|x) = q(z|x), \forall z \in \mathcal{Z}^{\mathsf{train}} \bigcup \{z_{\mathsf{ref}}\}$ cannot be true.

\subsection{Extrapolation of Discrete Additive Functions via Discrete Affine Hulls}
\label{sec:disc_additive}
Define  a real-valued function $f(z_1, \cdots, z_m)=\sum_{j=1}^{m}f_{j}(z_j)$, where each $z_j \in \{1, \cdots, d\}$, $\forall j \in \{1, \cdots, m\}$. 
Define $\boldsymbol{f}_j = [f_j(1), \cdots, f_j(d)]$ and $\boldsymbol{f} = [\boldsymbol{f}_1, \cdots, \boldsymbol{f}_m]$.  We can re-express the function $f$ in terms of one-hot encoding notation. $f(z_1, \cdots, z_m) = \braket{\sigma(z), \boldsymbol{f}}$. 
Given data $\{(z^{j},y^{j})\}_{j=1}^{s}$, where $y^{j} = f(z^{j})$. Denote $S=\{z^{j}\}_{j=1}^{s}$. Given a new point $\tilde{z}$, which is not in the training data, we seek to predict the label $\tilde{y}$. We can exploit the additive structure of the function to predict $\tilde{y}$. Suppose $\tilde{z} \in \mathsf{DAff}(S)$. From this $\sigma(\tilde{z}) =  \sum_{j=1}^{s}\alpha_j \sigma(z^{j})$, where $\sum_j \alpha_j=1$. We can express $\tilde{y}$ in terms of the seen data as follows. Observe that $\tilde{y} = \braket{\sigma(\tilde{z}), \boldsymbol{f}} = \braket{\sum_{j=1}^{s}\alpha_j \sigma(z^{j}), \boldsymbol{f}} =\sum_{j=1}^{n}\alpha_j \braket{ \sigma(z^{j}), \boldsymbol{f}} = \sum_{j=1}^{n}\alpha_j y^{j}$. From this, it follows that we can perfectly predict the labels for points in the discrete affine hull.  
Thus from Theorem~\ref{thm: md_affine_hull_G}, it follows that if points in $S$ are sampled uniformly at random, then $O(md+ d\log d)$ suffice to extrapolate to the entire grid of $d^m$ points and achieves CPE. In the work of \cite{dong2022first}, the authors also studied such discrete functions. However, their analysis does not propose the crucial object discrete affine hull, which gives a sharp characterization of extrapolation, and also do not provide sharp bounds on the number of samples needed for CPE.  In \cite{dong2022first}, show that non-trivial extrapolation is achievable provided the bipartite graph induced the probability distribution over the seen data is connected. 

\clearpage

\section{Additional Details on CRM's Adaptation to Test Distribution}
\label{app:crm-adaptation-details}
\subsection{Derivation of Bayes Optimal Classifier}

\paragraph{Three group setting.} As depicted in \Cref{fig:crm-adaptation-2d} left (train prior), suppose the data is drawn from three groups $\{+1, +1), (-1,+1), (+1,-1)\}$, which are sampled with equal probability. 

The Bayes optimal classifier predicts $(+1,+1)$ if 
\begin{equation*}
    \begin{aligned}
       e^{-\|x-(1,1)\|^2} > e^{-\|x-(1,-1)\|^2} & \implies \|x-(1,1)|^2 < \|x-(1,-1)\|^2  \\
        & \implies -2(x_1+x_2) < -2(x_1-x_2) \\ 
        &\implies x_2>0
    \end{aligned}    
\end{equation*}

\begin{equation*}
    \begin{aligned}
     e^{-\|x-(1,1)\|^2} > e^{-\|x-(-1,1)\|^2}  & \implies \|x-(1,1)\|^2 < \|x-(-1,1)\|^2 \\ 
                                                   & \implies  -2(x_1+x_2) < -2(-x_1+x_2) \\
                                                   & \implies x_1>0
    \end{aligned}    
\end{equation*}

The Bayes optimal classifier predicts $(+1,-1)$ if 
\begin{equation*}
\begin{aligned}
    e^{-\|x-(1,-1)\|^2} > e^{-\|x-(1,1)\|^2} & \implies \|x-(1,1)|^2 > \|x-(1,-1)\|^2 \\
                                             & \implies -2(x_1+x_2) > -2(x_1-x_2) \\
                                             & \implies x_2<0    
\end{aligned}
\end{equation*}

\begin{equation*}
\begin{aligned}
e^{-\|x-(1,-1)\|^2} > e^{-\|x-(-1,1)\|^2}  & \implies \|x-(1,1)\|^2 < \|x-(-1,1)\|^2 \\
                                           & \implies  -2(x_1-x_2) < -2(-x_1+x_2) \\ 
                                           & \implies x_1>x_2  
\end{aligned}
\end{equation*}

From same calculation it follows that the Bayes optimal classifier predicts $(-1,+1)$ if $x_1<0$ and $x_2>x_1$.

\paragraph{Four group setting.} As depicted in~\Cref{fig:crm-adaptation-2d} right (uniform prior), suppose the data is drawn from four groups $\{+1, +1), (-1,+1), (+1,-1), (-1,-1)\}$, which are sampled with equal probability. 

The Bayes optimal classifier for predicting the groups can be obtained using exactly same calculations as above. The Bayes optimal classifier predicts: $(+1,+1)$ if $x_1>0$ and $x_2>0$, $(-1,1)$ if $x_1<0$ and $x_2>0$, $(-1,-1)$ if $x_1<0$ and $x_2<0$, and $(-1,1)$ if $x_1<0$ and $x_2>0$. 

\subsection{Comparison with ERM}

\begin{figure}[t]
    \centering
    \begin{subfigure}[t]{0.45\columnwidth}
    \includegraphics[scale=0.25]{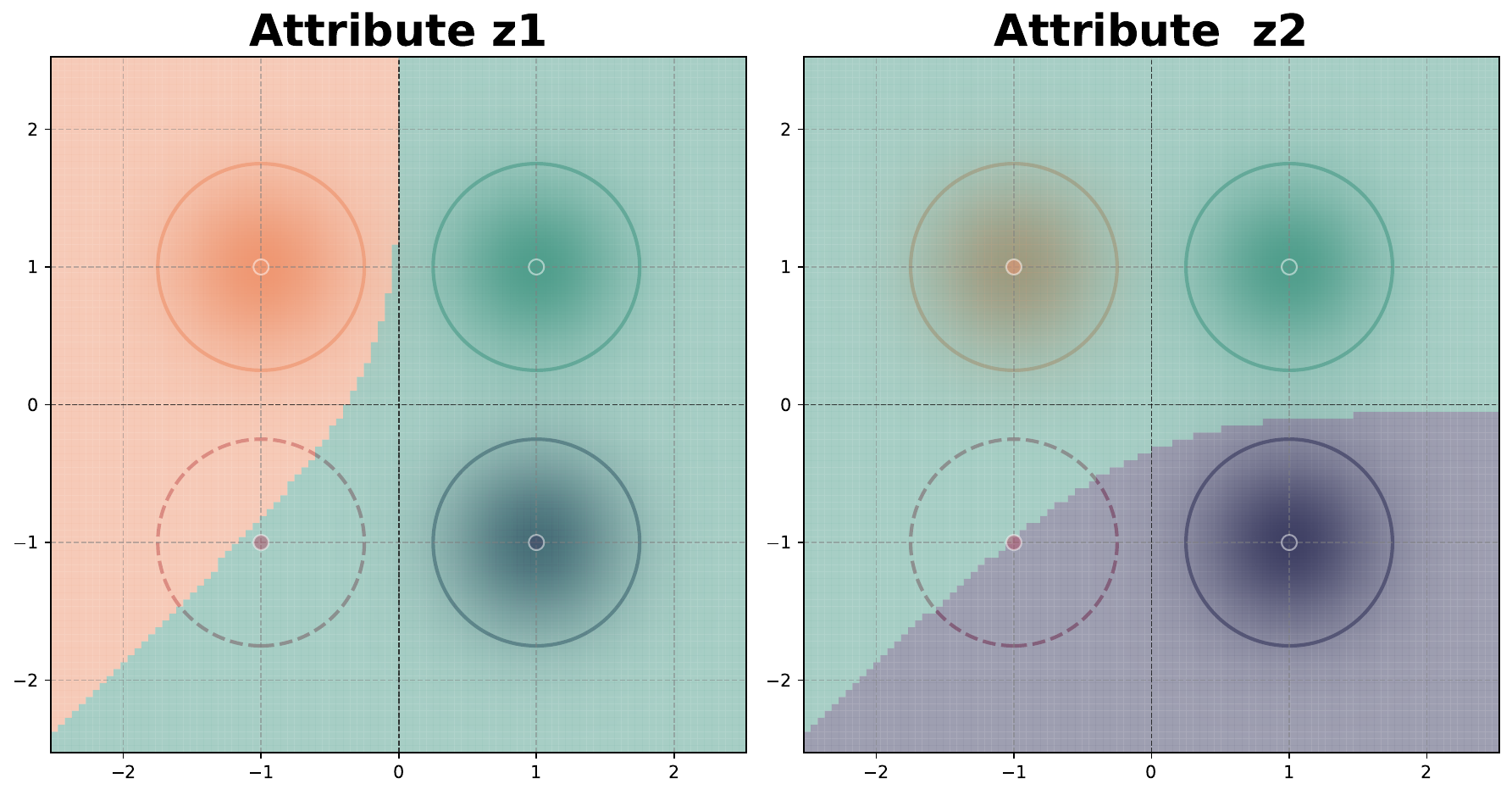}
    \caption{ERM}
    \label{fig:erm-attr-pred-2d}
    \end{subfigure} 
    \begin{subfigure}[t]{0.45\columnwidth}
    \includegraphics[scale=0.25]{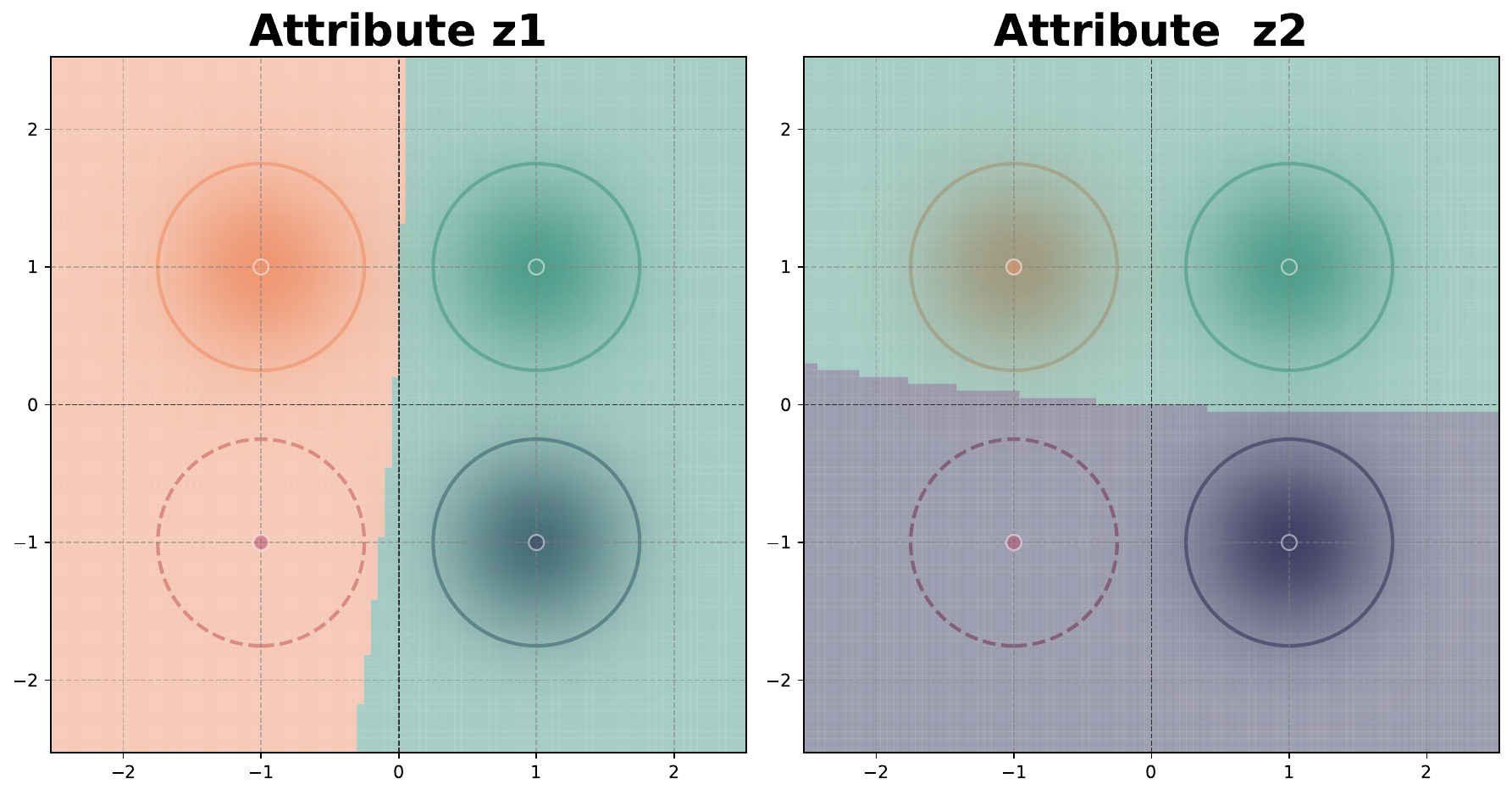}
    \caption{CRM (Uniform Prior)}
    \label{fig:crm-attr-pred-2d}
    \end{subfigure} 
    \caption{\textbf{Failure of ERM to generalize to unseen test group.} We consider the same task as in~\Cref{fig:crm-adaptation-2d}, where we model a distribution with four Gaussian components. The group $(-1,-1)$ (pink dashed) has zero prior probability in the training distribution. Subfigure (a) shows the decision boundary obtained by an ERM-trained binary classifier for predicting the attribute $z_1$ (attribute on x-axis), or attribute $z_2$ (attribute on y-axis) respectively. ERM fails to learn the decision boundary y that would generalize well on samples from the missing group $(-1, -1)$,]. Subfigure b) shows the classifiers obtained using CRM with uniform test priors (and where we marginalize the predicted group probabilities to get $q(z_1|x)$ and $q(z_2|x)$). By contrast to ERM, CRM extrapolates and yields decision boundaries that generalize well on samples from the unseen group $(-1, -1)$.
    Decision regions were obtained from
finite-data simulations, leading to minor imperfections.} 
    \label{fig:erm-adaptation-2d}
\end{figure}

We also train ERM on the same training datasets where we observe data sampled uniformly from the groups $(+1,+1), (-1,+1), (+1,-1)$, but we don't observe data from the group $(-1,-1)$. Figure~\ref{fig:erm-attr-pred-2d} shows the decision boundary of ERM for predicting the attribute $z_1$ (attribute on x-axis) and attribute $z_2$ (attribute on y-axis). Note that ERM fails to generalize to the novel group at test time (bottom left quadrant), while CRM with uniform test prior (\Cref{fig:crm-attr-pred-2d}) can adapt to the test distribution and extrapolate to the missing group.

\newpage 

\section{Experiments Setup}
\label{app:add-energy-exp-setup-details}

\subsection{Dataset Details} 
\label{app:add-energy-exp-dataset-details}

\textbf{Waterbirds~\citep{wah2011caltech}.} The task is to classify land birds ($y=0$) from water birds ($y=1$), where the spurious attributes are land background ($a=0$) and water background ($a=1$). Hence, we have a total of 4 groups $z= (y,a)$ in the dataset.\\[10px]
\textbf{CelebA~\citep{liu2015deep}.} The task is to classify blond hair ($y=1$) from non-blond hair ($y=0$), where the spurious attribute is gender,  female ($a=0$) and male ($a=1$). Hence, we have a total of 4 groups $z= (y,a)$ in the dataset.\\[10px]
\textbf{MetaShift~\citep{liang2022metashift}.} The task is to classify cats ($y=0$) from dogs ($y=1$), where the spurious attribute is background, indoor ($a=0$) and outdoor ($a=1$). Hence, we have a total of 4 groups $z= (y,a)$ in the dataset.\\[10px]
\textbf{MultiNLI~\citep{williams2017broad}.} The task is to classify the relationship between the premise and hypothesis in a text document into one of the 3 classes: netural ($y=0$), contradiction ($y=1$), and entailment ($y=2$). The spurious attribute are words like negation (binary attribute $a$), which are correlated with the contradiction class. Hence, we have a total of 6 groups $z= (y,a)$ in the dataset. \\[10px]
\textbf{CivilComments~\citep{borkan2019nuanced}.} The task is to classify whether a text document contains toxic language ($y=0$) versus it doesn't contain toxic language ($y=1$), where the spurious attribute $a$ corresponds to $8$ different demographic identities (Male, Female, LGBTQ, Christian, Muslim, Other Religions, Black, and White). Hence, we have a total of 16 groups $z= (y,a)$ in the dataset.\\[10px]
\textbf{NICO++~\citep{zhang2023nico++}.} This is a a large scale (60 classes with 6 spurious attributes) domain generalization benchmark, and we follow the procedure in~\citet{yang2023change} where all the groups with less than 75 samples were dropped from training. This leaves us with $337$ groups during training, however, the validation set still has samples from all the $360$ groups. Hence, we additionally discard these groups from the validation set as well to design the compositional shift version.

\begin{table}[h]
    \centering
    \begin{tabular}{c|ccccc}
         \toprule
         Dataset & Total Classes  & Total Groups & Train Size & Val Size & Test Size \\
        \midrule
        Waterbirds & 2 & 4 & 4795 & 1199 & 5794 \\
        CelebA & 2 & 4 & 162770 & 19867 & 19962 \\
        MetaShift & 2 & 4 & 2276 & 349 & 874 \\
        MultiNLI & 3 & 6 & 206175 & 82462 & 123712 \\
        CivilComments & 2 & 16 & 148304 & 24278 & 71854 \\
        NICO++ & 60 & 360 & 62657 & 8726 & 17483 \\ 
        \bottomrule
    \end{tabular}
    \caption{Statitics for the different benchmarks used in our experiments.}
    \label{tab:my_label}
\end{table}

\subsection{Metric Details}
\label{app:add-energy-exp-metric-details}

Given the test distributions $z=(y, a) \sim q(z)$ and $x \sim q(x|z)$, lets denote the corresponding class predictions as $\hat y = \hat M(x)$ as per the method $\hat M$ . Then average accuracy is defined as follows:
\begin{equation*}
    \text{Average Acc} := \mathbb{E}_{(y,a)} \mathbb{E}_{x \sim q(x|z)} \big[ \mathds{1}[y== \hat M(x)] \big] 
\end{equation*}
Hence, this denotes the mean accuracy with groups drawn as per the test distribution $q(z)$. However, if certain (majority) groups have a higher probability of being sampled than others (minority groups) as per the distribution $q(z|x)$, then the average accuracy metric is more sensitive to mis-classifications in majority groups as compared to the minority groups. Hence, a method can achieve high average accuracy even though its accuracy for the minority groups might be low.\\[5px]
Therefore, we use the worst group accuracy metric, defined as follows.
\begin{equation*}
\text{Worst Group Acc} :=  min_{(y,a) \in \mathcal{Z}^{\mathsf{test}}} \mathbb{E}_{x \sim q(x|z)} \big[ \mathds{1}[y== \hat M(x)] \big] 
\end{equation*}
Essentially we compute the accuracy for each group $(y,a) \sim q(z)$ as $\mathbb{E}_{x \sim q(x|z)} \big[ \mathds{1}[y== \hat M(x)] | \big]$ and then report the worst performance over all the groups. This metrics has been widely used for evaluating methods for subpopulation shifts~\citep{sagawa2019distributionally, yang2023change}. \\[5px]
Similarly, we define the group balanced accuracy~\citep{tsirigotis2024group} as follows, where we compute the average of all per-group accuracy $\mathbb{E}_{x \sim q(x|z)} \big[ \mathds{1}[y== \hat M(x)] \big]$.
\begin{equation*}
\text{Group Balanced Acc} :=  \frac{1}{|\mathcal{Z}^{\mathsf{test}}|} \sum_{(y,a) \in \mathcal{Z}^{\mathsf{test}}} \mathbb{E}_{x \sim q(x|z)} \big[ \mathds{1}[y == \hat M(x)] \big] 
\end{equation*}

\subsection{Method Details}
\label{app:add-energy-exp-method-details}

For all the methods we have a \emph{pre-trained} representation network backbone with linear classifier heads. We use ResNet-50~\citep{he2016deep} for the vision datasets (Waterbirds, CelebA, MetaShift, NICO++) and BERT~\citep{devlin2018bert} for the text datasets (MultiNLI, CivilComments). The parameters of both the representation network and linear classifier are updated with the same learning rate, and do not employ any special fine-tuning strategy for the representation network. For vision datasets we use the SGD optimizer (default values for momemtum 0.9), while for the text datasets we use the AdamW optimizer~\citep{paszke2017automatic} (default values for beta (0.9, 0.999) ).

\paragraph{Hyperparameter Selection.} We rely on the group balanced accuracy on the validation set to determine the optimal hyperparameters. We specify the grids for each hyperparameter in~\Cref{tab:hparam-selection}, and train each method with 5 randomly drawn hyperparameters. The grid sizes for hyperparameter selection were designed following~\citet{xrm}.

\begin{table}[h]
    \centering
    \begin{tabular}{c|cccc} 
    \toprule
         Dataset & Learning Rate & Weight Decay & Batch Size & Total Epochs \\        
    \midrule
        Waterbirds  & $10^{\text{Uniform}(-5, -3)}$  & $10^{\text{Uniform}(-6, -3)}$  & $2^{\text{Uniform}(5,7)}$  &  5000 \\
        CelebA  & $10^{\text{Uniform}(-5, -3)}$  & $10^{\text{Uniform}(-6, -3)}$  & $2^{\text{Uniform}(5,7)}$  &  10000 \\
        MetaShift  & $10^{\text{Uniform}(-5, -3)}$  & $10^{\text{Uniform}(-6, -3)}$  & $2^{\text{Uniform}(5,7)}$  &  5000 \\
        MulitNLI  & $10^{\text{Uniform}(-6, -4)}$  & $10^{\text{Uniform}(-6, -3)}$  & $2^{\text{Uniform}(4,6)}$  &  10000 \\
        CivilComments  & $10^{\text{Uniform}(-6, -4)}$  & $10^{\text{Uniform}(-6, -3)}$  & $2^{\text{Uniform}(4,6)}$  &  10000 \\
        NICO++  & $10^{\text{Uniform}(-5, -3)}$  & $10^{\text{Uniform}(-6, -3)}$  & $2^{\text{Uniform}(5,7)}$  &  10000 \\
    \bottomrule
    \end{tabular}
    \caption{Details about the grids for hyperparameter selection. The choices for grid sizes were taken from~\citet{xrm}.}
    \label{tab:hparam-selection}
\end{table} 

\newpage

\section{Additional Results}
\label{app:additional_results_section}

\subsection{Results Averaged over all the Compositional Shift Scenarios}
\label{app:add-energy-exps-full-table}

We provide complete results on benchmarking CRM for compositional shifts on the datasets Waterbirds~\citep{wah2011caltech}, CelebA~\citep{liu2015deep}, MetaShift~\citep{liang2022metashift}, MultiNLI~\citep{williams2017broad}, CivilComments~\cite{borkan2019nuanced}, and NICO++ dataset~\citep{zhang2023nico++}. We compare CRM against $7$ baselines; ERM, Group Distributionally Robust Optimization (GroupDRO)~\citep{sagawa2019distributionally}, Logit Correction (LC)~\citep{liu2022avoiding}, supervised logit adjustment (sLA)~\citep{tsirigotis2024group}, Invariant Risk Minimization (IRM)~\citep{arjovsky2019invariant}, Risk Extrapolation (VREx)~\citep{krueger2021out}, and Mixup~\citep{zhang2017mixup}. 

Table~\ref{tab:main-results-full} provides full results for the same, with groups balanced accuracy, standard across 3 random seeds, and baselines IRM, VREx, and Mixup, that were excluded from Table~\ref{tab:main-results} in the main paper. We find that CRM is either competitive or outperforms all the baselines w.r.t. both group balanced accuracy and the worst group accuracy. Also, CRM outperforms the baselines IRM, VREx, and Mixup by a wide margin w.r.t WGA. Hence, we had decided to drop these baselines from the~\Cref{tab:main-results} in the main paper to show succinct comparison with strong baselines.

Further, in scenarios where CRM is competitive with baselines on WGA (Waterbirds, MultiNLI, CivilComments), please note that here we aggregate performance over multiple compositional shift scenarios. When we analyze the worst case compositional shift scenario, we find that CRM outperforms all the baselines. Please check the next subsection (\Cref{app:add-energy-exp-detailed-resutls}) for detailed results in each compositional shift scenario.

\begin{table*}[t]
    \centering
    \small
\begin{tabular}{l|lll>{\columncolor{orange!10}}l|>{\color{gray}}l}
\toprule
   Dataset & Method &               Average Acc &   Balanced Acc &      Worst Group Acc  & \thead{ Worst Group Acc \\ \scriptsize{(No Groups Dropped)} }\\
\midrule
\multirow[c]{8}{*}{Waterbirds} &                   ERM & $ 77.9 $ \small{$( 0.1)$} & $ 75.3 $ \small{$( 0.1)$} & $ 43.0 $ \small{$( 0.2)$}  & $62. 3$ \small{$(1.2)$}\\
    &                 G-DRO & $ 77.9 $ \small{$( 0.9)$} & $ 78.8 $ \small{$( 0.7)$} & $ 42.3 $ \small{$( 2.6)$} & $ 87.3 $ \small{$( 0.3)$}\\
    &                    LC & $ 88.3 $ \small{$( 0.9)$} & $ 86.9 $ \small{$( 0.6)$} & $ 75.5 $ \small{$( 1.8)$} & $ 88.7 $ \small{$( 0.3)$} \\
    &                   sLA & $ 89.3 $ \small{$( 0.4)$} & $ 87.5 $ \small{$( 0.4)$} & $ 77.3 $ \small{$( 1.4)$} & $ 89.7 $ \small{$( 0.3)$} \\
    &                   IRM & $ 73.6 $ \small{$( 0.8)$} & $ 70.4 $ \small{$( 0.3)$} & $ 28.7 $ \small{$( 2.2)$} & $ 72.3 $ \small{$( 1.2)$} \\
    &                  VREx & $ 81.0 $ \small{$( 0.6)$} & $ 80.0 $ \small{$( 0.5)$} & $ 45.6 $ \small{$( 1.1)$} & $ 84.3 $ \small{$( 0.7)$} \\
    &                 Mixup & $ 81.6 $ \small{$( 0.1)$} & $ 79.9 $ \small{$( 0.1)$} & $ 52.2 $ \small{$( 0.4)$} & $ 69.7 $ \small{$( 0.9)$} \\
\rowcolor{blue!10}  \cellcolor{white}     
&                   CRM & $ 87.1 $ \small{$( 0.7)$} & $ 87.8 $ \small{$( 0.1)$} & $ 78.7 $ \small{$( 1.0)$} & $ 86.0 $ \small{$( 0.6)$} \\

\midrule

    \multirow[c]{8}{*}{CelebA} 
        &                   ERM & $ 85.8 $ \small{$( 0.3)$} & $ 75.6 $ \small{$( 0.1)$} & $ 39.0 $ \small{$( 0.3)$} & $ 52.0 $ \small{$( 1.0)$}\\
        &                 G-DRO & $ 89.2 $ \small{$( 0.5)$} & $ 86.8 $ \small{$( 0.1)$} & $ 67.8 $ \small{$( 0.8)$} &  $ 91.0 $ \small{$( 0.6)$}\\
        &                    LC & $ 91.1 $ \small{$( 0.2)$} & $ 83.5 $ \small{$( 0.0)$} & $ 57.4 $ \small{$( 0.5)$} & $ 90.0 $ \small{$( 0.6)$}\\
        &                   sLA & $ 90.9 $ \small{$( 0.2)$} & $ 83.6 $ \small{$( 0.3)$} & $ 57.4 $ \small{$( 1.3)$} & $ 86.7 $ \small{$( 1.9)$}\\
        &                   IRM & $ 80.4 $ \small{$( 1.3)$} & $ 76.7 $ \small{$( 1.1)$} & $ 40.1 $ \small{$( 2.4)$} & $ 67.7 $ \small{$( 3.5)$}\\
        &                  VREx & $ 86.2 $ \small{$( 0.3)$} & $ 82.8 $ \small{$( 0.5)$} & $ 49.2 $ \small{$( 2.1)$} & $ 89.0 $ \small{$( 0.6)$}\\
        &                 Mixup & $ 84.9 $ \small{$( 0.2)$} & $ 77.9 $ \small{$( 0.2)$} & $ 42.8 $ \small{$( 0.9)$} & $ 62.0 $ \small{$( 1.0)$} \\
 \rowcolor{blue!10}  \cellcolor{white}
        &                   CRM & $ 91.1 $ \small{$( 0.2)$} & $ 89.2 $ \small{$( 0.0)$} & $ 81.8 $ \small{$( 0.5)$} & $ 89.0 $ \small{$( 0.6)$}  \\

\midrule

    \multirow[c]{8}{*}{MetaShift} 
     &                   ERM & $ 85.7 $ \small{$( 0.4)$} & $ 81.7 $ \small{$( 0.3)$} & $ 60.5 $ \small{$( 0.5)$} & $ 63.0 $ \small{$( 0.0)$} \\
     &                 G-DRO & $ 86.0 $ \small{$( 0.3)$} & $ 82.6 $ \small{$( 0.2)$} & $ 63.8 $ \small{$( 1.1)$} & $ 80.7 $ \small{$( 1.3)$}\\
     &                    LC & $ 88.5 $ \small{$( 0.0)$} & $ 85.0 $ \small{$( 0.0)$} & $ 68.2 $ \small{$( 0.5)$} & $ 80.0 $ \small{$( 1.2)$} \\
     &                   sLA & $ 88.4 $ \small{$( 0.1)$} & $ 84.0 $ \small{$( 0.0)$} & $ 63.0 $ \small{$( 0.5)$} & $ 80.0 $ \small{$( 1.2)$} \\
     &                   IRM & $ 83.7 $ \small{$( 0.3)$} & $ 80.3 $ \small{$( 0.4)$} & $ 55.8 $ \small{$( 1.0)$} & $ 69.3 $ \small{$( 2.4)$} \\
     &                  VREx & $ 84.9 $ \small{$( 0.4)$} & $ 81.7 $ \small{$( 0.3)$} & $ 59.9 $ \small{$( 0.2)$} & $ 75.3 $ \small{$( 2.2)$} \\
     &                 Mixup & $ 86.8 $ \small{$( 0.0)$} & $ 82.8 $ \small{$( 0.1)$} & $ 62.8 $ \small{$( 0.7)$} & $ 68.3 $ \small{$( 2.7)$} \\
    \rowcolor{blue!10}  \cellcolor{white}
     &                   CRM & $ 87.6 $ \small{$( 0.3)$} & $ 84.7 $ \small{$( 0.2)$} & $ 73.4 $ \small{$( 0.4)$} & $ 74.7 $ \small{$( 1.5)$} \\

\midrule
    \multirow[c]{8}{*}{MultiNLI} 
      &                   ERM & $ 68.4 $ \small{$( 2.1)$} & $ 68.1 $ \small{$( 1.9)$} &  $ 7.5 $ \small{$( 1.3)$} & $ 68.0 $ \small{$( 1.7)$}\\
      &                 G-DRO & $ 70.4 $ \small{$( 0.2)$} & $ 73.7 $ \small{$( 0.2)$} & $ 34.3 $ \small{$( 0.2)$} & $ 57.0 $ \small{$( 2.3)$}\\
      &                    LC & $ 75.9 $ \small{$( 0.1)$} & $ 77.3 $ \small{$( 0.2)$} & $ 54.3 $ \small{$( 1.0)$} & $ 74.3 $ \small{$( 1.2)$}\\
      &                   sLA & $ 76.4 $ \small{$( 0.3)$} & $ 77.4 $ \small{$( 0.2)$} & $ 55.0 $ \small{$( 1.5)$} & $ 71.7 $ \small{$( 0.3)$} \\
      &                   IRM & $ 65.7 $ \small{$( 0.1)$} & $ 63.7 $ \small{$( 0.4)$} &  $ 8.1 $ \small{$( 0.8)$} & $ 54.3 $ \small{$( 2.4)$} \\
      &                  VREx & $ 69.0 $ \small{$( 0.0)$} & $ 68.8 $ \small{$( 0.2)$} &  $ 4.1 $ \small{$( 0.3)$} & $ 69.7 $ \small{$( 0.3)$} \\
      &                 Mixup & $ 70.2 $ \small{$( 0.1)$} & $ 69.7 $ \small{$( 0.1)$} & $ 14.6 $ \small{$( 1.0)$} & $ 63.7 $ \small{$( 2.9)$} \\
    \rowcolor{blue!10}  \cellcolor{white}
      &                   CRM & $ 74.3 $ \small{$( 0.3)$} & $ 76.1 $ \small{$( 0.3)$} & $ 58.7 $ \small{$( 1.4)$} & $ 74.7 $ \small{$( 1.3)$} \\

\midrule
    \multirow[c]{8}{*}{CivilComments} 

 &                   ERM & $ 80.4 $ \small{$( 0.2)$} & $ 78.4 $ \small{$( 0.0)$} & $ 55.9 $ \small{$( 0.2)$} & $ 61.0 $ \small{$( 2.5)$} \\
 &                 G-DRO & $ 80.1 $ \small{$( 0.1)$} & $ 78.9 $ \small{$( 0.0)$} & $ 61.6 $ \small{$( 0.5)$} & $ 64.7 $ \small{$( 1.5)$} \\
 &                    LC & $ 80.7 $ \small{$( 0.1)$} & $ 79.0 $ \small{$( 0.0)$} & $ 65.7 $ \small{$( 0.5)$} & $ 67.3 $ \small{$( 0.3)$} \\
 &                   sLA & $ 80.6 $ \small{$( 0.1)$} & $ 79.1 $ \small{$( 0.0)$} & $ 65.6 $ \small{$( 0.2)$} & $ 66.3 $ \small{$( 0.9)$} \\
 &                   IRM & $ 79.7 $ \small{$( 0.2)$} & $ 78.0 $ \small{$( 0.0)$} & $ 53.5 $ \small{$( 0.5)$} & $ 60.3 $ \small{$( 1.5)$} \\
 &                  VREx & $ 79.8 $ \small{$( 0.1)$} & $ 78.7 $ \small{$( 0.1)$} & $ 57.5 $ \small{$( 0.4)$} & $ 63.3 $ \small{$( 1.5)$} \\
 &                 Mixup & $ 80.1 $ \small{$( 0.1)$} & $ 78.2 $ \small{$( 0.0)$} & $ 55.4 $ \small{$( 0.6)$} & $ 61.3 $ \small{$( 1.5)$} \\
 \rowcolor{blue!10}  \cellcolor{white}
 &                   CRM & $ 83.7 $ \small{$( 0.1)$} & $ 78.4 $ \small{$( 0.0)$} & $ 67.9 $ \small{$( 0.5)$} & $ 70.0 $ \small{$( 0.6)$}  \\

\midrule
    \multirow[c]{8}{*}{NICO++} 

        &                   ERM & $ 85.0 $ \small{$( 0.0)$} & $ 85.0 $ \small{$( 0.0)$} & $ 35.3 $ \small{$( 2.3)$} & $ 35.3 $ \small{$( 2.3)$}\\
        &                 G-DRO & $ 84.0 $ \small{$( 0.0)$} & $ 83.7 $ \small{$( 0.3)$} & $ 36.7 $ \small{$( 0.7)$} & $ 33.7 $ \small{$( 1.2)$} \\
        &                    LC & $ 85.0 $ \small{$( 0.0)$} & $ 85.0 $ \small{$( 0.0)$} & $ 35.3 $ \small{$( 2.3)$} & $ 35.3 $ \small{$( 2.3)$} \\
        &                   sLA & $ 85.0 $ \small{$( 0.0)$} & $ 85.0 $ \small{$( 0.0)$} & $ 33.0 $ \small{$( 0.0)$} & $ 35.3 $ \small{$( 2.3)$} \\
        &                   IRM & $ 64.0 $ \small{$( 0.6)$} & $ 62.7 $ \small{$( 0.3)$} &  $ 0.0 $ \small{$( 0.0)$} & $0.0$ \small{$(0.0)$} \\
        &                  VREx & $ 86.0 $ \small{$( 0.0)$} & $ 86.0 $ \small{$( 0.0)$} & $ 37.3 $ \small{$( 4.3)$} & $38.0$ \small{$(5.0)$} \\
        &                 Mixup & $ 85.0 $ \small{$( 0.0)$} & $ 84.7 $ \small{$( 0.3)$} & $ 33.0 $ \small{$( 0.0)$} & $33.0$ \small{$(0.0)$} \\
 \rowcolor{blue!10}  \cellcolor{white}
        &                   CRM & $ 84.7 $ \small{$( 0.3)$} & $ 84.7 $ \small{$( 0.3)$} & $ 40.3 $ \small{$( 4.3)$} & $ 39.0 $ \small{$( 3.2)$} \\
 
\bottomrule
\end{tabular}
\caption{{\bf Robustness under compositional shift.} We compare the proposed Compositional Risk Minimization (CRM) method with $7$ baselines. We report various metrics, averaged as a group is dropped from training and validation sets. Last column is WGA under the dataset's standard subpopulation shift benchmark, i.e. with no group dropped. All methods have a harder time to generalize when groups are absent from training, but CRM appears consistently more robust (standard error based on 3 random seeds).
}
    \label{tab:main-results-full}
\end{table*}

\subsection{Detailed Results for all the Compositional Shift Scenarios}
\label{app:add-energy-exp-detailed-resutls}

We now present resutls for all the compositional shift scenarios associated with each benchmark; Waterbirds (\Cref{tab:waterbirds-results}), CelebA (\Cref{tab:celeba-results}), MetaShift (\Cref{tab:metashift-results}), MultiNLI (\Cref{tab:multinli-results}), and CivilComments (\Cref{tab:civilcomments-results-1},~\Cref{tab:civilcomments-results-2}). Here we do not aggregate over the multiple compositional shift scenarios of a benchmark, and provide a more detailed analysis with results for each scenario. For each method, we further highlight the  worst case scenario for it, i.e, the scenario with the lowest worst group accuracy amongst all the compositional shift scenarios. This helps us easily compare the performance of methods for the respective worst case compositional shift scenario, as opposed to the average over all scenarios in~\Cref{tab:main-results}. An interesting finding is that CRM outperforms all the baselines in the respective worst case compositional shift scenarios by a wide margin, especially w.r.t worst-group accuracy.

\begin{table}[h]
    \centering

\begin{tabular}{c|llll}
\toprule
 Discarded Group ($y, a$) & Method &               Average Acc &              Balanced Acc &           Worst Group Acc \\
\midrule
        \multirow[c]{8}{*}{$(0, 0)$}        &    ERM & $ 74.0 $ \small{$( 0.0)$} & $ 82.3 $ \small{$( 0.3)$} & $ 67.0 $ \small{$( 0.0)$} \\
            &  G-DRO & $ 77.3 $ \small{$( 0.7)$} & $ 83.0 $ \small{$( 0.6)$} & $ 59.7 $ \small{$( 1.9)$} \\
            &     LC & $ 85.7 $ \small{$( 0.3)$} & $ 88.7 $ \small{$( 0.3)$} & $ 82.0 $ \small{$( 0.6)$} \\
            &    sLA & $ 86.0 $ \small{$( 0.0)$} & $ 89.0 $ \small{$( 0.0)$} & $ 82.3 $ \small{$( 0.3)$} \\
            &    IRM & $ 72.7 $ \small{$( 2.0)$} & $ 81.3 $ \small{$( 1.2)$} & $ 58.7 $ \small{$( 3.2)$} \\
            &   VREx & $ 80.0 $ \small{$( 1.5)$} & $ 85.7 $ \small{$( 0.9)$} & $ 69.3 $ \small{$( 2.4)$} \\
            &  Mixup & $ 87.3 $ \small{$( 0.3)$} & $ 89.3 $ \small{$( 0.3)$} & $ 84.3 $ \small{$( 0.3)$} \\ 
           &    CRM & $ 86.7 $ \small{$( 0.9)$} & $ 88.7 $ \small{$( 0.3)$} & $ 83.0 $ \small{$( 1.5)$} \\
\midrule
\rowcolor{red!5}  \cellcolor{white}        \multirow[c]{8}{*}{$(0, 1)$}    &    ERM & $ 67.3 $ \small{$( 0.3)$} & $ 71.7 $ \small{$( 0.3)$} & $ 28.0 $ \small{$( 1.2)$} \\
\rowcolor{red!5}  \cellcolor{white}                &  G-DRO & $ 58.3 $ \small{$( 3.2)$} & $ 70.7 $ \small{$( 2.0)$} & $ 11.7 $ \small{$( 4.6)$} \\
               &     LC & $ 82.7 $ \small{$( 3.2)$} & $ 86.0 $ \small{$( 1.7)$} & $ 72.0 $ \small{$( 5.8)$} \\
               &    sLA & $ 86.3 $ \small{$( 1.7)$} & $ 88.0 $ \small{$( 1.0)$} & $ 78.7 $ \small{$( 3.3)$} \\
               &    IRM & $ 55.7 $ \small{$( 2.2)$} & $ 67.7 $ \small{$( 0.9)$} &  $ 7.7 $ \small{$( 3.7)$} \\
\rowcolor{red!5}  \cellcolor{white}                &   VREx & $ 66.0 $ \small{$( 1.0)$} & $ 74.0 $ \small{$( 0.6)$} & $ 22.7 $ \small{$( 1.9)$} \\
\rowcolor{red!5}  \cellcolor{white}                &  Mixup & $ 65.7 $ \small{$( 0.3)$} & $ 73.0 $ \small{$( 0.0)$} & $ 19.7 $ \small{$( 0.3)$} \\  

\rowcolor{red!5}  \cellcolor{white}                &    CRM & $ 86.0 $ \small{$( 2.1)$} & $ 86.7 $ \small{$( 0.7)$} & $ 73.0 $ \small{$( 4.2)$} \\
\midrule
            \multirow[c]{8}{*}{$(1, 0)$}  &    ERM & $ 84.0 $ \small{$( 0.0)$} & $ 78.0 $ \small{$( 0.0)$} & $ 38.3 $ \small{$( 0.3)$} \\
             &  G-DRO & $ 90.0 $ \small{$( 0.0)$} & $ 86.0 $ \small{$( 0.6)$} & $ 67.0 $ \small{$( 3.6)$} \\
             &     LC & $ 93.0 $ \small{$( 0.0)$} & $ 89.0 $ \small{$( 0.6)$} & $ 79.0 $ \small{$( 1.2)$} \\
             &    sLA & $ 93.0 $ \small{$( 0.0)$} & $ 89.0 $ \small{$( 0.6)$} & $ 79.3 $ \small{$( 1.5)$} \\
             &    IRM & $ 87.7 $ \small{$( 0.3)$} & $ 81.3 $ \small{$( 0.7)$} & $ 48.0 $ \small{$( 4.0)$} \\
             &   VREx & $ 89.7 $ \small{$( 0.3)$} & $ 82.7 $ \small{$( 0.3)$} & $ 50.3 $ \small{$( 1.7)$} \\
             &  Mixup & $ 84.3 $ \small{$( 0.3)$} & $ 80.7 $ \small{$( 0.3)$} & $ 51.7 $ \small{$( 2.0)$} \\             
           &    CRM & $ 86.7 $ \small{$( 0.3)$} & $ 89.0 $ \small{$( 0.0)$} & $ 83.7 $ \small{$( 0.3)$} \\
\midrule
             &    ERM & $ 86.3 $ \small{$( 0.3)$} & $ 69.3 $ \small{$( 0.3)$} & $ 38.7 $ \small{$( 0.7)$} \\
             &  G-DRO & $ 86.0 $ \small{$( 0.6)$} & $ 75.7 $ \small{$( 2.2)$} & $ 31.0 $ \small{$( 9.2)$} \\
\rowcolor{red!5}  \cellcolor{white}  \multirow[c]{1}{*}{$(1, 1)$}    &     LC & $ 92.0 $ \small{$( 0.0)$} & $ 84.0 $ \small{$( 0.6)$} & $ 69.0 $ \small{$( 1.5)$} \\
\rowcolor{red!5}  \cellcolor{white}             &    sLA & $ 92.0 $ \small{$( 0.0)$} & $ 84.0 $ \small{$( 0.6)$} & $ 69.0 $ \small{$( 1.5)$} \\
\rowcolor{red!5}  \cellcolor{white}             &    IRM & $ 78.3 $ \small{$( 0.3)$} & $ 51.3 $ \small{$( 0.9)$} &  $ 0.3 $ \small{$( 0.3)$} \\
            &   VREx & $ 88.3 $ \small{$( 0.7)$} & $ 77.7 $ \small{$( 1.8)$} & $ 40.0 $ \small{$( 5.5)$} \\
            &  Mixup & $ 89.0 $ \small{$( 0.0)$} & $ 76.7 $ \small{$( 0.3)$} & $ 53.0 $ \small{$( 0.6)$} \\

             &    CRM & $ 89.0 $ \small{$( 0.6)$} & $ 86.7 $ \small{$( 0.7)$} & $ 75.0 $ \small{$( 3.2)$} \\
\bottomrule
\end{tabular}

    \caption{Results for the various compositional shift scenarios for the \textbf{Waterbirds} benchmark. For each metric, report the mean (standard error) over 3 random seeds on the test dataset. We highlight the worst case compositional shift scenario for each method, i.e, the scenario with the lowest worst group accuracy amongst all the compositional shift scenarios. CRM outperforms all the baselines in the respective worst case compositional shift scenarios.}
    \label{tab:waterbirds-results}
\end{table}

\begin{table}[]
    \centering

\begin{tabular}{c|llll}
\toprule
 Discarded Group ($y, a$) & Method &               Average Acc &              Balanced Acc &           Worst Group Acc \\
\midrule
             \multirow[c]{5}{*}{$(0, 0)$}  &    ERM & $ 68.7 $ \small{$( 0.3)$} & $ 74.0 $ \small{$( 0.0)$} & $ 37.7 $ \small{$( 0.3)$} \\
            &  G-DRO & $ 85.0 $ \small{$( 0.6)$} & $ 88.0 $ \small{$( 0.0)$} & $ 75.0 $ \small{$( 1.2)$} \\
            &     LC & $ 88.0 $ \small{$( 0.0)$} & $ 90.3 $ \small{$( 0.3)$} & $ 82.3 $ \small{$( 0.3)$} \\
            &    sLA & $ 87.7 $ \small{$( 0.3)$} & $ 90.3 $ \small{$( 0.3)$} & $ 82.3 $ \small{$( 0.7)$} \\
            &    IRM & $ 65.0 $ \small{$( 2.5)$} & $ 72.0 $ \small{$( 0.6)$} &  $ 31.3 $ \small{$( 4.7)$} \\
            &   VREx & $ 82.7 $ \small{$( 0.3)$} & $ 87.7 $ \small{$( 0.3)$} &  $ 70.7 $ \small{$( 0.9)$} \\
            &  Mixup & $ 64.7 $ \small{$( 0.3)$} & $ 72.0 $ \small{$( 0.0)$} &  $ 29.3 $ \small{$( 0.7)$} \\            
           &    CRM & $ 91.7 $ \small{$( 0.3)$} & $ 89.3 $ \small{$( 0.3)$} & $ 81.0 $ \small{$( 2.0)$} \\
\midrule
         \multirow[c]{5}{*}{$(0, 1)$}     &    ERM & $ 91.3 $ \small{$( 0.9)$} & $ 91.0 $ \small{$( 0.6)$} & $ 86.7 $ \small{$( 1.3)$} \\
            &  G-DRO & $ 85.0 $ \small{$( 1.5)$} & $ 88.7 $ \small{$( 0.7)$} & $ 72.7 $ \small{$( 3.7)$} \\
            &     LC & $ 93.0 $ \small{$( 0.6)$} & $ 87.7 $ \small{$( 0.9)$} & $ 71.0 $ \small{$( 1.7)$} \\
            &    sLA & $ 92.7 $ \small{$( 0.3)$} & $ 88.0 $ \small{$( 0.0)$} & $ 71.3 $ \small{$( 0.9)$} \\
            &    IRM & $ 77.0 $ \small{$( 2.5)$} & $ 83.7 $ \small{$( 1.3)$} &  $ 53.0 $ \small{$( 4.5)$} \\
            &   VREx & $ 78.7 $ \small{$( 0.9)$} & $ 85.0 $ \small{$( 0.6)$} &  $ 55.7 $ \small{$( 1.9)$} \\
            &  Mixup & $ 92.0 $ \small{$( 0.6)$} & $ 91.3 $ \small{$( 0.3)$} &  $ 87.3 $ \small{$( 0.9)$} \\            
           &    CRM & $ 88.3 $ \small{$( 0.9)$} & $ 91.0 $ \small{$( 0.6)$} & $ 85.0 $ \small{$( 2.0)$} \\
\midrule
 \rowcolor{red!5}  \cellcolor{white}             &    ERM & $ 87.0 $ \small{$( 0.0)$} & $ 59.3 $ \small{$( 0.3)$} &  $ 4.0 $ \small{$( 0.0)$} \\
            &  G-DRO & $ 91.7 $ \small{$( 0.3)$} & $ 86.3 $ \small{$( 0.7)$} & $ 71.7 $ \small{$( 0.9)$} \\
 \rowcolor{red!5} \cellcolor{white}    \multirow[c]{1}{*}{$(1, 0)$}        &     LC & $ 88.3 $ \small{$( 0.3)$} & $ 70.7 $ \small{$( 0.7)$} & $ 21.0 $ \small{$( 2.1)$} \\
 \rowcolor{red!5}  \cellcolor{white}          &    sLA & $ 88.3 $ \small{$( 0.3)$} & $ 71.0 $ \small{$( 0.6)$} & $ 21.3 $ \small{$( 1.9)$} \\
        &    IRM & $ 84.7 $ \small{$( 1.5)$} & $ 72.7 $ \small{$( 3.9)$} & $ 47.3 $ \small{$( 10.3)$} \\
        &   VREx & $ 88.3 $ \small{$( 0.7)$} & $ 79.3 $ \small{$( 1.8)$} &  $ 40.3 $ \small{$( 7.9)$} \\
\rowcolor{red!5}  \cellcolor{white}        &  Mixup & $ 88.0 $ \small{$( 0.0)$} & $ 67.7 $ \small{$( 0.3)$} &  $ 17.3 $ \small{$( 1.7)$} \\
 \rowcolor{red!5}  \cellcolor{white}          &    CRM & $ 93.0 $ \small{$( 0.0)$} & $ 85.7 $ \small{$( 0.3)$} & $ 73.3 $ \small{$( 1.8)$} \\
\midrule
         \multirow[c]{5}{*}{$(1, 1)$}  &    ERM & $ 96.0 $ \small{$( 0.0)$} & $ 78.0 $ \small{$( 0.6)$} & $ 27.7 $ \small{$( 2.0)$} \\
\rowcolor{red!5}  \cellcolor{white}            &  G-DRO & $ 95.0 $ \small{$( 0.0)$} & $ 84.3 $ \small{$( 0.3)$} & $ 51.7 $ \small{$( 1.2)$} \\
            &     LC & $ 95.0 $ \small{$( 0.0)$} & $ 85.3 $ \small{$( 0.3)$} & $ 55.3 $ \small{$( 1.9)$} \\
            &    sLA & $ 95.0 $ \small{$( 0.0)$} & $ 85.0 $ \small{$( 0.6)$} & $ 54.7 $ \small{$( 2.3)$} \\
\rowcolor{red!5}  \cellcolor{white}          &    IRM & $ 95.0 $ \small{$( 0.0)$} & $ 78.3 $ \small{$( 0.7)$} &  $ 28.7 $ \small{$( 3.0)$} \\
\rowcolor{red!5}  \cellcolor{white}            &   VREx & $ 95.0 $ \small{$( 0.0)$} & $ 79.0 $ \small{$( 0.0)$} &  $ 30.3 $ \small{$( 0.9)$} \\
            &  Mixup & $ 95.0 $ \small{$( 0.0)$} & $ 80.7 $ \small{$( 0.3)$} &  $ 37.0 $ \small{$( 1.2)$} \\
          &    CRM & $ 91.3 $ \small{$( 0.3)$} & $ 91.0 $ \small{$( 0.0)$} & $ 88.0 $ \small{$( 0.6)$} \\
\bottomrule
\end{tabular}

    \caption{Results for the various compositional shift scenarios for the \textbf{CelebA} benchmark. For each metric, report the mean (standard error) over 3 random seeds on the test dataset. We highlight the worst case compositional shift scenario for each method, i.e, the scenario with the lowest worst group accuracy amongst all the compositional shift scenarios. CRM outperforms all the baselines in the respective worst case compositional shift scenarios.}
    \label{tab:celeba-results}
\end{table}

\begin{table}[]
    \centering

\begin{tabular}{c|llll}
\toprule
 Discarded Group ($y, a$) & Method &               Average Acc &              Balanced Acc &           Worst Group Acc \\
\midrule
    \multirow[c]{5}{*}{$(0, 0)$}     &    ERM & $ 84.3 $ \small{$( 0.3)$} & $ 84.0 $ \small{$( 0.6)$} & $ 80.3 $ \small{$( 0.9)$} \\
            &  G-DRO & $ 84.0 $ \small{$( 0.6)$} & $ 83.3 $ \small{$( 0.7)$} & $ 78.0 $ \small{$( 0.6)$} \\
            &     LC & $ 89.0 $ \small{$( 0.0)$} & $ 85.7 $ \small{$( 0.3)$} & $ 74.3 $ \small{$( 1.8)$} \\
            &    sLA & $ 90.0 $ \small{$( 0.0)$} & $ 85.0 $ \small{$( 0.0)$} & $ 67.3 $ \small{$( 1.9)$} \\
            &    IRM & $ 81.0 $ \small{$( 0.6)$} & $ 81.7 $ \small{$( 0.3)$} & $ 70.0 $ \small{$( 1.5)$} \\
            &   VREx & $ 83.7 $ \small{$( 0.9)$} & $ 83.0 $ \small{$( 0.6)$} & $ 76.7 $ \small{$( 1.9)$} \\
            &  Mixup & $ 85.0 $ \small{$( 0.0)$} & $ 84.0 $ \small{$( 0.0)$} & $ 77.0 $ \small{$( 0.6)$} \\            
     &    CRM & $ 87.3 $ \small{$( 0.3)$} & $ 84.3 $ \small{$( 0.3)$} & $ 73.3 $ \small{$( 0.7)$} \\
\midrule
    \multirow[c]{5}{*}{$(0, 1)$}      &    ERM & $ 85.0 $ \small{$( 0.0)$} & $ 79.0 $ \small{$( 0.0)$} & $ 49.0 $ \small{$( 0.0)$} \\
            &  G-DRO & $ 86.0 $ \small{$( 1.0)$} & $ 81.7 $ \small{$( 0.3)$} & $ 55.3 $ \small{$( 3.2)$} \\
            &     LC & $ 86.0 $ \small{$( 0.0)$} & $ 84.0 $ \small{$( 0.0)$} & $ 63.7 $ \small{$( 0.3)$} \\
            &    sLA & $ 86.0 $ \small{$( 0.0)$} & $ 84.0 $ \small{$( 0.0)$} & $ 64.0 $ \small{$( 0.6)$} \\
           \rowcolor{red!5}  \cellcolor{white}  &    IRM & $ 83.7 $ \small{$( 0.3)$} & $ 78.7 $ \small{$( 0.9)$} & $ 44.7 $ \small{$( 1.9)$} \\
            \rowcolor{red!5}  \cellcolor{white} &   VREx & $ 84.0 $ \small{$( 0.0)$} & $ 81.0 $ \small{$( 0.0)$} & $ 48.3 $ \small{$( 0.7)$} \\
            &  Mixup & $ 86.0 $ \small{$( 0.0)$} & $ 80.0 $ \small{$( 0.0)$} & $ 52.7 $ \small{$( 0.3)$} \\           
         &    CRM & $ 88.3 $ \small{$( 0.3)$} & $ 85.7 $ \small{$( 0.3)$} & $ 78.0 $ \small{$( 1.0)$} \\
\midrule
   \rowcolor{red!5}  \cellcolor{white}       &    ERM & $ 90.0 $ \small{$( 0.0)$} & $ 82.0 $ \small{$( 0.0)$} & $ 48.3 $ \small{$( 0.3)$} \\
   \rowcolor{red!5}  \cellcolor{white}          &  G-DRO & $ 90.3 $ \small{$( 0.3)$} & $ 82.7 $ \small{$( 0.9)$} & $ 52.7 $ \small{$( 2.3)$} \\
    \rowcolor{red!5}  \cellcolor{white} \multirow[c]{1}{*}{$(1, 0)$}    &     LC & $ 90.0 $ \small{$( 0.0)$} & $ 84.3 $ \small{$( 0.3)$} & $ 62.0 $ \small{$( 0.0)$} \\
   \rowcolor{red!5}  \cellcolor{white}          &    sLA & $ 88.7 $ \small{$( 0.3)$} & $ 81.0 $ \small{$( 0.0)$} & $ 46.7 $ \small{$( 0.7)$} \\
    &    IRM & $ 90.0 $ \small{$( 0.0)$} & $ 81.7 $ \small{$( 0.3)$} & $ 49.3 $ \small{$( 1.7)$} \\
    &   VREx & $ 90.0 $ \small{$( 0.0)$} & $ 82.0 $ \small{$( 0.0)$} & $ 48.3 $ \small{$( 0.3)$} \\
    \rowcolor{red!5}  \cellcolor{white} &  Mixup & $ 90.3 $ \small{$( 0.3)$} & $ 82.7 $ \small{$( 0.3)$} & $ 52.3 $ \small{$( 1.7)$} \\   
\rowcolor{red!5}  \cellcolor{white}            &    CRM & $ 87.0 $ \small{$( 1.2)$} & $ 83.3 $ \small{$( 0.7)$} & $ 70.0 $ \small{$( 1.0)$} \\
\midrule
    \multirow[c]{5}{*}{$(1, 1)$}     &    ERM & $ 83.3 $ \small{$( 1.2)$} & $ 81.7 $ \small{$( 0.9)$} & $ 64.3 $ \small{$( 1.2)$} \\
            &  G-DRO & $ 83.7 $ \small{$( 0.9)$} & $ 82.7 $ \small{$( 0.9)$} & $ 69.3 $ \small{$( 2.0)$} \\
            &     LC & $ 89.0 $ \small{$( 0.0)$} & $ 86.0 $ \small{$( 0.0)$} & $ 72.7 $ \small{$( 0.7)$} \\
            &    sLA & $ 89.0 $ \small{$( 0.0)$} & $ 86.0 $ \small{$( 0.0)$} & $ 74.0 $ \small{$( 0.0)$} \\
            &    IRM & $ 80.0 $ \small{$( 0.6)$} & $ 79.0 $ \small{$( 0.6)$} & $ 59.0 $ \small{$( 1.0)$} \\
            &   VREx & $ 82.0 $ \small{$( 0.6)$} & $ 81.0 $ \small{$( 0.6)$} & $ 66.3 $ \small{$( 0.9)$} \\
            &  Mixup & $ 85.7 $ \small{$( 0.3)$} & $ 84.3 $ \small{$( 0.3)$} & $ 69.3 $ \small{$( 0.9)$} \\
           &    CRM & $ 87.7 $ \small{$( 0.3)$} & $ 85.3 $ \small{$( 0.3)$} & $ 72.3 $ \small{$( 1.7)$} \\
\bottomrule
\end{tabular}

    \caption{Results for the various compositional shift scenarios for the \textbf{MetaShift} benchmark. For each metric, report the mean (standard error) over 3 random seeds on the test dataset. We highlight the worst case compositional shift scenario for each method, i.e, the scenario with the lowest worst group accuracy amongst all the compositional shift scenarios. CRM outperforms all the baselines in the respective worst case compositional shift scenarios.}    \label{tab:metashift-results}
\end{table}

\begin{table}[]
    \centering

\begin{tabular}{c|llll}
\toprule
 Discarded Group $(y, a)$ & Method &               Average Acc &              Balanced Acc &            Worst Group Acc \\
\midrule
           &    ERM & $ 62.7 $ \small{$( 0.3)$} & $ 66.7 $ \small{$( 0.3)$} &   $ 0.7 $ \small{$( 0.3)$} \\
            &  G-DRO & $ 63.3 $ \small{$( 0.3)$} & $ 68.0 $ \small{$( 0.0)$} &   $ 1.7 $ \small{$( 0.7)$} \\
  \rowcolor{red!5}  \cellcolor{white}  \multirow[c]{1}{*}{$(0, 0)$}           &     LC & $ 68.0 $ \small{$( 0.0)$} & $ 72.0 $ \small{$( 0.0)$} &  $ 20.0 $ \small{$( 0.0)$} \\
  \rowcolor{red!5}  \cellcolor{white}          &    sLA & $ 67.7 $ \small{$( 0.3)$} & $ 72.0 $ \small{$( 0.0)$} &  $ 19.7 $ \small{$( 1.5)$} \\
               &    IRM & $ 61.3 $ \small{$( 0.7)$} & $ 64.3 $ \small{$( 0.3)$} &  $ 4.3 $ \small{$( 0.9)$} \\
               &   VREx & $ 63.3 $ \small{$( 0.3)$} & $ 68.7 $ \small{$( 0.3)$} &  $ 7.0 $ \small{$( 1.0)$} \\
               &  Mixup & $ 63.0 $ \small{$( 0.0)$} & $ 64.3 $ \small{$( 0.3)$} &  $ 1.0 $ \small{$( 0.0)$} \\  
  \rowcolor{red!5}  \cellcolor{white}          &    CRM & $ 64.7 $ \small{$( 0.9)$} & $ 70.7 $ \small{$( 0.9)$} &  $ 31.0 $ \small{$( 5.6)$} \\
\midrule
        \multirow[c]{5}{*}{$(0, 1)$} &    ERM & $ 77.7 $ \small{$( 0.3)$} & $ 71.7 $ \small{$( 0.3)$} &  $ 14.0 $ \small{$( 1.0)$} \\
            &  G-DRO & $ 80.7 $ \small{$( 0.7)$} & $ 80.7 $ \small{$( 0.7)$} &  $ 74.0 $ \small{$( 1.0)$} \\
            &     LC & $ 81.0 $ \small{$( 0.0)$} & $ 81.0 $ \small{$( 0.0)$} &  $ 75.3 $ \small{$( 0.3)$} \\
            &    sLA & $ 81.3 $ \small{$( 0.3)$} & $ 80.7 $ \small{$( 0.3)$} &  $ 69.0 $ \small{$( 0.6)$} \\
               &    IRM & $ 74.0 $ \small{$( 0.0)$} & $ 68.7 $ \small{$( 0.3)$} & $ 10.0 $ \small{$( 1.5)$} \\
               &   VREx & $ 76.0 $ \small{$( 0.0)$} & $ 69.7 $ \small{$( 0.3)$} &  $ 4.0 $ \small{$( 0.6)$} \\
               &  Mixup & $ 78.3 $ \small{$( 0.3)$} & $ 74.3 $ \small{$( 0.3)$} & $ 34.0 $ \small{$( 2.0)$} \\            
          &    CRM & $ 80.0 $ \small{$( 0.6)$} & $ 78.0 $ \small{$( 1.2)$} &  $ 62.3 $ \small{$( 8.2)$} \\
\midrule
   \rowcolor{red!5}  \cellcolor{white}        \multirow[c]{5}{*}{$(1, 0)$} &    ERM & $ 58.0 $ \small{$( 0.0)$} & $ 67.0 $ \small{$( 0.0)$} &   $ 0.0 $ \small{$( 0.0)$} \\
    \rowcolor{red!5}  \cellcolor{white}           &  G-DRO & $ 57.7 $ \small{$( 0.3)$} & $ 67.7 $ \small{$( 0.3)$} &   $ 0.0 $ \small{$( 0.0)$} \\
            &     LC & $ 70.7 $ \small{$( 0.9)$} & $ 74.3 $ \small{$( 0.3)$} &  $ 47.3 $ \small{$( 4.3)$} \\
            &    sLA & $ 73.3 $ \small{$( 2.7)$} & $ 76.3 $ \small{$( 1.7)$} &  $ 58.3 $ \small{$( 9.7)$} \\
               &    IRM & $ 49.0 $ \small{$( 0.6)$} & $ 55.3 $ \small{$( 1.5)$} &  $ 0.7 $ \small{$( 0.3)$} \\
               &   VREx & $ 55.3 $ \small{$( 0.3)$} & $ 65.7 $ \small{$( 0.3)$} &  $ 1.0 $ \small{$( 0.0)$} \\
         \rowcolor{red!5}  \cellcolor{white}      &  Mixup & $ 57.0 $ \small{$( 0.0)$} & $ 66.0 $ \small{$( 0.0)$} &  $ 0.0 $ \small{$( 0.0)$} \\
            
           &    CRM & $ 69.5 $ \small{$( 0.5)$} & $ 74.0 $ \small{$( 0.0)$} &  $ 63.5 $ \small{$( 0.5)$} \\
\midrule
        \multirow[c]{5}{*}{$(1, 1)$} &    ERM & $ 82.0 $ \small{$(0.2)$} & $ 73.0 $ \small{$( 0.2)$} &  $ 20.0 $ \small{$(1.2)$} \\
            &  G-DRO & $ 80.3 $ \small{$( 0.3)$} & $ 79.3 $ \small{$( 0.3)$} &  $ 72.7 $ \small{$( 0.9)$} \\
            &     LC & $ 81.7 $ \small{$( 0.3)$} & $ 81.3 $ \small{$( 0.3)$} &  $ 74.3 $ \small{$( 1.5)$} \\
            &    sLA & $ 82.0 $ \small{$( 0.0)$} & $ 81.0 $ \small{$( 0.0)$} &  $ 75.3 $ \small{$( 0.7)$} \\
               &    IRM & $ 76.3 $ \small{$( 0.3)$} & $ 67.7 $ \small{$( 1.2)$} & $ 23.0 $ \small{$( 7.0)$} \\
               &   VREx & $ 79.7 $ \small{$( 0.3)$} & $ 69.0 $ \small{$( 0.0)$} &  $ 3.0 $ \small{$( 1.5)$} \\
               &  Mixup & $ 80.7 $ \small{$( 0.3)$} & $ 72.3 $ \small{$( 0.3)$} & $ 28.3 $ \small{$( 3.8)$} \\
            
           &    CRM & $ 81.3 $ \small{$( 0.3)$} & $ 80.7 $ \small{$( 0.3)$} &  $ 71.3 $ \small{$( 1.8)$} \\
\midrule
  \rowcolor{red!5}  \cellcolor{white}        \multirow[c]{5}{*}{$(2, 0)$} &    ERM & $ 62.0 $ \small{$( 0.0)$} & $ 68.3 $ \small{$( 0.3)$} &   $ 0.0 $ \small{$( 0.0)$} \\
  \rowcolor{red!5}  \cellcolor{white}            &  G-DRO & $ 60.0 $ \small{$( 0.0)$} & $ 67.7 $ \small{$( 0.3)$} &   $ 0.0 $ \small{$( 0.0)$} \\
            &     LC & $ 72.3 $ \small{$( 0.3)$} & $ 74.7 $ \small{$( 0.3)$} &  $ 48.7 $ \small{$( 0.7)$} \\
            &    sLA & $ 72.7 $ \small{$( 0.7)$} & $ 74.3 $ \small{$( 0.3)$} &  $ 48.3 $ \small{$( 0.9)$} \\
    \rowcolor{red!5}  \cellcolor{white}           &    IRM & $ 57.0 $ \small{$( 0.6)$} & $ 57.3 $ \small{$( 0.3)$} &  $ 0.0 $ \small{$( 0.0)$} \\
    \rowcolor{red!5}  \cellcolor{white}           &   VREx & $ 59.7 $ \small{$( 0.3)$} & $ 67.7 $ \small{$( 0.3)$} &  $ 0.0 $ \small{$( 0.0)$} \\
    \rowcolor{red!5}  \cellcolor{white}           &  Mixup & $ 61.0 $ \small{$( 0.0)$} & $ 66.3 $ \small{$( 0.3)$} &  $ 0.0 $ \small{$( 0.0)$} \\            
           &    CRM & $ 68.7 $ \small{$( 0.3)$} & $ 72.7 $ \small{$( 0.3)$} &  $ 50.0 $ \small{$( 0.6)$} \\
\midrule
        \multirow[c]{5}{*}{$(2, 1)$} &    ERM & $ 81.3 $ \small{$( 0.3)$} & $ 74.3 $ \small{$( 0.3)$} &  $ 17.3 $ \small{$( 2.4)$} \\
            &  G-DRO & $ 80.7 $ \small{$( 0.3)$} & $ 79.0 $ \small{$( 0.0)$} &  $ 57.3 $ \small{$( 2.2)$} \\
            &     LC & $ 82.0 $ \small{$( 0.0)$} & $ 80.7 $ \small{$( 0.3)$} &  $ 60.0 $ \small{$( 1.2)$} \\
            &    sLA & $ 81.7 $ \small{$( 0.3)$} & $ 80.3 $ \small{$( 0.3)$} &  $ 59.3 $ \small{$( 0.9)$} \\
                &    IRM & $ 76.3 $ \small{$( 0.3)$} & $ 69.0 $ \small{$( 0.0)$} & $ 10.7 $ \small{$( 0.9)$} \\
               &   VREx & $ 80.0 $ \small{$( 0.0)$} & $ 72.3 $ \small{$( 0.3)$} &  $ 9.7 $ \small{$( 1.2)$} \\
               &  Mixup & $ 81.0 $ \small{$( 0.0)$} & $ 75.0 $ \small{$( 0.6)$} & $ 24.3 $ \small{$( 3.0)$} \\           
          &    CRM & $ 81.3 $ \small{$( 0.3)$} & $ 80.0 $ \small{$( 0.6)$} &  $ 72.7 $ \small{$( 0.9)$} \\
\bottomrule
\end{tabular}

    \caption{Results for the various compositional shift scenarios for the \textbf{MultiNLI} benchmark. For each metric, report the mean (standard error) over 3 random seeds on the test dataset. We highlight the worst case compositional shift scenario for each method, i.e, the scenario with the lowest worst group accuracy amongst all the compositional shift scenarios. CRM outperforms all the baselines in the respective worst case compositional shift scenarios.}
    \label{tab:multinli-results}
\end{table}

\begin{table}[]
    \centering
    \scriptsize
\begin{tabular}{c|llll}
\toprule
 Discarded Group ($y, a$) &                Method &       Average Acc &      Balanced Acc &   Worst Group Acc \\
\midrule
 \multirow[c]{5}{*}{$(0, 0)$}               &                   ERM & $ 79.0 $ $( 0.6)$ & $ 78.7 $ $( 0.3)$ & $ 61.3 $ $( 1.5)$ \\
               &                 G-DRO & $ 79.3 $ $( 1.2)$ & $ 79.0 $ $( 0.0)$ & $ 64.7 $ $( 3.0)$ \\
               &                    LC & $ 79.7 $ $( 0.3)$ & $ 79.0 $ $( 0.0)$ & $ 64.3 $ $( 0.9)$ \\
               &                   sLA & $ 79.7 $ $( 0.3)$ & $ 79.3 $ $( 0.3)$ & $ 66.7 $ $( 1.8)$ \\
               &                   IRM & $ 77.7 $ $( 0.3)$ & $ 78.0 $ $( 0.0)$ & $ 60.3 $ $( 0.3)$ \\
               &                  VREx & $ 79.3 $ $( 0.3)$ & $ 79.0 $ $( 0.0)$ & $ 65.0 $ $( 0.0)$ \\
               &                 Mixup & $ 77.7 $ $( 0.3)$ & $ 78.3 $ $( 0.3)$ & $ 58.7 $ $( 1.9)$ \\
               &                   CRM & $ 84.0 $ $( 0.0)$ & $ 78.7 $ $( 0.3)$ & $ 67.0 $ $( 2.5)$ \\
\midrule
\multirow[c]{5}{*}{$(0, 1)$}               &                   ERM & $ 78.0 $ $( 0.6)$ & $ 78.3 $ $( 0.3)$ & $ 64.3 $ $( 1.2)$ \\
               &                 G-DRO & $ 78.0 $ $( 0.6)$ & $ 78.7 $ $( 0.3)$ & $ 64.3 $ $( 1.5)$ \\
               &                    LC & $ 79.3 $ $( 0.3)$ & $ 79.0 $ $( 0.0)$ & $ 64.3 $ $( 0.9)$ \\
               &                   sLA & $ 79.7 $ $( 0.3)$ & $ 79.0 $ $( 0.0)$ & $ 65.3 $ $( 0.3)$ \\
               &                   IRM & $ 77.3 $ $( 0.7)$ & $ 78.0 $ $( 0.0)$ & $ 62.3 $ $( 2.6)$ \\
               &                  VREx & $ 77.3 $ $( 0.7)$ & $ 79.0 $ $( 0.0)$ & $ 66.0 $ $( 1.2)$ \\
               &                 Mixup & $ 78.0 $ $( 0.6)$ & $ 78.3 $ $( 0.3)$ & $ 62.3 $ $( 2.4)$ \\
               &                   CRM & $ 83.3 $ $( 0.7)$ & $ 78.7 $ $( 0.3)$ & $ 71.0 $ $( 1.5)$ \\
\midrule
            &                   ERM & $ 78.3 $ $( 0.3)$ & $ 77.7 $ $( 0.3)$ & $ 38.0 $ $( 1.0)$ \\
    \rowcolor{red!8}  \cellcolor{white}   \multirow[c]{1}{*}{$(0, 2)$}              &                 G-DRO & $ 79.0 $ $( 0.6)$ & $ 78.3 $ $( 0.3)$ & $ 43.7 $ $( 0.3)$ \\
   \rowcolor{red!8}  \cellcolor{white}                &                    LC & $ 79.0 $ $( 0.6)$ & $ 79.0 $ $( 0.0)$ & $ 53.7 $ $( 2.3)$ \\
     \rowcolor{red!8}  \cellcolor{white}           &                   sLA & $ 79.3 $ $( 0.3)$ & $ 79.0 $ $( 0.0)$ & $ 55.0 $ $( 2.1)$ \\
               &                   IRM & $ 78.3 $ $( 0.3)$ & $ 77.7 $ $( 0.3)$ & $ 34.7 $ $( 3.2)$ \\
  \rowcolor{red!8}  \cellcolor{white}              &                  VREx & $ 78.3 $ $( 0.7)$ & $ 77.7 $ $( 0.3)$ & $ 30.7 $ $( 2.2)$ \\
               &                 Mixup & $ 79.7 $ $( 0.9)$ & $ 77.7 $ $( 0.3)$ & $ 40.0 $ $( 0.6)$ \\
               &                   CRM & $ 83.3 $ $( 0.3)$ & $ 78.7 $ $( 0.3)$ & $ 68.0 $ $( 1.0)$ \\
\midrule
\multirow[c]{5}{*}{$(0, 3)$}               &                   ERM & $ 80.3 $ $( 0.3)$ & $ 79.0 $ $( 0.0)$ & $ 64.3 $ $( 2.0)$ \\
               &                 G-DRO & $ 80.0 $ $( 0.6)$ & $ 79.0 $ $( 0.0)$ & $ 67.3 $ $( 2.7)$ \\
               &                    LC & $ 81.3 $ $( 0.3)$ & $ 79.0 $ $( 0.0)$ & $ 69.0 $ $( 1.2)$ \\
               &                   sLA & $ 80.7 $ $( 0.7)$ & $ 79.0 $ $( 0.0)$ & $ 66.7 $ $( 2.7)$ \\
               &                   IRM & $ 79.7 $ $( 0.7)$ & $ 79.0 $ $( 0.0)$ & $ 65.7 $ $( 2.3)$ \\
               &                  VREx & $ 77.7 $ $( 0.3)$ & $ 79.0 $ $( 0.0)$ & $ 64.7 $ $( 0.9)$ \\
               &                 Mixup & $ 78.7 $ $( 0.9)$ & $ 78.7 $ $( 0.3)$ & $ 62.7 $ $( 2.8)$ \\
               &                   CRM & $ 83.7 $ $( 0.3)$ & $ 78.7 $ $( 0.3)$ & $ 69.7 $ $( 0.3)$ \\
\midrule
\multirow[c]{5}{*}{$(0, 4)$}               &                   ERM & $ 78.0 $ $( 0.0)$ & $ 77.7 $ $( 0.3)$ & $ 38.0 $ $( 0.6)$ \\
               &                 G-DRO & $ 78.7 $ $( 0.9)$ & $ 78.7 $ $( 0.3)$ & $ 52.0 $ $( 3.2)$ \\
               &                    LC & $ 79.0 $ $( 0.0)$ & $ 79.0 $ $( 0.0)$ & $ 60.7 $ $( 1.5)$ \\
               &                   sLA & $ 78.3 $ $( 0.3)$ & $ 79.0 $ $( 0.0)$ & $ 62.0 $ $( 1.0)$ \\
               &                   IRM & $ 76.7 $ $( 1.3)$ & $ 77.7 $ $( 0.3)$ & $ 33.0 $ $( 2.0)$ \\
               &                  VREx & $ 77.0 $ $( 0.6)$ & $ 78.3 $ $( 0.3)$ & $ 41.7 $ $( 1.2)$ \\
               &                 Mixup & $ 77.7 $ $( 0.7)$ & $ 77.7 $ $( 0.3)$ & $ 37.0 $ $( 4.4)$ \\
               &                   CRM & $ 83.7 $ $( 0.3)$ & $ 79.0 $ $( 0.0)$ & $ 69.7 $ $( 1.9)$ \\
\midrule
\multirow[c]{5}{*}{$(0, 5)$}               &                   ERM & $ 80.0 $ $( 0.0)$ & $ 79.0 $ $( 0.0)$ & $ 61.0 $ $( 0.6)$ \\
               &                 G-DRO & $ 80.0 $ $( 0.6)$ & $ 79.0 $ $( 0.0)$ & $ 67.3 $ $( 1.8)$ \\
               &                    LC & $ 79.3 $ $( 0.9)$ & $ 79.0 $ $( 0.0)$ & $ 65.7 $ $( 2.3)$ \\
               &                   sLA & $ 80.0 $ $( 0.0)$ & $ 79.7 $ $( 0.3)$ & $ 66.7 $ $( 0.3)$ \\
               &                   IRM & $ 78.3 $ $( 0.3)$ & $ 78.3 $ $( 0.3)$ & $ 59.7 $ $( 0.9)$ \\
               &                  VREx & $ 78.7 $ $( 0.3)$ & $ 78.7 $ $( 0.3)$ & $ 59.0 $ $( 0.6)$ \\
               &                 Mixup & $ 79.3 $ $( 0.7)$ & $ 78.7 $ $( 0.3)$ & $ 59.3 $ $( 3.4)$ \\
               &                   CRM & $ 84.0 $ $( 0.0)$ & $ 78.7 $ $( 0.3)$ & $ 71.0 $ $( 1.0)$ \\
\midrule
 \rowcolor{red!8}  \cellcolor{white}  \multirow[c]{5}{*}{$(0, 6)$}  &                   ERM & $ 78.7 $ $( 0.3)$ & $ 78.0 $ $( 0.0)$ & $ 36.3 $ $( 1.2)$ \\
               &                 G-DRO & $ 78.3 $ $( 0.3)$ & $ 78.3 $ $( 0.3)$ & $ 46.3 $ $( 1.2)$ \\
               &                    LC & $ 80.7 $ $( 0.3)$ & $ 79.0 $ $( 0.0)$ & $ 58.7 $ $( 2.3)$ \\
               &                   sLA & $ 79.7 $ $( 0.9)$ & $ 79.0 $ $( 0.0)$ & $ 57.0 $ $( 3.1)$ \\
 \rowcolor{red!8}  \cellcolor{white}               &                   IRM & $ 78.0 $ $( 0.6)$ & $ 77.0 $ $( 0.0)$ & $ 28.3 $ $( 1.2)$ \\
               &                  VREx & $ 78.7 $ $( 0.3)$ & $ 78.0 $ $( 0.0)$ & $ 37.3 $ $( 2.2)$ \\
  \rowcolor{red!8}  \cellcolor{white}              &                 Mixup & $ 78.7 $ $( 0.3)$ & $ 78.0 $ $( 0.0)$ & $ 33.7 $ $( 0.3)$ \\
               &                   CRM & $ 83.3 $ $( 0.7)$ & $ 78.7 $ $( 0.3)$ & $ 70.0 $ $( 1.0)$ \\
\midrule
\multirow[c]{5}{*}{$(0, 7)$}               &                   ERM & $ 79.0 $ $( 0.0)$ & $ 77.7 $ $( 0.3)$ & $ 40.0 $ $( 1.2)$ \\
               &                 G-DRO & $ 77.7 $ $( 0.3)$ & $ 78.7 $ $( 0.3)$ & $ 49.7 $ $( 0.3)$ \\
               &                    LC & $ 79.7 $ $( 0.3)$ & $ 79.0 $ $( 0.0)$ & $ 60.0 $ $( 2.3)$ \\
               &                   sLA & $ 78.7 $ $( 0.3)$ & $ 79.0 $ $( 0.0)$ & $ 56.3 $ $( 1.3)$ \\
               &                   IRM & $ 77.0 $ $( 0.6)$ & $ 77.3 $ $( 0.3)$ & $ 33.0 $ $( 2.0)$ \\
               &                  VREx & $ 77.0 $ $( 1.5)$ & $ 78.0 $ $( 0.0)$ & $ 39.3 $ $( 4.5)$ \\
               &                 Mixup & $ 77.7 $ $( 1.2)$ & $ 77.7 $ $( 0.3)$ & $ 40.7 $ $( 3.5)$ \\
               &                   CRM & $ 83.3 $ $( 0.3)$ & $ 78.3 $ $( 0.3)$ & $ 64.0 $ $( 1.2)$ \\
\bottomrule
\end{tabular}
\caption{\small{Results for the various compositional shift scenarios for the \textbf{CivilComments} benchmark (\textbf{Part 1}). 
For each metric, report the mean (standard error) over 3 random seeds on the test dataset. We highlight the worst case compositional shift scenario for each method, i.e, the scenario with the lowest worst group accuracy amongst all the compositional shift scenarios. CRM outperforms all the baselines in the respective worst case compositional shift scenarios.
}}
\label{tab:civilcomments-results-1}
\end{table}

\begin{table}[]
    \centering
    \scriptsize
\begin{tabular}{c|llll}
\toprule
 Discarded Group ($y, a$) &                Method &       Average Acc &      Balanced Acc &   Worst Group Acc \\
\midrule
 \multirow[c]{5}{*}{$(1, 0)$}              
                &                   ERM & $ 81.3 $ $( 0.3)$ & $ 79.0 $ $( 0.0)$ & $ 60.3 $ $( 0.3)$ \\
                &                 G-DRO & $ 82.3 $ $( 0.7)$ & $ 79.0 $ $( 0.0)$ & $ 69.7 $ $( 1.3)$ \\
                &                    LC & $ 81.3 $ $( 0.3)$ & $ 79.0 $ $( 0.0)$ & $ 71.0 $ $( 0.6)$ \\
                &                   sLA & $ 81.3 $ $( 0.9)$ & $ 79.0 $ $( 0.0)$ & $ 70.0 $ $( 1.2)$ \\
                &                   IRM & $ 81.7 $ $( 0.3)$ & $ 78.3 $ $( 0.3)$ & $ 64.0 $ $( 0.6)$ \\
                &                  VREx & $ 82.0 $ $( 0.0)$ & $ 79.0 $ $( 0.0)$ & $ 68.3 $ $( 1.2)$ \\
                &                 Mixup & $ 81.3 $ $( 0.3)$ & $ 78.0 $ $( 0.0)$ & $ 63.3 $ $( 1.2)$ \\
                &                   CRM & $ 84.0 $ $( 0.0)$ & $ 78.0 $ $( 0.0)$ & $ 68.3 $ $( 0.9)$ \\
\midrule
\multirow[c]{5}{*}{$(1, 1)$}              
                &                   ERM & $ 81.7 $ $( 0.3)$ & $ 77.7 $ $( 0.3)$ & $ 60.3 $ $( 1.2)$ \\
                &                 G-DRO & $ 82.0 $ $( 0.6)$ & $ 79.0 $ $( 0.0)$ & $ 67.3 $ $( 0.9)$ \\
                &                    LC & $ 80.7 $ $( 0.3)$ & $ 79.0 $ $( 0.0)$ & $ 69.3 $ $( 0.9)$ \\
                &                   sLA & $ 81.3 $ $( 0.3)$ & $ 79.0 $ $( 0.0)$ & $ 71.0 $ $( 1.2)$ \\
                &                   IRM & $ 81.7 $ $( 0.3)$ & $ 78.0 $ $( 0.0)$ & $ 58.0 $ $( 1.0)$ \\
                &                  VREx & $ 82.3 $ $( 0.7)$ & $ 79.0 $ $( 0.0)$ & $ 67.3 $ $( 1.2)$ \\
                &                 Mixup & $ 82.3 $ $( 0.7)$ & $ 78.0 $ $( 0.0)$ & $ 58.3 $ $( 0.3)$ \\
                &                   CRM & $ 84.0 $ $( 0.0)$ & $ 78.3 $ $( 0.3)$ & $ 70.0 $ $( 0.6)$ \\
\midrule
\multirow[c]{5}{*}{$(1, 2)$}              
                &                   ERM & $ 81.3 $ $( 0.3)$ & $ 78.7 $ $( 0.3)$ & $ 61.3 $ $( 0.7)$ \\
                &                 G-DRO & $ 80.7 $ $( 0.3)$ & $ 79.0 $ $( 0.0)$ & $ 63.7 $ $( 2.4)$ \\
                &                    LC & $ 82.0 $ $( 0.6)$ & $ 79.0 $ $( 0.0)$ & $ 70.0 $ $( 2.1)$ \\
                &                   sLA & $ 82.0 $ $( 0.6)$ & $ 79.0 $ $( 0.0)$ & $ 69.7 $ $( 1.8)$ \\
                &                   IRM & $ 81.3 $ $( 0.3)$ & $ 78.0 $ $( 0.0)$ & $ 56.3 $ $( 0.9)$ \\
                &                  VREx & $ 81.3 $ $( 0.3)$ & $ 79.0 $ $( 0.0)$ & $ 60.0 $ $( 2.6)$ \\
                &                 Mixup & $ 81.7 $ $( 0.3)$ & $ 78.7 $ $( 0.3)$ & $ 62.3 $ $( 1.3)$ \\
                &                   CRM & $ 83.7 $ $( 0.3)$ & $ 78.3 $ $( 0.3)$ & $ 63.7 $ $( 3.2)$ \\

\midrule
\multirow[c]{5}{*}{$(1, 3)$}              
                &                   ERM & $ 82.3 $ $( 0.9)$ & $ 78.0 $ $( 0.0)$ & $ 59.0 $ $( 1.5)$ \\
                &                 G-DRO & $ 81.0 $ $( 0.6)$ & $ 79.0 $ $( 0.0)$ & $ 67.3 $ $( 2.6)$ \\
                &                    LC & $ 82.0 $ $( 0.0)$ & $ 79.0 $ $( 0.0)$ & $ 70.0 $ $( 1.5)$ \\
                &                   sLA & $ 82.7 $ $( 0.9)$ & $ 79.3 $ $( 0.3)$ & $ 69.0 $ $( 1.5)$ \\
                &                   IRM & $ 82.0 $ $( 1.0)$ & $ 78.0 $ $( 0.0)$ & $ 58.7 $ $( 1.5)$ \\
                &                  VREx & $ 81.3 $ $( 0.3)$ & $ 79.0 $ $( 0.0)$ & $ 66.7 $ $( 0.3)$ \\
                &                 Mixup & $ 82.0 $ $( 0.6)$ & $ 78.0 $ $( 0.0)$ & $ 57.3 $ $( 1.2)$ \\
                &                   CRM & $ 83.7 $ $( 0.3)$ & $ 78.0 $ $( 0.0)$ & $ 71.0 $ $( 1.5)$ \\
\midrule
\multirow[c]{5}{*}{$(1, 4)$}               
                &                   ERM & $ 82.3 $ $( 0.3)$ & $ 78.7 $ $( 0.3)$ & $ 58.3 $ $( 1.8)$ \\
                &                 G-DRO & $ 80.3 $ $( 0.3)$ & $ 79.0 $ $( 0.0)$ & $ 68.0 $ $( 0.6)$ \\
                &                    LC & $ 82.0 $ $( 0.0)$ & $ 79.3 $ $( 0.3)$ & $ 70.7 $ $( 0.3)$ \\
                &                   sLA & $ 82.0 $ $( 0.6)$ & $ 79.3 $ $( 0.3)$ & $ 70.0 $ $( 0.6)$ \\
                &                   IRM & $ 81.0 $ $( 0.0)$ & $ 78.0 $ $( 0.0)$ & $ 54.3 $ $( 0.9)$ \\
                &                  VREx & $ 81.0 $ $( 0.6)$ & $ 79.0 $ $( 0.0)$ & $ 59.7 $ $( 0.3)$ \\
                &                 Mixup & $ 82.0 $ $( 0.6)$ & $ 78.0 $ $( 0.0)$ & $ 57.3 $ $( 1.2)$ \\
 \rowcolor{red!8}  \cellcolor{white}                &                   CRM & $ 83.7 $ $( 0.3)$ & $ 78.3 $ $( 0.3)$ & $ 60.0 $ $( 1.5)$ \\
\midrule
\multirow[c]{5}{*}{$(1, 5)$}               
                &                   ERM & $ 82.0 $ $( 0.0)$ & $ 78.7 $ $( 0.3)$ & $ 63.7 $ $( 0.3)$ \\
                &                 G-DRO & $ 81.7 $ $( 0.3)$ & $ 79.0 $ $( 0.0)$ & $ 64.7 $ $( 1.3)$ \\
                &                    LC & $ 81.3 $ $( 0.3)$ & $ 79.3 $ $( 0.3)$ & $ 68.3 $ $( 0.7)$ \\
                &                   sLA & $ 82.0 $ $( 0.6)$ & $ 79.0 $ $( 0.0)$ & $ 71.3 $ $( 0.9)$ \\
                &                   IRM & $ 81.7 $ $( 0.3)$ & $ 78.0 $ $( 0.0)$ & $ 62.0 $ $( 1.2)$ \\
                &                  VREx & $ 81.3 $ $( 0.3)$ & $ 79.0 $ $( 0.0)$ & $ 65.0 $ $( 2.1)$ \\
                &                 Mixup & $ 82.0 $ $( 1.0)$ & $ 78.3 $ $( 0.3)$ & $ 61.7 $ $( 0.3)$ \\
                &                   CRM & $ 83.7 $ $( 0.3)$ & $ 78.3 $ $( 0.3)$ & $ 70.0 $ $( 1.0)$ \\
\midrule
\multirow[c]{5}{*}{$(1, 6)$}               
                &                   ERM & $ 82.0 $ $( 0.6)$ & $ 79.0 $ $( 0.0)$ & $ 65.3 $ $( 2.4)$ \\
                &                 G-DRO & $ 81.0 $ $( 0.0)$ & $ 79.3 $ $( 0.3)$ & $ 66.0 $ $( 1.2)$ \\
                &                    LC & $ 81.7 $ $( 0.7)$ & $ 79.0 $ $( 0.0)$ & $ 69.7 $ $( 2.3)$ \\
                &                   sLA & $ 80.7 $ $( 0.3)$ & $ 79.0 $ $( 0.0)$ & $ 66.7 $ $( 0.3)$ \\
                &                   IRM & $ 81.3 $ $( 0.3)$ & $ 79.0 $ $( 0.0)$ & $ 63.3 $ $( 0.7)$ \\
                &                  VREx & $ 82.0 $ $( 0.0)$ & $ 79.0 $ $( 0.0)$ & $ 65.3 $ $( 1.9)$ \\
                &                 Mixup & $ 81.3 $ $( 0.9)$ & $ 79.0 $ $( 0.0)$ & $ 64.7 $ $( 1.7)$ \\
                &                   CRM & $ 84.0 $ $( 0.0)$ & $ 78.3 $ $( 0.3)$ & $ 70.0 $ $( 1.5)$ \\
\midrule
\multirow[c]{5}{*}{$(1, 7)$}               
                &                   ERM & $ 82.0 $ $( 1.2)$ & $ 78.7 $ $( 0.3)$ & $ 63.3 $ $( 1.8)$ \\
                &                 G-DRO & $ 81.0 $ $( 0.0)$ & $ 79.0 $ $( 0.0)$ & $ 64.3 $ $( 0.3)$ \\
                &                    LC & $ 81.7 $ $( 0.3)$ & $ 79.0 $ $( 0.0)$ & $ 66.0 $ $( 1.5)$ \\
                &                   sLA & $ 82.3 $ $( 0.3)$ & $ 79.0 $ $( 0.0)$ & $ 67.0 $ $( 1.5)$ \\
                &                   IRM & $ 81.3 $ $( 0.3)$ & $ 78.3 $ $( 0.3)$ & $ 61.7 $ $( 0.7)$ \\
                &                  VREx & $ 81.0 $ $( 0.6)$ & $ 79.0 $ $( 0.0)$ & $ 63.3 $ $( 1.5)$ \\
                &                 Mixup & $ 82.3 $ $( 0.3)$ & $ 79.0 $ $( 0.0)$ & $ 66.3 $ $( 1.2)$ \\
                &                   CRM & $ 84.3 $ $( 0.3)$ & $ 77.0 $ $( 0.0)$ & $ 63.0 $ $( 1.2)$ \\
\bottomrule
\end{tabular}
\caption{\small{Results for the various compositional shift scenarios for the \textbf{CivilComments} benchmark (\textbf{Part 2}). 
For each metric, report the mean (standard error) over 3 random seeds on the test dataset. We highlight the worst case compositional shift scenario for each method, i.e, the scenario with the lowest worst group accuracy amongst all the compositional shift scenarios. CRM outperforms all the baselines in the respective worst case compositional shift scenarios.
}}
\label{tab:civilcomments-results-2}
\end{table}

\clearpage

\subsection{CelebA Multiple Suprious Attributes}
\label{app:celeba-multiple-spur-attr}

We augment the CelebA dataset~\citep{liu2015deep} with three more binary spurious attribute ($a_2, a_3, a_4$), described as follows:
\begin{itemize}
    \item $a_2$: Determines whether the person is wearing eyeglasses or not
    \item $a_3$: Determines whether the person is wearing hat or not
    \item $a_4$: Determines whether the person is wearing earring or not
\end{itemize}
Hence, we have a total of $2^5=32$ groups with three binary attributes ($y, a_1, a_2, a_3, a_4$); with $y$ denoting blond hair and $a_1$ denoting the gender, same as in our prior experiments with CelebA. Since CRM models each attribute with a different energy component, we incorporate additional energy layer for new attributes as compared to our prior experiments with two attributes. However, all the baselines would treat the two spurious attributes $(a_1, a_2, a_3, a_4)$ as a single "meta" spurious attribute $a'$ that takes $16$ possible values, and aim to predict $y$. Table~\ref{tab:celeba-multi-attr} presents the results for the multi-attribute CelebA dataset, where we generate multiple benchmarks with compositional shift by dropping one of the $32$ groups from the training \& validation dataset (similar to the setup for our prior experiments). We find that CRM outperforms all the baselines w.r.t worst group accuracy and balanced accuracy, hence, remains superior for the case of multiple attributes as well.

\begin{table}[h]
    \centering

\begin{tabular}{l|ll>{\columncolor{orange!10}}l|>{\color{gray}}l}
\toprule
Method &               Average Acc &              Balanced Acc &           Worst Group Acc  & \thead{ Worst Group Acc \\ \scriptsize{(No Groups Dropped)} }\\
\midrule
ERM & $ 94.8 $ \small{$( 0.1)$} & $ 83.5 $ \small{$( 0.1)$} &  $ 0.9 $ \small{$( 0.5)$} & $ 0.0 $ \small{$( 0.0)$}\\
G-DRO & $ 93.2 $ \small{$( 0.0)$} & $ 89.1 $ \small{$( 0.0)$} & $ 11.2 $ \small{$( 1.5)$} & $ 0.0 $ \small{$( 0.0)$} \\
LC & $ 93.1 $ \small{$( 0.1)$} & $ 91.7 $ \small{$( 0.3)$} & $ 34.3 $ \small{$( 5.9)$} & $ 52.0 $ \small{$( 26.1)$} \\
sLA & $ 93.4 $ \small{$( 0.1)$} & $ 91.7 $ \small{$( 0.0)$} & $ 33.0 $ \small{$( 1.3)$} & $ 26.3 $ \small{$( 26.3)$} \\
IRM & $ 90.2 $ \small{$( 0.6)$} & $ 85.6 $ \small{$( 0.2)$} & $ 26.6 $ \small{$( 0.4)$} &  $ 40.0 $ \small{$( 2.0)$} \\
VREx & $ 90.0 $ \small{$( 0.3)$} & $ 89.1 $ \small{$( 0.1)$} & $ 10.6 $ \small{$( 1.3)$} & $ 25.0 $ \small{$( 25.0)$} \\
Mixup & $ 94.9 $ \small{$( 0.0)$} & $ 87.0 $ \small{$( 0.0)$} &  $ 2.7 $ \small{$( 0.5)$} & $ 14.7 $ \small{$( 14.7)$} \\
\rowcolor{blue!10} CRM & $ 91.0 $ \small{$( 0.1)$} & $ 90.1 $ \small{$( 0.1)$} & $ 63.2 $ \small{$( 2.6)$} &  $ 73.7 $ \small{$( 2.9)$} \\
\bottomrule
\end{tabular}
    \caption{{\textbf{CelebA with Multiple Spurious Attributes.} We compare CRM to baselines on CelebA dataset with $5$ attributes. Similar to the prior setup (Table~\ref{tab:main-results}), we report the Average Accuracy, Group Balanced Accuracy, and Worst Group Accuracy (WGA), averaged as a group is dropped from the training and validation sets. Last column is WGA under the standard subpopulation shift scenario where no groups were dropped. CRM is the best approach w.r.t the worst group accuracy. }}
    \label{tab:celeba-multi-attr}
\end{table}

\subsection{Results for Ablations with CRM}
\label{supp:crm-ablations}

In the implementation of CRM in Algorithm~\ref{app:detailed-algo}, we have the following two choices; 1) we use the extrapolated bias $B^\star$ (\eqref{eqn: qz}); 2) we set $\hat q(z)$ as the uniform distribution, i.e, $\hat q(z=(y,a))= \frac{1}{d_y \times d_a}$. We now conduct ablation studies by varying these components as follows.
\begin{itemize}
    \item \emph{Bias $B^\star$ + Emp Prior:}  We still use the extrapolated bias $B^\star$ but instead of uniform $\hat q(z)$, we use test dataset to obtain the counts of each group, denoted as the empirical prior. Note that this approach assumes the knowledge of test distribution of groups, hence we expect this to improve the average accuracy but not the necessarily the worst group accuracy.
    \item \emph{Bias $\hat{B}$ + Unf Prior:} We still use the uniform prior for $\hat q(z)$ but instead of the extrapolated bias $B^\star$, we use the learned bias $\hat{B}$ (\eqref{eqn:CRM1}). This ablation helps us to understand whether extrapolated bias $B^\star$ are crucial for CRM to generalize to compositional shifts.
    \item \emph{Bias $\hat{B}$+ Emp Prior:} Here we change both aspects of CRM as we use the learned bias $\hat{B}$ and empirical prior from the test dataset for $\hat q(z)$.
\end{itemize}

Table~\ref{tab:full-results-crm-ablation} presents the results of the ablation study. We find that extrapolated bias is crucial for CRM as the worst group accuracy with learned bias is much worse! Further, using empirical prior instead of the uniform prior leads to improvement in average accuracy at the cost of worst group accuracy.

\begin{table}[t]
    \centering

\begin{tabular}{l|llll}
\toprule
   Dataset & Ablation &               Average Acc &              Balanced Acc &           Worst Group Acc \\

\midrule
\rowcolor{blue!10}  \cellcolor{white} 
    &    CRM & $ 87.1 $ \small{$( 0.7)$} & $ 87.8 $ \small{$( 0.1)$} & $ 78.7 $ \small{$( 1.6)$} \\
\multirow[c]{3}{*}{Waterbirds}     &    Bias $B^\star$ + Emp Prior & $ 91.6 $ \small{$( 0.2)$} & $ 87.4 $ \small{$( 0.3)$} & $ 75.2 $ \small{$( 1.3)$} \\
    &    Bias $\hat{B}$ + Unf Prior & $ 81.2 $ \small{$( 0.6)$} & $ 82.7 $ \small{$( 0.2)$} & $ 55.7 $ \small{$( 1.0)$} \\
    &    Bias $\hat{B}$ + Emp Prior & $ 84.3 $ \small{$( 0.6)$} & $ 81.6 $ \small{$( 0.3)$} & $ 51.3 $ \small{$( 1.0)$} \\

\midrule
\rowcolor{blue!10}  \cellcolor{white}   &  CRM & $ 91.1 $ \small{$( 0.2)$} & $ 89.2 $ \small{$( 0.3)$} & $ 81.8 $ \small{$( 1.2)$} \\
\multirow[c]{3}{*}{CelebA}    &  Bias $B^\star$ + Emp Prior & $ 94.3 $ \small{$( 0.1)$} & $ 75.8 $ \small{$( 0.4)$} & $ 34.1 $ \small{$( 1.0)$} \\
     &  Bias $\hat{B}$ + Unf Prior & $ 83.6 $ \small{$( 0.1)$} & $ 84.7 $ \small{$( 0.2)$} & $ 58.9 $ \small{$( 0.4)$} \\
     &  Bias $\hat{B}$ + Emp Prior & $ 90.9 $ \small{$( 0.1)$} & $ 77.2 $ \small{$( 0.3)$} & $ 35.4 $ \small{$( 0.7)$} \\

\midrule
\rowcolor{blue!10}  \cellcolor{white}  &    CRM & $ 87.6 $ \small{$( 0.2)$} & $ 84.7 $ \small{$( 0.1)$} & $ 73.4 $ \small{$( 0.4)$} \\
\multirow[c]{3}{*}{MetaShift} &  Bias $B^\star$ + Emp Prior & $ 89.2 $ \small{$( 0.2)$} & $ 84.0 $ \small{$( 0.4)$} & $ 65.1 $ \small{$( 1.4)$} \\
 &  Bias $\hat{B}$ + Unf Prior & $ 87.2 $ \small{$( 0.3)$} & $ 82.9 $ \small{$( 0.4)$} & $ 58.7 $ \small{$( 0.6)$} \\
 &  Bias $\hat{B}$ + Emp Prior & $ 88.1 $ \small{$( 0.1)$} & $ 82.1 $ \small{$( 0.1)$} & $ 56.1 $ \small{$( 0.4)$} \\

\midrule
\rowcolor{blue!10}  \cellcolor{white}   &    CRM & $ 74.3 $ \small{$( 0.3)$} & $ 76.1 $ \small{$( 0.3)$} & $ 58.7 $ \small{$( 1.4)$} \\
\multirow[c]{3}{*}{MultiNLI} &  Bias $B^\star$ + Emp Prior & $ 74.7 $ \small{$( 0.3)$} & $ 72.3 $ \small{$( 0.4)$} & $ 41.4 $ \small{$( 1.5)$}\\
         &  Bias $\hat{B}$ + Unf Prior &         $ 72.5 $ \small{$( 0.6)$} & $ 74.0 $ \small{$( 0.4)$} & $ 30.4 $ \small{$( 2.6)$}
         \\
         &  Bias $\hat{B}$ + Emp Prior & $ 73.2 $ \small{$( 0.5)$} & $ 70.8 $ \small{$( 0.1)$} & $ 22.2 $ \small{$( 0.9)$}  \\

\midrule
\rowcolor{blue!10}  \cellcolor{white}   &    CRM &  $ 83.7 $ \small{$( 0.1)$} & $ 78.4 $ \small{$( 0.0)$} & $ 67.9 $ \small{$( 0.5)$} \\
 \multirow[c]{3}{*}{CivilComments}   &  Bias $B^\star$ + Emp Prior & $ 87.0 $ \small{$( 0.1)$} & $ 74.0 $ \small{$( 0.2)$} & $ 48.1 $ \small{$( 0.6)$}  \\
 &  Bias $\hat{B}$ + Unf Prior &  $ 76.9 $ \small{$( 0.3)$} & $ 77.8 $ \small{$( 0.1)$} & $ 52.4 $ \small{$( 0.7)$} \\
 &  Bias $\hat{B}$ + Emp Prior &  $ 83.5 $ \small{$( 0.2)$} & $ 77.9 $ \small{$( 0.1)$} & $ 62.3 $ \small{$( 0.8)$} \\

 \midrule
 \rowcolor{blue!10}  \cellcolor{white}    &    CRM & $ 84.7 $ \small{$( 0.3)$} & $ 84.7 $ \small{$( 0.3)$} & $ 40.3 $ \small{$( 4.3)$} \\
  \multirow[c]{3}{*}{NICO++}    &  Bias $B^\star$ + Emp Prior & $ 85.0 $ \small{$( 0.0)$} & $ 85.0 $ \small{$( 0.0)$} & $ 41.0 $ \small{$( 4.9)$} \\
     &  Bias $\hat{B}$ + Unf Prior & $ 85.0 $ \small{$( 0.0)$} & $ 85.0 $ \small{$( 0.0)$} & $ 31.0 $ \small{$( 1.0)$} \\
     &  Bias $\hat{B}$ + Emp Prior & $ 85.0 $ \small{$( 0.0)$} & $ 85.0 $ \small{$( 0.0)$} & $ 27.7 $ \small{$( 3.9)$} \\

\bottomrule
\end{tabular}
    \caption{ \textbf{Ablation study with CRM.} We consider the average performance over the different compositional shift scenarios for each benchmark, and report the mean (standard error) over 3 random seeds on the test dataset. CRM corresponds to the usual implementation with extrapolated bias $B^\star$ and uniform prior for $\hat q(z)$. CRM obtains better worst group accuracy than all the ablations, highlighting the importance of both extrapolated bias and uniform prior! Extrapolated bias is critical for generalization to compositional shifts as the performance with learned bias is much worse.} 
    \label{tab:full-results-crm-ablation}
\end{table}

\subsection{Results for the Original Benchmarks}
\label{app:standard-subpopshift-results}

We present results for the original benchmarks $(\gD_{\mathsf{train}}, \gD_{\mathsf{val}}, \gD_{\mathsf{train}})$ in Table~\ref{tab:standard-subpopshift-results}, which corresponds to the standard subpopulation shift case for these benchmarks. For Waterbirds, CelebA, MetaShift, and MultiNLI, subpopulation shift implies all the groups $z=(y,a)$ are present in both the train and test dataset ($\mathcal{Z}^{\mathsf{train}} = \mathcal{Z}^{\mathsf{test}} = \S$), however, the groups sizes change from train to test, inducing a spurious correaltion between class labels $y$ and attributes $a$. For the NICO++ dataset, we have a total of 360 groups in the test dataset but only 337 of them are present in the train dataset. But still this is not a compositional shift as the validation dataset contains all the 360 groups. We find that CRM is still competitive to the baselines for the standard subpopulation shift scenario of each benchmark!

\begin{table}[t]
    \centering
\begin{tabular}{l|llll}
\toprule
   Dataset & Method &               Average Acc &              Balanced Acc &           Worst Group Acc \\
\midrule
\multirow[c]{8}{*}{Waterbirds} &    ERM & $ 87.3 $ \small{$( 0.3)$} & $ 84.0 $ \small{$( 0.0)$} & $ 62.3 $ \small{$( 1.2)$} \\
 &  G-DRO & $ 91.7 $ \small{$( 0.3)$} & $ 91.0 $ \small{$( 0.0)$} & $ 87.3 $ \small{$( 0.3)$} \\
 &     LC & $ 92.0 $ \small{$( 0.0)$} & $ 91.0 $ \small{$( 0.0)$} & $ 88.7 $ \small{$( 0.3)$} \\
 &    sLA & $ 92.3 $ \small{$( 0.3)$} & $ 91.0 $ \small{$( 0.0)$} & $ 89.7 $ \small{$( 0.3)$} \\
 &    IRM & $ 87.3 $ \small{$( 0.3)$} & $ 86.0 $ \small{$( 0.0)$} & $ 72.3 $ \small{$( 1.2)$} \\
 &    VREx & $ 92.0 $ \small{$( 0.0)$} & $ 90.7 $ \small{$( 0.3)$} & $ 84.3 $ \small{$( 0.7)$} \\
 &    Mixup & $ 85.0 $ \small{$( 0.0)$} & $ 86.0 $ \small{$( 0.0)$} & $ 69.7 $ \small{$( 0.9)$} \\
\rowcolor{blue!10}  \cellcolor{white} &    CRM & $ 91.3 $ \small{$( 0.9)$} & $ 91.0 $ \small{$( 0.0)$} & $ 86.0 $ \small{$( 0.6)$} \\

\midrule
\multirow[c]{8}{*}{CelebA} &    ERM & $ 95.7 $ \small{$( 0.3)$} & $ 84.0 $ \small{$( 0.0)$} & $ 52.0 $ \small{$( 1.0)$} \\
     &  G-DRO & $ 92.0 $ \small{$( 0.6)$} & $ 93.0 $ \small{$( 0.0)$} & $ 91.0 $ \small{$( 0.6)$} \\
     &     LC & $ 92.0 $ \small{$( 0.6)$} & $ 92.0 $ \small{$( 0.0)$} & $ 90.0 $ \small{$( 0.6)$} \\
     &    sLA & $ 92.3 $ \small{$( 0.3)$} & $ 91.7 $ \small{$( 0.3)$} & $ 86.7 $ \small{$( 1.9)$} \\
    &     IRM & $ 87.0 $ \small{$( 2.5)$} & $ 85.3 $ \small{$( 1.2)$} & $ 67.7 $ \small{$( 3.5)$} \\
    &    VREx & $ 92.0 $ \small{$( 0.0)$} & $ 92.0 $ \small{$( 0.0)$} & $ 89.0 $ \small{$( 0.6)$} \\
    &   Mixup & $ 95.0 $ \small{$( 0.0)$} & $ 86.7 $ \small{$( 0.3)$} & $ 62.0 $ \small{$( 1.0)$} \\
     
\rowcolor{blue!10}  \cellcolor{white}     &    CRM & $ 93.0 $ \small{$( 0.0)$} & $ 92.0 $ \small{$( 0.0)$} & $ 89.0 $ \small{$( 0.6)$} \\

\midrule
 \multirow[c]{8}{*}{MetaShift} &    ERM & $ 90.0 $ \small{$( 0.0)$} & $ 84.0 $ \small{$( 0.0)$} & $ 63.0 $ \small{$( 0.0)$} \\
  &  G-DRO & $ 90.3 $ \small{$( 0.3)$} & $ 88.3 $ \small{$( 0.3)$} & $ 80.7 $ \small{$( 1.3)$} \\
  &     LC & $ 89.7 $ \small{$( 0.3)$} & $ 87.7 $ \small{$( 0.3)$} & $ 80.0 $ \small{$( 1.2)$} \\
  &    sLA & $ 90.0 $ \small{$( 0.6)$} & $ 87.7 $ \small{$( 0.3)$} & $ 80.0 $ \small{$( 1.2)$} \\
    &    IRM & $ 90.7 $ \small{$( 0.3)$} & $ 85.3 $ \small{$( 0.9)$} & $ 69.3 $ \small{$( 2.4)$} \\
  &   VREx & $ 90.0 $ \small{$( 0.0)$} & $ 86.7 $ \small{$( 0.3)$} & $ 75.3 $ \small{$( 2.2)$} \\
  &  Mixup & $ 90.7 $ \small{$( 0.3)$} & $ 85.3 $ \small{$( 0.7)$} & $ 68.3 $ \small{$( 2.7)$} \\
\rowcolor{blue!10}  \cellcolor{white}  &    CRM & $ 88.3 $ \small{$( 0.7)$} & $ 85.7 $ \small{$( 0.3)$} & $ 74.7 $ \small{$( 1.5)$} \\

\midrule
\multirow[c]{8}{*}{MultiNLI} &    ERM & $ 81.7 $ \small{$( 0.3)$} & $ 80.7 $ \small{$( 0.3)$} & $ 68.0 $ \small{$( 1.7)$} \\
   &  G-DRO & $ 80.7 $ \small{$( 0.3)$} & $ 78.0 $ \small{$( 0.0)$} & $ 57.0 $ \small{$( 2.3)$} \\
   &     LC & $ 82.0 $ \small{$( 0.0)$} & $ 82.0 $ \small{$( 0.0)$} & $ 74.3 $ \small{$( 1.2)$} \\
   &    sLA & $ 82.0 $ \small{$( 0.0)$} & $ 82.0 $ \small{$( 0.0)$} & $ 71.7 $ \small{$( 0.3)$} \\
   &    IRM & $ 75.7 $ \small{$( 0.3)$} & $ 74.3 $ \small{$( 0.3)$} & $ 54.3 $ \small{$( 2.4)$} \\
   &   VREx & $ 79.0 $ \small{$( 0.0)$} & $ 79.0 $ \small{$( 0.0)$} & $ 69.7 $ \small{$( 0.3)$} \\
   &  Mixup & $ 81.3 $ \small{$( 0.3)$} & $ 80.0 $ \small{$( 0.0)$} & $ 63.7 $ \small{$( 2.9)$} \\   
\rowcolor{blue!10}  \cellcolor{white}   &    CRM & $ 81.7 $ \small{$( 0.3)$} & $ 81.7 $ \small{$( 0.3)$} & $ 74.7 $ \small{$( 1.3)$} \\
     
\midrule
\multirow[c]{8}{*}{CivilComments}  &    ERM & $ 80.3 $ \small{$( 0.3)$} & $ 79.0 $ \small{$( 0.0)$} & $ 61.0 $ \small{$( 2.5)$} \\
 &  G-DRO & $ 79.7 $ \small{$( 0.3)$} & $ 79.0 $ \small{$( 0.0)$} & $ 64.7 $ \small{$( 1.5)$} \\
 &     LC & $ 80.7 $ \small{$( 0.3)$} & $ 79.7 $ \small{$( 0.3)$} & $ 67.3 $ \small{$( 0.3)$} \\
 &    sLA & $ 80.3 $ \small{$( 0.3)$} & $ 79.0 $ \small{$( 0.0)$} & $ 66.3 $ \small{$( 0.9)$} \\
 &    IRM & $ 80.3 $ \small{$( 0.7)$} & $ 79.0 $ \small{$( 0.0)$} & $ 60.3 $ \small{$( 1.5)$} \\
 &   VREx & $ 80.3 $ \small{$( 0.7)$} & $ 79.0 $ \small{$( 0.0)$} & $ 63.3 $ \small{$( 1.5)$} \\
 &  Mixup & $ 80.0 $ \small{$( 0.6)$} & $ 79.0 $ \small{$( 0.0)$} & $ 61.3 $ \small{$( 1.5)$} \\
 \rowcolor{blue!10}  \cellcolor{white} &    CRM & $ 83.3 $ \small{$( 0.3)$} & $ 78.0 $ \small{$( 0.0)$} & $ 70.0 $ \small{$( 0.6)$} \\
 
\midrule
\multirow[c]{5}{*}{NICO++} &    ERM & $ 85.3 $ \small{$( 0.3)$} & $ 85.0 $ \small{$( 0.0)$} & $ 35.3 $ \small{$( 2.3)$} \\
     &  G-DRO & $ 83.7 $ \small{$( 0.3)$} & $ 83.3 $ \small{$( 0.3)$} & $ 33.7 $ \small{$( 1.2)$} \\
     &     LC & $ 85.0 $ \small{$( 0.0)$} & $ 85.0 $ \small{$( 0.0)$} & $ 35.3 $ \small{$( 2.3)$} \\
     &    sLA & $ 85.0 $ \small{$( 0.0)$} & $ 85.0 $ \small{$( 0.0)$} & $ 35.3 $ \small{$( 2.3)$} \\
     &  IRM & $ 63.7 $ \small{$( 0.3)$} & $ 62.7 $ \small{$( 0.3)$} &  $ 0.0 $ \small{$( 0.0)$} \\
     &  VREx & $ 86.0 $ \small{$( 0.0)$} & $ 86.0 $ \small{$( 0.0)$} & $ 38.0 $ \small{$( 5.0)$} \\
     & Mixup & $ 85.0 $ \small{$( 0.0)$} & $ 84.7 $ \small{$( 0.3)$} & $ 33.0 $ \small{$( 0.0)$} \\
\rowcolor{blue!10}  \cellcolor{white}     &    CRM & $ 85.0 $ \small{$( 0.0)$} & $ 84.7 $ \small{$( 0.3)$} & $ 39.0 $ \small{$( 3.2)$} \\

\bottomrule
\end{tabular}    
\caption{Results for the standard subpopulation shift case for each benchmark. Here we do not transform the datasets for compositional shifts, hence all the groups are present in both the train and the test dataset (except the NICO++ benchmark). CRM is still competitive with the baselines for this scenario where no groups were discarded additionally.}
\label{tab:standard-subpopshift-results}
\end{table}

\clearpage

\subsection{CRM's Analysis with Varying Group Size}
\label{app:group-complexity-analysis}

\begin{figure}[h]
\centering
\begin{subfigure}[t]{0.32\textwidth}
    \includegraphics[scale=0.27]{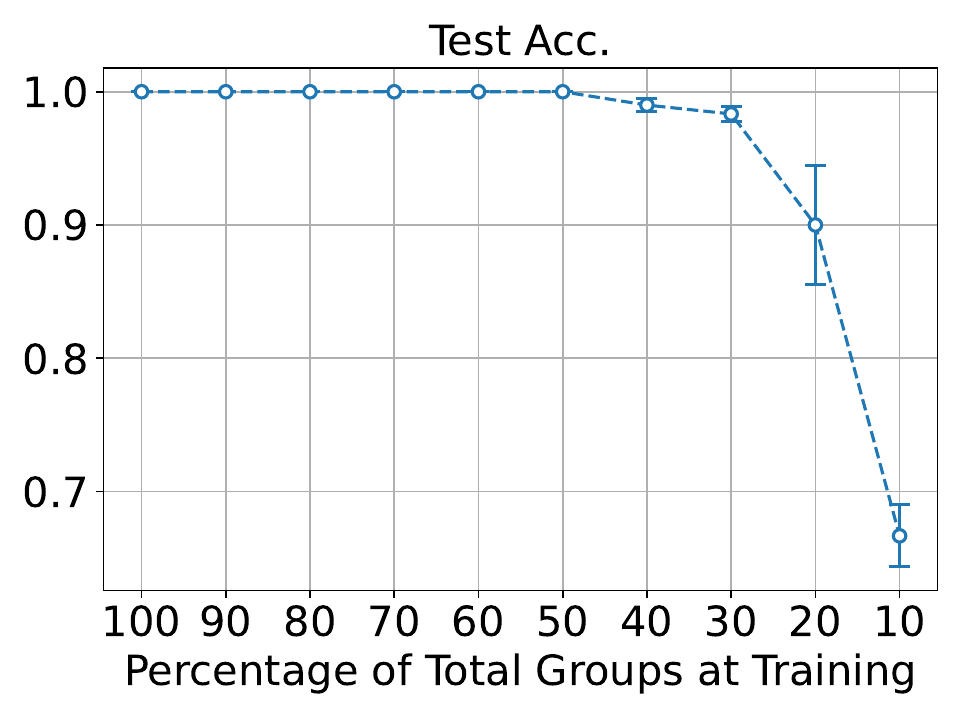}
    \caption{$m=2\; , d=10$}
\end{subfigure}
\begin{subfigure}[t]{0.32\textwidth}
    \includegraphics[scale=0.27]{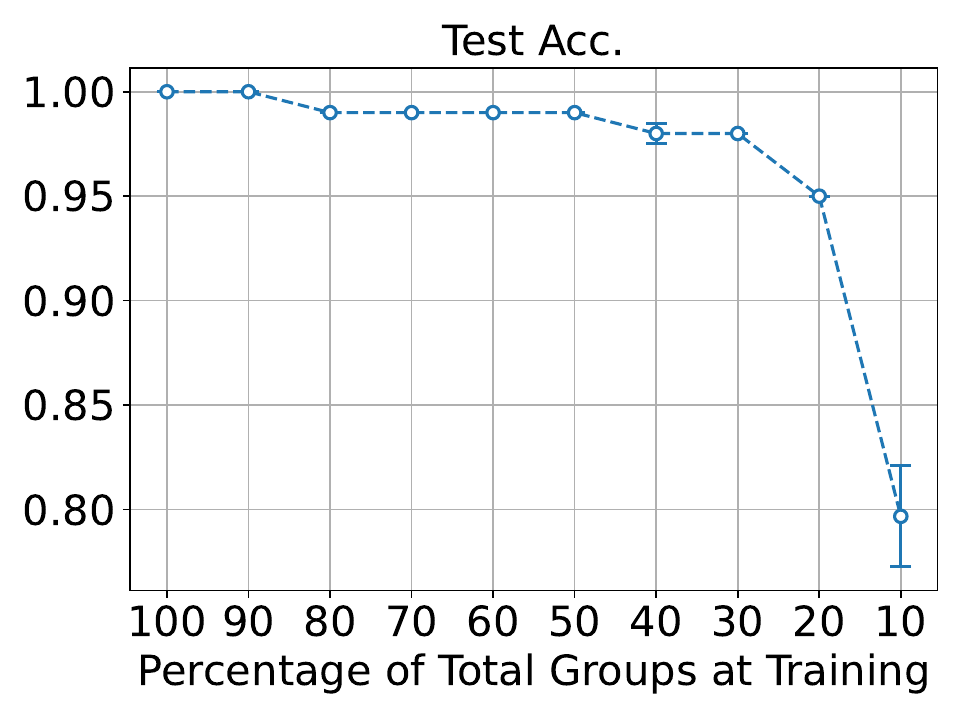}
    \caption{$m=2\; ,d=20$}
\end{subfigure}
\begin{subfigure}[t]{0.32\textwidth}
    \includegraphics[scale=0.27]{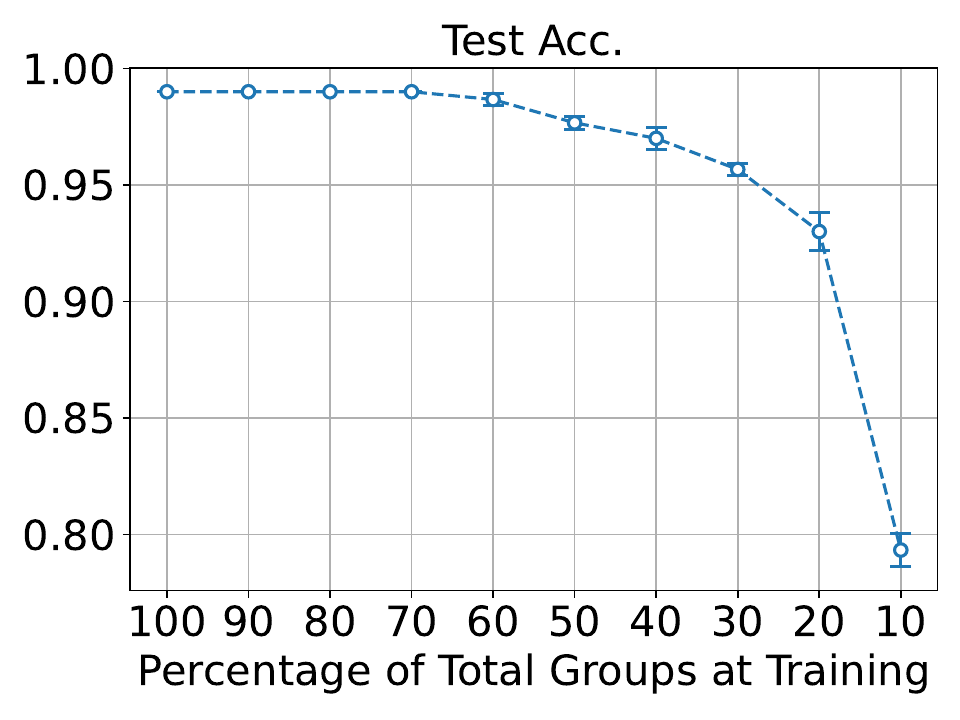}
    \caption{$m=2\; ,d=30$}
\end{subfigure}
\\
\begin{subfigure}[t]{0.32\textwidth}
    \includegraphics[scale=0.27]{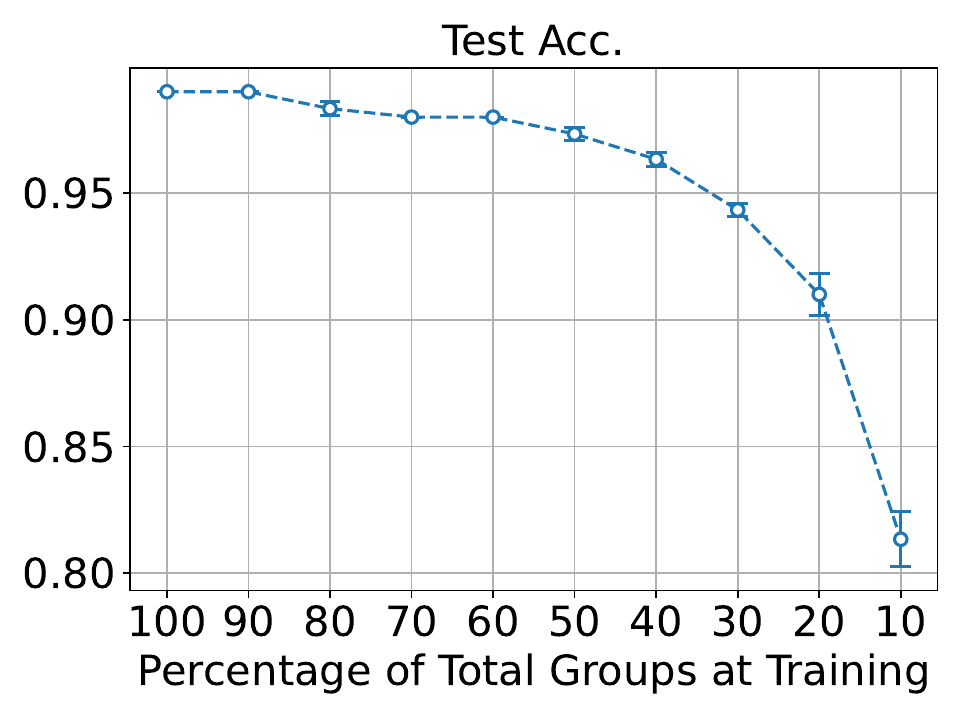}
    \caption{$m=2\; ,d=40$}
\end{subfigure}
\begin{subfigure}[t]{0.32\textwidth}
    \includegraphics[scale=0.27]{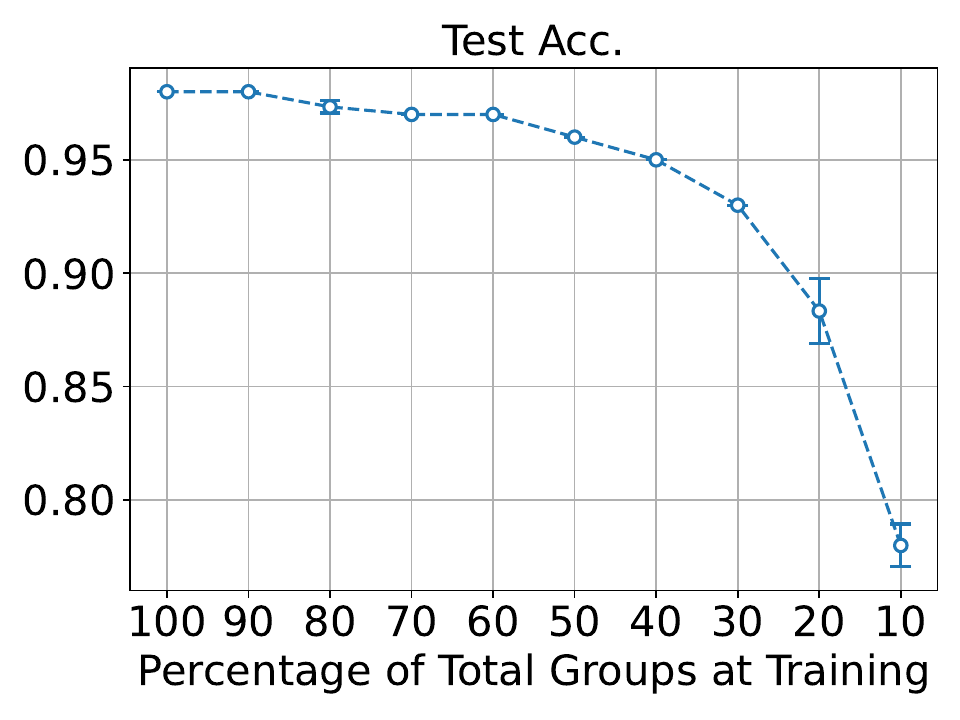}
    \caption{$m=2\; ,d=50$}
\end{subfigure}
    \caption{\textbf{Varying Group Size Analysis} ($m=2$ attributes). We analyze the rate of growth of total groups required to achieve cartesian-product extrapolation. For each scenario, we evaluate CRM's generalization capabilities as we discard more groups from the training dataset. X-axis denotes the percentage of total groups available for training, and y-axis denotes the test average accuracy (mean \& standard error over 3 random seeds) obtained by CRM. We find that observing at least $20\%$ of total train groups is sufficient for good generalization. }
    \label{fig:sample-complexity-plot}    
\end{figure}

\paragraph{Setup.} We conduct experiments to understand the rate of growth of total groups required in order to achieve Cartesian-Product extrapolation, as we vary the total number of attributes ($m$) and the total number of categories ($d$) for each attribute. Given attributes $z= (z_1, z_2, \cdots, z_{m})$, we sample data sample data from the following (additive) energy function.
$$E(x, z)= \sum_{i=1}^{m} || x - \mu(z_i) ||^2 $$
where $x, \mu(z_i) \in \sR^{n}$ for all $i \in \{1, \cdots, m\}$. Note that the energy function can be rewritten as follows: $$E(x, z)= \frac{1}{2} (x - \mu(z) )^T \Sigma^{-1} (x - \mu(z) ) + C(z_1, z_2)$$ with $\mu(z)= \frac1{m} \sum_{i=1}^{m} \mu(z_i)$ and $\Sigma^{-1}= 2mI_{n}$. Hence, the resulting distribution is essentially a multi-variate gaussian distribution $p(x|z)= \dfrac{1}{\mathbb{Z}(z)}\exp \big(- E(x,z) \big) = N\big(x|\mu(z), \Sigma \big)$. 

To generate data from a particular configuration $(d, m, n)$ we first sample $d*m$ orthogonal vectors to get mean vectors for the different realizations of each attribute,  i.e, $\{ \mu(z_i=k)  \; | \; i \in [1,m]\; \& \; k \in [1,d] \}$. Then we sample $x$ from the resulting normal distribution $x \sim N\big(\mu(z), \Sigma \big)$ to create a dataset with uniform support over all the $d^m$ groups. We fix the data dimension as $n=100$ and have the following two setups.
\begin{itemize}
    \item $m=2$ Attribute Case. We fix $m=2$ and vary $d$ in the following range, $[10, 20, 30, 40, 50]$. This results in groups with sizes $[100, 400, 900, 1600, 2500]$.
    \item $m>2$ Attribute Case. We fix $d=2$ and vary $m$ in the following range, $[7, 8, 9]$. This results in groups with sizes $[128, 256, 512]$.
\end{itemize}
For both these setups, we analyze how the performance of CRM degrades as we discard more groups from the training dataset. Note that the test dataset contains samples from all the groups and there are no group imbalances. Hence, average accuracy in itself is a good indicator of generalization performance.

\begin{figure}[h]
\centering
\begin{subfigure}[t]{0.32\textwidth}
    \includegraphics[scale=0.27]{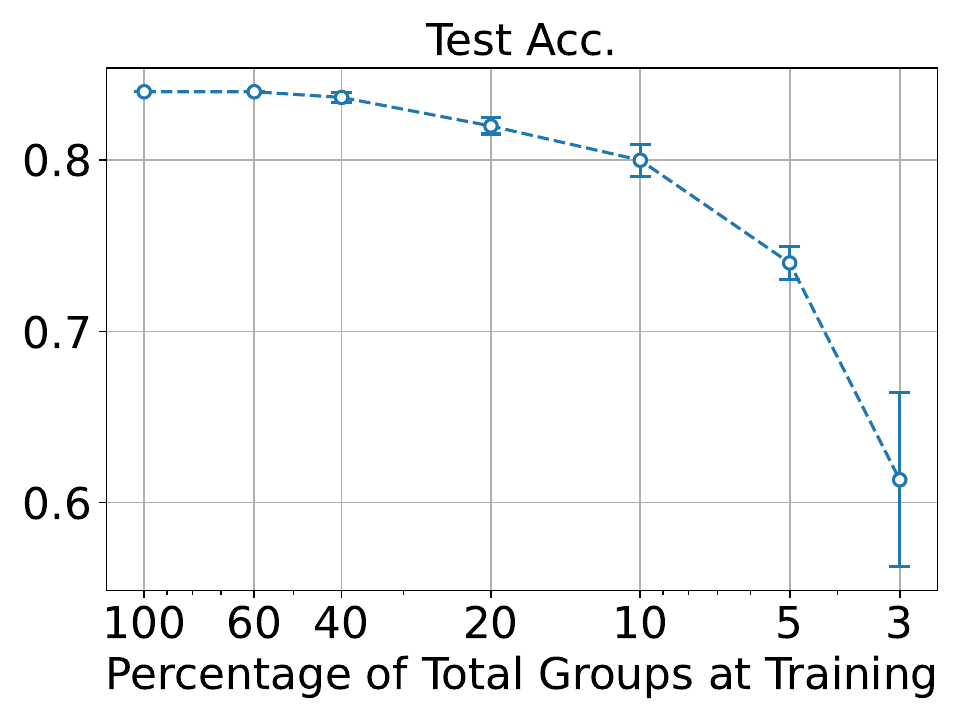}
    \caption{$m=7\; , d=2$}
\end{subfigure}
\begin{subfigure}[t]{0.32\textwidth}
    \includegraphics[scale=0.27]{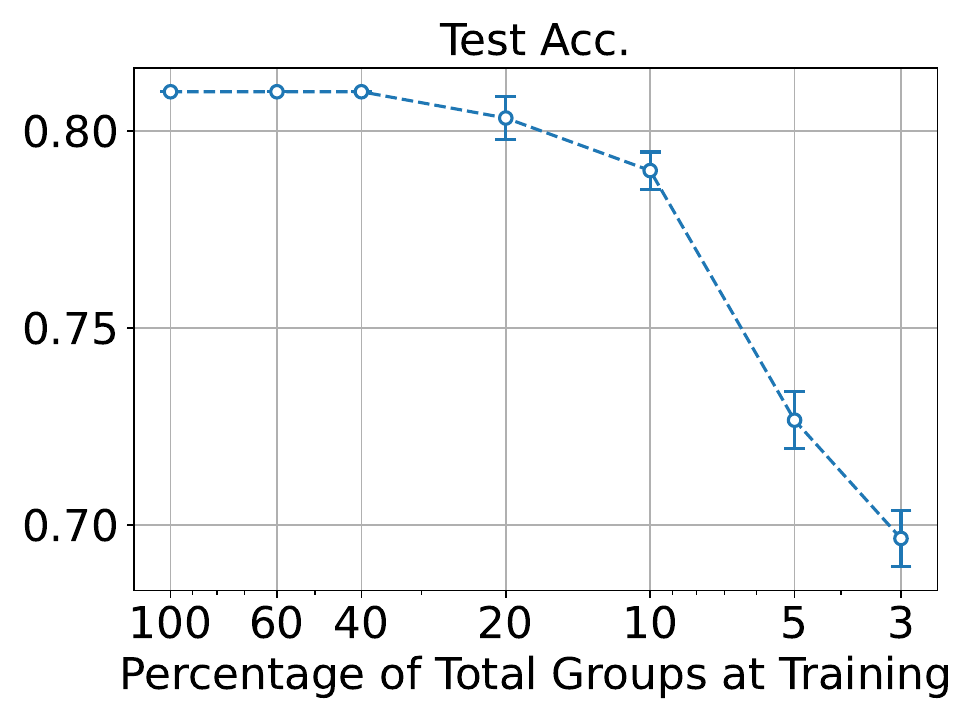}
    \caption{$m=8\; ,d=2$}
\end{subfigure}
\begin{subfigure}[t]{0.32\textwidth}
    \includegraphics[scale=0.27]{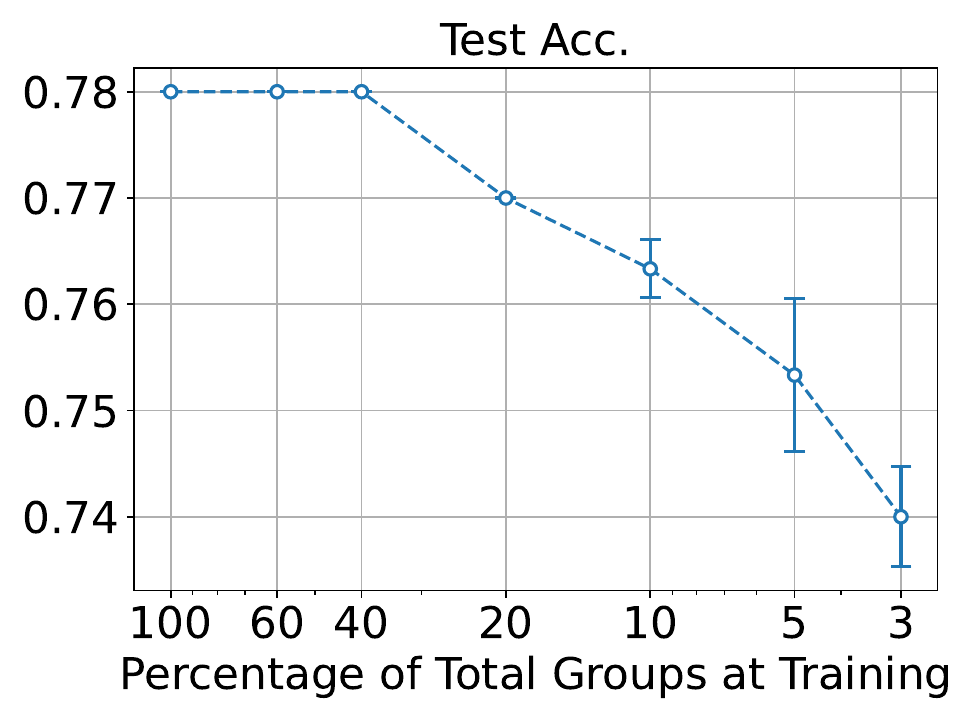}
    \caption{$m=9\; ,d=2$}
\end{subfigure}
    \caption{\textbf{Varying Group Size Analysis} ($m>2$ attributes). We analyze the rate of growth of total groups required to achieve cartesian-product extrapolation. For each scenario, we evaluate CRM's generalization capabilities as we discard more groups from the training dataset. X-axis (log scale) denotes the percentage of total groups available for training, and y-axis denotes the test average accuracy (mean \& standard error over 3 random seeds) obtained by CRM. We find that observing at least $10\%$ of total train groups is sufficient for good generalization. }
    \label{fig:sample-complexity-plot-general-m}    
\end{figure}

\paragraph{Results.} Figure~\ref{fig:sample-complexity-plot} and~\Cref{fig:sample-complexity-plot-general-m} presents the results for the $m=2$ and $m>2$ attribute case respectively. For $m=2$ attribute case, we find that CRM trained with $20\%$ of the total groups ($0.2d^2$) still shows good generalization for predicting $z_1$ ($\sim 90\%$ test accuracy), and the drop in test accuracy as compared to the oracle case of no groups dropped is within $10\%$. Similarly, for $m>2$ attribute case, we find that CRM trained with $10\%$ of the total groups ($0.1 \times 2^{m}$) still shows good generalization for predicting $z_1$, and the drop in test accuracy as compared to the oracle case is within $10\%$.

\end{document}